\documentclass[twoside]{article}

%
\usepackage[accepted]{aistats2021}
%




\usepackage[english]{babel}
\usepackage[utf8]{inputenc} 
\usepackage[T1]{fontenc}    
\usepackage{hyperref}       
\usepackage{lmodern}
\usepackage{url}            
\usepackage{booktabs}       
\usepackage{amsfonts}       
\usepackage{nicefrac}       
\usepackage{microtype}      
\usepackage{enumitem}
\usepackage{algorithm}
\usepackage{algorithmic}
\usepackage{graphicx}
\usepackage[usenames,dvipsnames]{xcolor} 
\usepackage[titletoc,title]{appendix}
\usepackage{amsmath,amsthm,amssymb,mathtools}
\usepackage{braket}
\usepackage{dsfont}
\usepackage{natbib}
\usepackage[final,inline,nomargin]{fixme}
\usepackage{thmtools,thm-restate}
\usepackage{subcaption}
\usepackage{tikz}
\usetikzlibrary{positioning}

\newtheorem{theorem}{Theorem}[section]
\newtheorem*{theorem*}{Theorem}

\newtheorem*{proposition*}{Proposition}
\newtheorem{lemma}[theorem]{Lemma}
\newtheorem*{lemma*}{Lemma}

\newtheorem*{conjecture*}{Conjecture}

\newtheorem*{fact*}{Fact}

\newtheorem*{hypothesis*}{Hypothesis}

\newtheorem{claim}[theorem]{Claim}
\newtheorem*{claim*}{Claim}

\theoremstyle{definition}
\newtheorem{definition}[theorem]{Definition}

\theoremstyle{remark}

\newtheorem*{remark*}{Remark}

\newtheorem*{observation*}{Observation}

\newcommand{\indicator}{\mathds{1}}

\newcommand{\R}{\mathbb{R}}

\newcommand{\abs}[1]{|#1|}

\newcommand{\calE}{\mathcal{E}}

\newcommand{\calL}{\mathcal{L}}

\newcommand{\calR}{\mathcal{R}}





\newcommand{\norm}[1]{\lVert #1 \rVert}







\newcommand{\Esymb}{\mathbb{E}}

\DeclareMathOperator*{\E}{\Esymb}

\DeclareMathOperator*{\argmax}{\text{argmax}}
\DeclareMathOperator*{\argmin}{\text{argmin}}
\renewcommand{\Pr}{\ProbOp}

\newcommand{\prob}[1]{\Pr\set{#1}}


\newcommand{\tw}{\tilde{w}}

\newcommand{\tc}{\tilde{c}}

\newcommand{\hc}{\hat{c}}
\newcommand{\hw}{\hat{w}}
\newcommand{\hx}{\hat{x}}
\newcommand{\bw}{\bar{w}}
\newcommand{\bc}{\bar{c}}
\newcommand{\bx}{\bar{x}}
\newcommand{\obsins}{(G,\hat{c},\hat{w})}
\newcommand{\stabins}{(G,\bar{c},\bar{w})}

\renewcommand{\epsilon}{\varepsilon}




\def\rbr#1{\left(#1\right)}   



\renewcommand{\cite}[1]{\citep{#1}}

\ifdefined\nohyperref\else\ifdefined\hypersetup
  \definecolor{mydarkblue}{rgb}{0,0.08,0.45}
  \hypersetup{ %
    colorlinks=true,
    linkcolor=mydarkblue,
    citecolor=mydarkblue,
    filecolor=mydarkblue,
    urlcolor=mydarkblue,
    pdfview=FitH}
\fi\fi


\newif\ifnotes\notestrue

\ifnotes
\usepackage{color}
\definecolor{mygrey}{gray}{0.50}
\newcommand{\notename}[2]{{\textcolor{red}{\footnotesize{\bf (#1:} {#2}{\bf ) }}}}

\else

\newcommand{\notename}[2]{{}}

\fi

\usepackage{tikz}
\usepackage{multirow}




\usepackage{amsmath}
\usepackage{graphicx}
\newcommand{\maximize}{\mathop{\mathrm{maximize}{}}}
\newcommand{\minimize}{\mathop{\mathrm{minimize}{}}}
\newcommand{\mindot}{\mathop{\mathrm{min.}{}}}
\renewcommand{\Pr}{\mathbb{P}}
\renewcommand{\dot}[2]{\langle #1,#2\rangle}

\long\def\aistatssupptitle#1{
  \hsize\textwidth
  \linewidth\hsize \toptitlebar {\centering
  {\Large\bfseries #1 \par}}
 \bottomtitlebar
}

\begin{document}
\twocolumn[
\aistatstitle{Beyond Perturbation Stability: LP Recovery Guarantees for MAP Inference on Noisy Stable Instances}
\aistatsauthor{Hunter Lang${}^*$ \And Aravind Reddy${}^*$ \And David Sontag \And Aravindan Vijayaraghavan }
\aistatsaddress{ MIT \\ {\tt hjl@mit.edu}\And Northwestern University \\ {\tt arareddy@u.northwestern.edu} \And MIT\\ {\tt dsontag@mit.edu} \And Northwestern University\\ {\tt aravindv@northwestern.edu}} ]

\begin{abstract}
  Several works have shown that \emph{perturbation stable} instances of the MAP inference problem in Potts models can be solved exactly using a natural linear programming (LP) relaxation.
  However, most of these works give few (or no) guarantees for the LP solutions on instances that do not satisfy the relatively strict perturbation stability definitions. 
  In this work, we go beyond these stability results by showing that the LP approximately recovers the MAP solution of a stable instance even after the instance is corrupted by noise.
  This ``noisy stable'' model realistically fits with practical MAP inference problems: we design an algorithm for finding ``close'' stable instances, and show that several real-world instances from computer vision have nearby instances that are perturbation stable. These results suggest a new theoretical explanation for the excellent performance of this LP relaxation in practice.
\end{abstract}
\section{Introduction}
In this work, we study the MAP inference problem in the \emph{ferromagnetic Potts model}, which is also known as uniform metric labeling \citep{KleinbergTardos02}. Given a graph $G=(V,E)$, this problem is:
\begin{align*}
  \minimize_{x: V\to [k]}\sum_{u\in V}c(u, x(u)) + \smashoperator{\sum_{(u,v) \in E}}w(u,v)\indicator[x(u) \ne x(v)].
\end{align*}
Here we are optimizing over \emph{labelings} $x: V\to [k]$ where $ [k] = \{1,2,\dots,k\}$. 
The objective is comprised of ``node costs'' $c: V\times [k] \to \mathbb{R}$, and ``edge weights'' $w: E \to \mathbb{R}_{> 0}$; a labeling $x$ pays the cost $c(u,i)$ when it labels node $u$ with label $i$ and pays $w(u,v)$ on edge $(u,v)$ when it labels $u$ and $v$ differently.
This problem is NP-hard for variable $k \ge 3$ \citep{KleinbergTardos02} even when the graph $G$ is planar \citep{dahlhaus1992complexity}.
However, there are several efficient and empirically successful approximation algorithms for the MAP inference problem---such as TRW \citep{wainwright2005map} and MPLP \citep{globerson2008fixing}---that are related in some way to the \emph{local LP relaxation}, which is also sometimes called the {\em pairwise LP} \cite{wainwright2008graphical, chekuri2001approximation}.  
This LP relaxation returns an approximate MAP solution for most problem instances. However, when the parameters of these models are learned so as to enable good structured prediction, often the LP relaxation exactly or almost exactly recovers the MAP solution  \citep{meshi2019train}.
The connection between the LP relaxation and commonly used approximate MAP inference algorithms then leads to the following compelling question, which is of great practical relevance for understanding the ``tightness'' of the LP solution (informally, how close the LP solution is to the MAP solution). 

{\em Can we explain the exceptional performance of the local LP relaxation in recovering the MAP solution in practice?}

\begin{figure*}[t]
\centering
\includegraphics[width=0.8\textwidth]{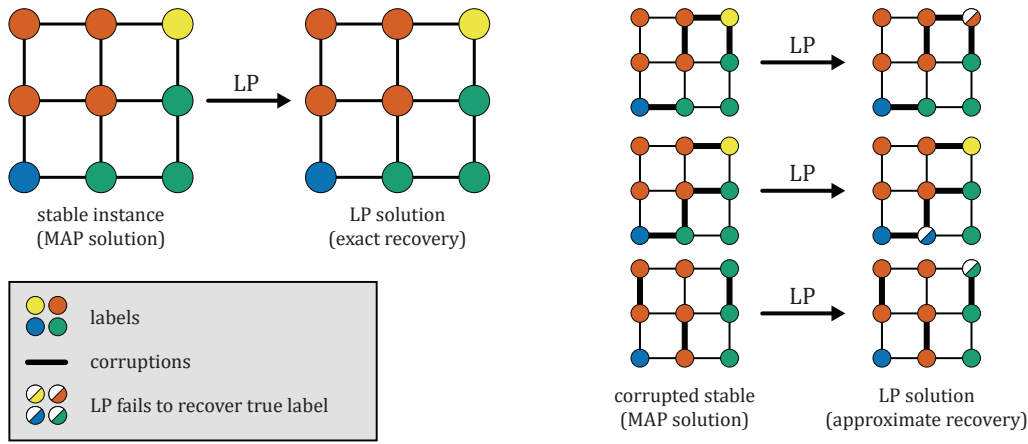}
\caption{Left: prior work \citep{LanSonVij18} showed that a \emph{stable instance} can be exactly solved efficiently. Colors indicate the label of each vertex in the MAP solution $x^*$. On stable instances, solving the LP relaxation (represented by the arrow) recovers the MAP solution. However, real-world instances are \emph{not} suitably stable for this result to apply in practice \citep{LanSonVij19}. Right: in this work, we show that solving the LP relaxation on a (slightly) \emph{corrupted} stable instance (corruptions shown as bold edges) \emph{approximately} recovers the original MAP solution. This is true even if the corruption changes the MAP solution (as in the bottom example). In other words, we prove that ``easy'' instances are still approximately easy even after small perturbations.}
\label{fig:main-idea}
\end{figure*}

Several works have studied different conditions that imply the local relaxation or related relaxations are tight \citep[e.g.,][]{kolmogorov2005optimality, wainwright2008graphical, thapper2012power, weller2016tightness,rowland2017conditions}.
Recent work on tightness of the local relaxation has focused on a class of several related conditions known as \emph{perturbation stability}.
Intuitively, an instance is perturbation stable if the solution $x^*$ to the MAP inference problem is unique, and moreover, $x^*$ is the unique solution even when the edge weights $w$ are multiplicatively perturbed by a certain adversarial amount \cite{BLstable}.
This structural assumption about the instance $(G,c,w)$ captures the intuition that, on ``real-world'' instances, the ground-truth solution is stable and does not change much when the weights are slightly perturbed.

For constants $\beta, \gamma \geq 1$, we say that $w'$ is a $(\beta,\gamma)$-perturbation of the weights $w$ if $\frac{1}{\beta}\cdot w(u,v) \le w'(u,v) \le \gamma \cdot w(u,v)$ for all $(u,v)\in E$. 
Suppose $x^*$ is the unique MAP solution to the instance $(G,c,w)$. Then, we say $(G,c,w)$ is a $(\beta,\gamma)$-stable instance if $x^*$ is also the unique MAP solution to every instance $(G,c,w')$ where $w'$ is a $(\beta,\gamma)$-perturbation of $w$.
 \citet{LanSonVij18} showed that when $(G,c,w)$ is $(2,1)$-stable, the solution to the local LP relaxation is {\em persistent} i.e., the LP solution exactly recovers the MAP solution $x^*$.

While theoretically interesting, $(2,1)$-stability is a strict condition that is unlikely to be satisfied in practice: the solution $x^*$ is not allowed to change \emph{at all} when the weights are perturbed. No real-world instances have yet been shown to be $(2,1)$-stable \citep{LanSonVij19}. Moreover, the LP relaxation is also not persistent on most of those instances. However, the solution of the local LP relaxation is still {\em nearly persistent} i.e., the LP solution is very close to the MAP solution $x^*$ (see Definition~\ref{def:hammingerror} for a formal definition). 
Those examples made it clear that theory must go beyond perturbation stability to explain this phenomenon of near-persistence that is prevalent in practice~\citep[see e.g.,][]{Sontagthesis, shekhovtsov2017maximum}.

{\em Why is the LP relaxation nearly persistent on MAP inference instances in practice?}

There are several theoretical frameworks to explain exactness or tightness of LP relaxations, such as total unimodularity, submodularity \citep{kolmogorov2005optimality}, and perturbation stability \citep{LanSonVij18, LanSonVij19}, as well as structural assumptions about the graph $G$ \citep{wainwright2008graphical}, or combined assumptions about $G$ and the form of the objective function \citep{weller2016tightness, rowland2017conditions}. However, these frameworks can not be used to prove near-persistence.


Figure \ref{fig:main-idea} (informally) shows our main result. The left side depicts the previous result of \citet{LanSonVij18}: if the instance is $(2,1)$-stable (a fairly strong structural assumption), the LP relaxation exactly recovers the full solution $x^*$. This result is limited because real-world instances have been shown to not satisfy $(2,1)$-stability \citep{LanSonVij19}. The right side shows our main result: if the instance is a \emph{slightly corrupted} $(2,1)$-stable instance, the LP relaxation still \emph{approximately} recovers the solution $x^*$ to the stable instance.

Intuitively, we may expect a real-world instance to be ``close'' to a stable instance (i.e., to be a ``corrupted stable'' instance, as in Figure \ref{fig:main-idea}) even if the instance itself is not stable. We design an algorithm to check whether this is the case. We find that on several real examples, sparse and small-norm changes to the instance make it appropriately stable for our theorems to apply. In other words, we certify that these real instances are close to stable instances. For these instances, our theoretical results explain why the LP relaxation approximately recovers the MAP solution.

More formally, we assume that there is some \emph{latent} stable instance $\stabins$, and that the observed instance $\obsins$ is a noisy version of $\stabins$ that is close to it.
Let $\hx$ be the solution to the local LP on the observed instance $\obsins$, and let $\bx$ be the (unknown) MAP solution on the unseen stable instance $(G,\bar{c},\bar{w})$. We prove that under certain conditions, the LP solution $\hx$ is {\em nearly persistent} i.e., the Hamming error $\norm{\hx - \bx}_1$ is small (see Definition~\ref{def:hammingerror}). In other words, the local LP solution to the observed instance approximately recovers the latent integral solution $\bx$.

We complement this by studying a natural generative model that generates noisy stable instances which, with high probability, satisfy the above conditions for {\em near persistency}. In other words, the observed instance $\obsins$ is obtained by random perturbations to the latent stable instance $\stabins$, and the LP relaxation approximately recovers the MAP solution to the latent instance with high probability.

Our theoretical results imply that the local LP approximately recovers the MAP solution when the observed instance is close to a stable instance. Our empirical results suggest that real-world instances are very close to stable instances. These results together suggest a new explanation for the near-persistency of the solution of the local LP relaxation for MAP inference in practice. To prove these results and derive our algorithm for finding a ``close-by'' stable instance, we make several novel technical contributions, which we outline below.

\paragraph{Technical contributions.}
\begin{itemize}
    \item In Section \ref{sec:expansion-stability}, we generalize the $(2,1)$-stability result of \citet{LanSonVij18} to work under a much weaker assumption, which we call $(2,1)$-expansion stability. That is, we prove the local LP is tight on $(2,1)$-expansion stable instances. 
    Additionally, given the instance's MAP solution, $(2,1)$-expansion stability is efficiently checkable. To the best of our knowledge, most other perturbation stability assumptions are not known to satisfy this desirable property. This generalization is crucial for the efficiency of our algorithm for finding stable instances that are close to a given observed instance.
    \item In Section \ref{sec:stable-curvature}, we give a simple extension of $(2,1)$-expansion stability called $(2,1,\psi)$-expansion stability. We prove it implies a ``curvature'' result around the MAP solution $\bx$. On instances that satisfy this condition, if a labeling $\hat{x}$ is close in objective value to $\bx$, it must also be close in the solution space. This result lets us translate between objective gap and Hamming distance.
    \item In Section \ref{sec:random-model}, we study a natural generative model where the observed instance is generated from an arbitrary {\em latent stable} instance by random (sub-Gaussian) perturbations to the costs and weights. We prove that, with high probability, every feasible LP solution takes close objective values on the latent and observed instances. The proof uses a rounding algorithm for metric labeling in a novel way to obtain stronger guarantees. When combined with our other results, this proves that when the latent instance is $(2,1,\psi)$-expansion stable, the LP solution is nearly persistent on the observed instance with high probability. These results suggest a theoretical explanation for the phenomenon of near-persistence of the LP solution in practice. 
    \item We design an efficient algorithm for finding $(2,1,\psi)$-expansion stable instances that are ``close'' to a given instance $\obsins$ in Section \ref{sec:algorithm}. To the best of our knowledge, this is the first algorithm for finding close-by stable instances, and is also an efficient algorithm for checking $(2,1,\psi)$-expansion stability. This algorithm allows us to check whether real-world instances can plausibly be considered ``corrupted stable'' instances as shown in Figure \ref{fig:main-idea}.
    \item We run our algorithm on several real-world instances of MAP inference in Section \ref{sec:experiments}, and find that the observed instances $\obsins$ often admit close-by $(2,1,\psi)$-stable instances $\stabins$. 
    Moreover, we find that the local LP solution $\hat{x}$ typically has very close objective to $\bx$ in $\stabins$.
    Our curvature result for $(2,1,\psi)$-stable instances thus gives an explanation for the tightness of the local LP relaxation on $\obsins$.
\end{itemize}

\section{Related work}
\label{sec:related}
\paragraph{Perturbation stability.}

Several works have given recovery guarantees for the local LP relaxation on perturbation stable instances of uniform metric labeling \citep{LanSonVij18, LanSonVij19} and for similar problems \citep{makarychev2014bilu, AngMakMak17}.

\citet{LanSonVij19} give partial recovery guarantees for the local LP when parts (\textit{blocks}) of the observed instance satisfy a stability-like condition, and they showed that practical instances have blocks that satisfy their condition. However, the required {\em block stability} condition in turn depends on certain quantities related to the LP dual. This is unsatisfactory since this does not explain when and why such instances are likely to arise in practice.
For a more extensive treatment of the subject, we refer the reader to the ``Perturbation Resilience'' chapter from \citet{roughgarden_2021}.

\paragraph{Easy instances corrupted with noise.}
Our random noise model is similar to several planted average-case models like stochastic block models (SBMs) considered in the context of problems like community detection, correlation clustering and partitioning \citep[see e.g.,][]{Mcsherry,Abbesurvey, GRSY15}. Instances generated from these models can also be seen as the result of random noise injected into an instance with a nice block structure that is easy to solve. 
Several works give exact recovery and approximate recovery guarantees for semidefinite programming (SDP) relaxations for such models in different parameter regimes~\cite{Abbesurvey,guedon2016community}. 
In our model however, we start with an {\em arbitrary} stable instance as opposed to an instance with a block structure (which is trivial to solve). 
Moreover, we are unaware of such analysis in the context of linear programs.
Please see Section~\ref{sec:random-model} for a more detailed comparison. 
To the best of our knowledge, we are the first to study instances generated from random perturbations to stable instances.

\paragraph{Partial optimality algorithms.} Several works have developed fast algorithms for identifying parts of the MAP assignment. These algorithms output an approximate solution $\hat{x}$ and a set of vertices where $\hat{x}$ provably agrees with the MAP solution $x^*$ \citep[e.g.,][]{kovtun2003partial, shekhovtsov2013exact,swoboda2016partial,shekhovtsov2017maximum}. Like these works, our results also prove that an approximate solution $\hat{x}$ has small error $|\hat{x} - x^*|$. However, these previous approaches are more concerned with designing fast algorithms for finding such $\hat{x}$. In contrast, we focus on giving structural conditions that explain why a \emph{particular} $\hat{x}$ (the solution to the local LP relaxation) often approximately recovers $x^*$. Our algorithm in Section \ref{sec:algorithm} is thus not meant as an efficient method for certifying that $|\hat{x} - x^*|$ is small, but rather as a method for checking whether our structural condition (that the observed instance is close to a stable instance) is satisfied in practice.

\section{Preliminaries}
\label{sec:prelim}
In this section we introduce our notation, define the local LP relaxation for MAP inference, and give more details on perturbation stability.
As in the previous section, the MAP inference problem in the ferromagnetic Potts model on the instance $(G,c,w)$ can be written in \emph{energy minimization} form as:
\begin{equation}
\label{eqn:map-problem}
  \minimize_{x: V\to [k]}\sum_{u\in V}c(u, x(u)) + \smashoperator{\sum_{(u,v) \in E}}w(u,v)\indicator[x(u) \ne x(v)].
\end{equation}
Here $x$ is an \emph{assignment} (or labeling) of vertices to labels i.e. $x: V\to \{1,2,\dots,k\}$. We can identify each labeling $x$ with a point $(x_u : u\in V; x_{uv} : (u,v) \in E)$, where each $x_u \in \{0,1\}^k$ and each $x_{uv} \in \{0,1\}^{k\times k}$. 

In this work, we consider all node costs $c(u,i) \in \R$ and all edge weights $w(u,v) > 0$. We note that this is equivalent to the formulation where all node costs and edge weights are non-negative~\cite{KleinbergTardos02}. See Appendix~\ref{sec:prelim_details} for a proof of this equivalence. 

We encode the node costs and the edge weights in a vector $\theta \in \R^{nk + mk^2}$ where $n = \abs{V} \text{ and } m = \abs{E}$ s.t. $\theta(u,i) = c(u,i), \theta(u,v,i,j) = w(u,v)\indicator[i \ne j]$. Then the objective can be written as $\dot{\theta}{x}$. We set $x_u(i) = 1$ when $x(u) = i$, and 0 otherwise. Similarly, we set $x_{uv}(i,j) = 1$ when $x(u) = i$ and $x(v) = j$, and 0 otherwise. Where convenient, we use $x$ to refer to this point rather than the labeling $x: V \to [k]$.
We can then rewrite \eqref{eqn:map-problem} as:
\begin{alignat*}{2}
  \mindot_{x}\sum_{u\in V}&\sum_{i=1}^k c(u,i)x_u(i) + \smashoperator{\sum_{(u,v) \in E}}\ \ &&w(u,v)\sum_{i\ne j}x_{uv}(i,j)\\
  \text{subject to:}& \sum_{i=1}^k x_u(i) = 1 &&\forall\ u\in V\\
                    & \sum_{i=1}^k x_{uv}(i,j) = x_v(j)&& \forall\ (u,v)\in E,\ j\in [k]\\
                   & \sum_{j=1}^k x_{uv}(i,j) = x_u(i)&& \forall\ (u,v)\in E,\ i\in [k]\\
                   & x_{uv}(i,j) \in \{0,1\}&&\forall\ (u,v),\ (i,j)\\
                   & x_{u}(i) \in \{0,1\}&& \forall\ u,\ i.
\end{alignat*}
This is equivalent to \eqref{eqn:map-problem}, and is an integer linear program (ILP). By relaxing the integrality constraints from $\{0,1\}$ to $[0,1]$, we obtain the \emph{local LP relaxation}:
\begin{alignat*}{2}
  \mindot_{x\in L(G)}\sum_{u\in V}\sum_{i=1}^k c(u,i)x_u(i) + \smashoperator{\sum_{(u,v) \in E}}w(u,v)\sum_{i\ne j}x_{uv}(i,j),
\end{alignat*}
where $L(G)$ is the polytope defined by the same constraints as above, with $x \in \{0,1\}$ replaced with $x \in [0,1]$. This is known as the \emph{local polytope} \cite{wainwright2008graphical}. The vertices of $L(G)$ are either \emph{integral}, meaning all $x_u$ and $x_{uv}$ take values in $\{0,1\}$, or \emph{fractional}, when some variables take values in $(0,1)$. 
Integral vertices of this polytope correspond to labelings ${x: V\to [k]}$, so if the LP solution is obtained at an integral vertex, then it is also a MAP assignment.

If the solution $x^*$ of this relaxation on an instance $(G,c,w)$ is obtained at an integral vertex, we say the LP is \emph{tight} on the instance, because the LP has exactly recovered a MAP assignment. If the LP is not tight, there may still be some vertices $u$ where $x^*_u$ takes integral values. In this case, if $x^*_u(i) = 1$ and $\bar{x}(u) = i$, i.e., the LP solution agrees with the MAP assignment $\bar{x}$ at vertex $u$, the LP is said to be \emph{persistent} at $u$. $x^*_u(i) \in \{0,1\}$ does not imply the LP is persistent at $u$, in general. The LP solution $x^*$ is said to be persistent if it agrees with $\bar{x}$ at every vertex $u \in V$. 

\vspace{2pt}
\noindent {\em Recovery error:} In practice, the local LP relaxation is often not tight, but is \emph{nearly persistent}.
We will measure the recovery error of our LP solution in terms of the ``Hamming error'' between the LP solution and the MAP assignment. 

\begin{definition}[Recovery error]\label{def:hammingerror}
Given an instance $(G,c,w)$ of \eqref{eqn:map-problem}, let $\bar{x}$ be a MAP assignment, and let $x^*$ be a solution to the local LP relaxation. The recovery error  is given by (with some abuse of notation)
\begin{align*}
\frac{1}{2}\|x^* - \bar{x}\|_1 &:= \frac{1}{2}\|x^*_V - \bar{x}_V\|_1 \\
&=\frac{1}{2}\sum_{ u \in V} \sum_{i \in [k]} \big| x^*_u(i) - \indicator[\bar{x}(u) = i ] \big|.
\end{align*}
\end{definition}
$x_V \in \R^{nk}$ denotes the portion of $x$ restricted to the vertex set $V$. If $x^*$ is integral, the recovery error measures the number of vertices where $x^*$ disagrees with $\bar{x}$. When the recovery error of $x^*$ is $0$, the solution $x^*$ is {\em persistent}. 
We will say that the LP solution $x^*$ is {\em nearly persistent} when the recovery error of solution $x^*$ is a small fraction of $n$.

In our analysis, we will consider the following subset $L^*(G)$ of $L(G)$ which is easier to work with, and which contains all points we are interested in.

\begin{definition}[$L^*(G)$]
\label{def:L*}
We define $L^*(G) \subseteq L(G)$ to be the set of points $x \in L(G)$ which further satisfy the constraint that $x_{uv}(i,i) = \min(x_u(i), x_v(i))$ for all $(u,v) \in E$ and $i \in [k]$.
\end{definition}

\begin{restatable}[]{claim}{Lstarclaim}\label{claim:Lstar}
For a given graph $G$, every solution $x \in L(G)$ that minimizes $\dot{\theta}{x}$ for some valid objective vector $\theta=(c,w)$ also belongs to $L^*(G)$. Further, all integer solutions in $L(G)$ also belong to $L^*(G)$. 
\end{restatable}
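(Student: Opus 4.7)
The plan is to exploit the fact that in the LP objective the edge variables $x_{uv}$ appear only in the edge cost $w(u,v) \sum_{i\ne j} x_{uv}(i,j)$. Using the constraint $\sum_{i,j} x_{uv}(i,j)=1$ (which follows from summing the marginal constraints), this rewrites as $w(u,v)\bigl(1-\sum_i x_{uv}(i,i)\bigr)$. Since $w(u,v)>0$ and the node-cost portion of the objective depends only on the $x_u$'s, optimality forces each $x_{uv}$ to maximize $\sum_i x_{uv}(i,i)$ subject to its marginals $x_u, x_v$ being fixed. I would therefore reduce the claim to showing that this maximum equals $\sum_i \min(x_u(i),x_v(i))$ and is attained only when $x_{uv}(i,i)=\min(x_u(i),x_v(i))$ for every $i$.

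For the upper bound, the marginal constraints $\sum_j x_{uv}(i,j)=x_u(i)$ and $\sum_i x_{uv}(i,j)=x_v(j)$, together with $x_{uv}(i,j)\ge 0$, immediately give $x_{uv}(i,i)\le x_u(i)$ and $x_{uv}(i,i)\le x_v(i)$, so $x_{uv}(i,i)\le \min(x_u(i),x_v(i))$ and summing over $i$ yields $\sum_i x_{uv}(i,i)\le \sum_i \min(x_u(i),x_v(i))$. Equality in the sum forces equality in every term, which is exactly the $L^*(G)$ condition.

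Next I would show this upper bound is actually achievable by exhibiting a feasible joint distribution. Let $a_i=x_u(i)-\min(x_u(i),x_v(i))$ and $b_j=x_v(j)-\min(x_u(j),x_v(j))$; note that $a_i b_i=0$ for every $i$ and $S:=\sum_i a_i=\sum_j b_j=1-\sum_i \min(x_u(i),x_v(i))$. Define $x'_{uv}(i,i)=\min(x_u(i),x_v(i))$ and, when $S>0$, $x'_{uv}(i,j)=a_i b_j/S$ for $i\ne j$ (and $x'_{uv}(i,j)=0$ if $S=0$); a short marginal check using $a_i b_i=0$ verifies the row/column sums give back $x_u,x_v$. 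Thus $x'_{uv}$ is feasible and achieves the maximum diagonal mass. If an LP minimizer $x$ violated $x_{uv}(i,i)=\min(x_u(i),x_v(i))$ on some edge, replacing its $x_{uv}$ block by $x'_{uv}$ while keeping everything else unchanged produces a feasible point with strictly smaller objective, contradicting optimality.

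Finally, for the integer statement, any integer point of $L(G)$ must have each $x_u$ equal to an indicator vector of a single label (by $\sum_i x_u(i)=1$ and $x_u(i)\in\{0,1\}$), and the marginal constraints then force $x_{uv}$ to be the indicator of a single pair, corresponding to an honest labeling $x:V\to[k]$. Then $x_{uv}(i,i)=\indicator[x(u)=i]\cdot\indicator[x(v)=i]=\min(x_u(i),x_v(i))$ holds trivially. The only step with any content is the explicit construction of the maximum-coupling distribution $x'_{uv}$; the rest is bookkeeping with the marginal constraints.
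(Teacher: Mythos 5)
Your proof is correct and follows essentially the same route as the paper's: fix the node variables, observe that the edge terms decouple and that minimizing $\sum_{i\ne j}x_{uv}(i,j)$ is equivalent to maximizing $\sum_i x_{uv}(i,i)$, bound each $x_{uv}(i,i)$ by $\min(x_u(i),x_v(i))$ via the marginal constraints, and check the integer case directly. The only (minor) difference is that you certify achievability with an explicit maximal-coupling construction $x'_{uv}(i,j)=a_ib_j/S$ off the diagonal, whereas the paper argues feasibility of the diagonal choice via a flow/supply-demand argument in a footnote; both serve the same purpose and your construction is a perfectly valid, somewhat more explicit, substitute.
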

We prove this claim in Appendix~\ref{sec:prelim_details}.

Our new stability result relies on the set of \emph{expansions} of a labeling $x$.
\begin{definition}[Expansion]
\label{def:expansion}
Let $x: V\to [k]$ be a labeling of $V$. For any label $\alpha \in [k]$, we say that $x'$ is an $\alpha$-expansion of $x$ if $x' \ne x$ and the following hold for all $u\in V$:
\begin{align*}
x(u) = \alpha &\implies x'(u) = \alpha,\\
x'(u) \ne \alpha &\implies x'(u) = x(u).
\end{align*}
That is, $x'$ may only expand the set of points labeled $\alpha$, and cannot make other changes to $x$.
\end{definition}

\section{Expansion Stability}\label{sec:expansion-stability}
\begin{figure}[tb]
  \centering
\begin{subfigure}{.5\linewidth}
  \centering
  \scalebox{1.0}{  
  \tikzstyle{vertex}=[circle, draw=black, very thick, minimum size=5mm]
  \tikzstyle{edge} = [draw=black, line width=1]
  \tikzstyle{weight} = [font=\normalsize]
  \begin{tikzpicture}[scale=2,auto,swap]
    \foreach \pos /\name in {{(0,0)}/u,{(1,0)}/w,{(0.5,0.75)}/v}
    \node[vertex](\name) at \pos{$\name$};
    \foreach \source /\dest /\weight in {u/w/1+\epsilon}
    \path[edge] (\source) -- node[weight] {$\weight$} (\dest);
    \foreach \source /\dest /\weight/\pos in {u/v/1+\epsilon/{above left}, v/w/1+\epsilon/{above right}}
    \path[edge] (\source) -- node[weight, \pos] {$\weight$} (\dest);
  \end{tikzpicture}
  }
\end{subfigure}%
\begin{subfigure}{.5\linewidth}
  \centering
  \scalebox{1.0}{
\begin{tabular}{|l|ccc|}
\hline
\multicolumn{1}{|c|}{\textbf{Node}} & \multicolumn{3}{|c|}{\textbf{Costs}} \\
\hline
u & .5 & $\infty$        & $\infty$      \\
\hline
v & 1 & 0 & $\infty$\\
\hline
w & 1 & $\infty$ & 0\\
\hline
\end{tabular}
}
\end{subfigure}
  \caption{$(2,1)$-expansion stable instance that is not $(2,1)$-stable. In the original instance (shown left), the optimal solution labels each vertex with label 1, for an objective of $2.5$. This instance is not $(2,1)$-stable: consider the $(2,1)$-perturbation that multiples all edge weights by $1/2$. In this perturbed instance, the original solution still has objective 2.5, and the new optimal solution labels $(u,v,w) \rightarrow (1,2,3)$. This has a node cost of 0.5 and an edge cost of $(3+3\epsilon)/2$, for a total of $2+3\epsilon/2 < 2.5$. Since the original solution is not optimal in the perturbed instance, this instance is not $(2,1)$-perturbation stable. However, note that the only expansions of the original solution (which had all label 1) that have non-infinite objective are $(u,v,w) \rightarrow (1,2,1)$ and $(u,v,w) \rightarrow (1,1,3)$. These each have objective $2.5 + \epsilon$, which is strictly greater than the perturbed objective of the original solution. In fact, checking this single perturbation, known as the \emph{adversarial perturbation} is enough to verify expansion stability: this instance is $(2,1)$-expansion stable. We include the full details in Appendix \ref{sec:expansion-stability_details}.}
\label{fig:counter1}
\end{figure}
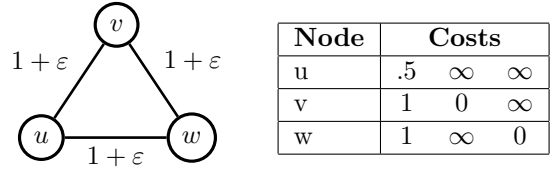

In this section, we generalize the stability result of \citet{LanSonVij18} to a much broader class of instances.
This generalization allows us to efficiently check whether a real-world instance could plausibly have the structure shown in Figure \ref{fig:main-idea} (that is, whether the instance is close to a suitably stable instance).

Consider a fixed instance $(G,c,w)$ with a unique MAP solution $\bar{x}$.
Theorem 1 of \citet{LanSonVij18} requires that for all $\theta' \in \{(c,w')\ |\ w' \in \{(2,1)\text{-perturbations of } w\}\}$, $\langle \theta', x \rangle > \langle \theta', \bx \rangle$ for \emph{all} labelings $x\ne \bx$.
That is, that result requires $\bx$ to be the unique optimal solution in any $(2,1)$-perturbation of the instance.
By contrast, our result only requires $\bx$ to have better perturbed objective than the set of \emph{expansions} of $\bx$ (c.f. Definition \ref{def:expansion}).
\begin{definition}[(2,1)-expansion stability]
\label{def:expansion-stability}
    Let $\bx$ be the unique MAP solution for $(G,c,w)$, and let $\calE_{\bx}$ be the set of expansions of $\bx$ (see Definition \ref{def:expansion}). 
    Let \[
    \Theta = \{(c,w')\ |\  w' \in \{(2,1)\text{-perturbations of } w\}\}
    \]
    be the set of all objective vectors within a $(2,1)$-perturbation of $\theta = (c,w)$. We say the instance $(G,c,w)$ is $(2,1)$-expansion stable if the following holds for all $\theta' \in \Theta$ and all $x\in \calE_{\bx}$:
    \[
        \dot{\theta'}{x} > \dot{\theta'}{\bx}.
    \]
    That is, $\bx$ is better than all of its expansions $x \ne \bx$ in every $(2,1)$-perturbation of the instance.
\end{definition}

\begin{restatable}[Local LP is tight on $(2,1)$-expansion stable instances]{theorem}{lptes}\label{thm:lptes}
Let $\bx$ and $\hx$ be the MAP and local LP solutions to a $(2,1)$-expansion stable instance $(G,c,w)$, respectively. Then $\bx = \hx$ i.e. the local LP is tight on $(G,c,w)$.
\end{restatable}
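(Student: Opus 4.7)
The plan is to prove Theorem \ref{thm:lptes} by contradiction, adapting the rounding strategy of \citet{LanSonVij18} but forcing the violator to be an \emph{expansion} of $\bar{x}$ rather than an arbitrary labeling. Suppose $\hat{x} \ne \bar{x}$. Since the local LP is a relaxation and $\bar{x}$ is LP-feasible, LP optimality yields $\langle \theta, \hat{x} \rangle = \langle \theta, \bar{x} \rangle$. By Claim \ref{claim:Lstar} we may assume $\hat{x} \in L^*(G)$, so $\hat{x}_{uv}(i,i) = \min(\hat{x}_u(i), \hat{x}_v(i))$ for all $(u,v)\in E$, $i\in[k]$.

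The main construction is a family of randomized $\alpha$-expansions of $\bar{x}$ obtained from $\hat{x}$ by threshold rounding. For each $\alpha\in[k]$ and $\tau\in(0,1]$, define $\tilde{x}^{\alpha,\tau}(u) = \alpha$ if either $\bar{x}(u)=\alpha$ or $\hat{x}_u(\alpha)\ge \tau$, and $\tilde{x}^{\alpha,\tau}(u) = \bar{x}(u)$ otherwise. By construction each $\tilde{x}^{\alpha,\tau}$ is either equal to $\bar{x}$ or is an $\alpha$-expansion of $\bar{x}$ in the sense of Definition \ref{def:expansion}. I would then define a $(2,1)$-perturbation $w'$ of $w$ that is adversarial to $\bar{x}$ relative to the random rounding: on every edge $(u,v)$ where $\bar{x}$ and $\hat{x}$ carry ``fractional disagreement'' (e.g., edges where $\bar{x}(u)\ne \bar{x}(v)$, or more generally where $\hat{x}$ places nontrivial off-diagonal mass) set $w'(u,v) = w(u,v)/2$, and keep $w'(u,v) = w(u,v)$ elsewhere.

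Next I would take expectations of the perturbed objective $\langle (c,w'), \tilde{x}^{\alpha,\tau} \rangle$ over uniform $\alpha \in [k]$ and uniform $\tau \in [0,1]$, and decompose node and edge contributions. The node part produces a convex combination of the form $(1-1/k)\sum_u c(u,\bar{x}(u)) + (1/k)\sum_u \sum_i \hat{x}_u(i)\,c(u,i)$. For edges, the expected indicator $\E_\tau[\indicator[\tilde{x}^{\alpha,\tau}(u)\ne \tilde{x}^{\alpha,\tau}(v)]]$ can be written in closed form using $\hat{x}\in L^*(G)$ and the identity $\sum_i \min(\hat{x}_u(i),\hat{x}_v(i)) = 1 - \tfrac{1}{2}\sum_i |\hat{x}_u(i)-\hat{x}_v(i)|$. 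Combining these expressions with the LP equality $\langle \theta, \hat{x}\rangle = \langle \theta, \bar{x}\rangle$ should yield $\E_{\alpha,\tau}[\langle (c,w'), \tilde{x}^{\alpha,\tau}\rangle] \le \langle (c,w'), \bar{x}\rangle$. By the probabilistic method, some $\alpha^\star,\tau^\star$ achieves this inequality deterministically. If $\tilde{x}^{\alpha^\star,\tau^\star} \ne \bar{x}$, the inequality directly contradicts the strict inequality in Definition \ref{def:expansion-stability}. If instead $\tilde{x}^{\alpha,\tau} = \bar{x}$ for every $\alpha,\tau$, then for each $u$ and each $\alpha \ne \bar{x}(u)$ we must have $\hat{x}_u(\alpha) < \tau$ for all $\tau>0$, forcing $\hat{x}_u(\alpha)=0$; combined with $\hat{x}\in L^*(G)$ and the marginal constraints this forces $\hat{x}=\bar{x}$, again a contradiction.

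The principal obstacle is the edge analysis in the expectation step. An $\alpha$-expansion can turn previously uncut edges (with $\bar{x}(u)=\bar{x}(v)=\beta\ne \alpha$) into cut edges, paying a positive cost that $\bar{x}$ does not pay, while at the same time ``uncutting'' some edges where $\bar{x}(u)=\alpha\ne \bar{x}(v)$ and $v$ moves to $\alpha$. A naive uniform perturbation such as halving all weights fails because the new cuts on previously uncut edges need not be absorbed. The delicate part of the proof is to choose $w'$ so that, edge by edge, the expected increase from newly cut edges is offset by the $\tfrac{1}{2}$ savings on edges cut by $\bar{x}$, exploiting that the LP mass $\hat{x}_{uv}(i,j)$ on off-diagonal entries feeds into both quantities in a controlled way via the $L^*(G)$ constraint. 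I expect this balancing—identifying the right perturbation and pushing the algebra through the four cases for the label pattern of $(\bar{x}(u),\bar{x}(v))$ relative to $\alpha$—to be the main technical work of the proof.
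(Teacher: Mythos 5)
Your overall strategy---a single-label threshold rounding of $\hx$ that produces only expansions of $\bx$, an expectation comparison in an adversarially perturbed instance, and a contradiction with Definition~\ref{def:expansion-stability}---is essentially the paper's proof (the paper rounds the $\epsilon$-close point $\epsilon\hx+(1-\epsilon)\bx$ with one random label and one random threshold, which induces the same marginals as your $(\alpha,\tau)$ scheme with $\epsilon=1/k$), and your node-cost and cut-probability computations do go through. The genuine gap is precisely the step you flag as ``the main technical work'': the choice of the perturbation $w'$, and the candidate you write down is wrong. Writing $d(u,v)=\tfrac12\sum_i|\hx_u(i)-\hx_v(i)|$ and $\Delta_c=\sum_u\bigl(\sum_i c(u,i)\hx_u(i)-c(u,\bx(u))\bigr)$, your rounding cuts an edge with $\bx(u)=\bx(v)$ with probability at most $\tfrac{2}{k}d(u,v)$ and uncuts an edge with $\bx(u)\ne\bx(v)$ with probability at least $\tfrac{1}{k}(1-d(u,v))$, so $k\cdot\E\bigl[\dot{(c,w')}{\tilde x^{\alpha,\tau}}-\dot{(c,w')}{\bx}\bigr]\le \Delta_c+\sum_{\bx(u)=\bx(v)}2w'(u,v)d(u,v)-\sum_{\bx(u)\ne\bx(v)}w'(u,v)(1-d(u,v))$, while LP optimality gives $\Delta_c+\sum_{\bx(u)=\bx(v)}w(u,v)d(u,v)\le\sum_{\bx(u)\ne\bx(v)}w(u,v)(1-d(u,v))$. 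To make the first bound collapse onto the second you must set $w'=w/2$ exactly on the edges \emph{not} cut by $\bx$ (the factor $1/2$ absorbs the factor $2$ in the cut probability) and keep $w'=w$ at \emph{full} weight on the edges cut by $\bx$ (so the savings term is not weakened); this is the paper's adversarial perturbation. Your prescription halves the edges with $\bx(u)\ne\bx(v)$, which leaves an uncancelled nonnegative term $\tfrac12\sum_{\bx(u)\ne\bx(v)}w(u,v)(1-d(u,v))$ in the bound, so you cannot conclude $\E[\cdot]\le 0$ and the contradiction with expansion stability evaporates. So the balancing you correctly identify as delicate is not an open detail that any reasonable choice settles: the specific $w'$ you propose fails, and the correct one is the opposite on the cut edges.

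Two smaller repairs. First, LP optimality only gives $\dot{\theta}{\hx}\le\dot{\theta}{\bx}$, not equality (equality is part of what tightness asserts); fortunately only the inequality is needed. Second, your endgame case split is not exhaustive: the realization guaranteed by the probabilistic method to have objective at most the expectation could be $\bx$ itself even though other realizations differ from $\bx$. The fix, as in the paper, is to condition: if $\hx\ne\bx$ then some $\hx_u(\alpha)>0$ with $\alpha\ne\bx(u)$, so $\Pr[\tilde x^{\alpha,\tau}\ne\bx]>0$; every realization different from $\bx$ is an expansion and has strictly larger perturbed objective by $(2,1)$-expansion stability, and realizations equal to $\bx$ contribute zero, so the expected difference is strictly positive, contradicting the expectation bound $\le 0$.
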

We defer the proof of this theorem to Appendix~\ref{sec:expansion-stability_details} as it is similar to the proof of Theorem 1 from \citet{LanSonVij18}. 
The $(2,1)$-expansion stability assumption is much weaker than $(2,1)$-stability because the former only compares $\bx$ to its expansions, whereas the latter compares $\bx$ to \emph{all} labelings.
While the rest of our results can also be adapted to the $(2,1)$-stability definition, this relaxed assumption gives better empirical results. Figure \ref{fig:counter1} shows an example of a $(2,1)$-expansion stable instance that is not $(2,1)$-stable. This shows that our new stability condition is less restrictive.

\section{Curvature around MAP solution and near persistence of the LP solution}\label{sec:stable-curvature}
In this section, we show that a condition related to $(2,1)$-expansion stability, called $(2,1,\psi)$-expansion stability, implies a ``curvature'' result for the objective function around the MAP solution $\bx$. On instances satisfying this condition, any point $\hx\in L(G)$ with objective close to $\bx$ also has small $\norm{\hx-\bx}_1$, so $\hx$ and $\bx$ are close in solution space. In other words, if the LP solution $\hat{x}$ to a ``corrupted'' $(2,1,\psi)$-expansion stable instance is near-optimal in the original $(2,1,\psi)$-expansion stable instance (whose solution is $\bx$), then the result in this section implies $\norm{\hx-\bx}_1$ is small. This immediately gives a version of the result in the right panel of Figure~\ref{fig:main-idea}: suppose we define an instance to be \emph{close} to a $(2,1,\psi)$-expansion stable $(G,\bar{c},\bar{w})$ if its LP solution $\hat{x}$ is approximately optimal in $(G,\bar{c},\bar{w})$. Then the curvature result implies that the LP approximately recovers the stable instance's MAP solution $\bx$ for all close instances. In Section \ref{sec:random-model}, we give a generative model where the generated instances are ``close'' according to this definition with high probability.

The $(2,1,\psi)$-expansion stability condition, for $\psi > 0$, says that the instance is $(2,1)$-expansion stable even if we allow all node costs $c(u,i)$ to be additively perturbed by up to $\psi$. This extra additive stability will allow us to prove the curvature result.
This is related to the use of additive stability in \citet{LanSonVij19} to give persistency guarantees.
\begin{definition}[$(2,1,\psi)$-expansion stable] For $\psi > 0$, we say an instance $(G,c,w)$ is $(2,1,\psi)$-expansion stable if $(G,c',w)$ is $(2,1)$-expansion stable for all $c'$ with $c \le c' \le c+ \psi \cdot \mathbf{1}$ where $\mathbf{1}$ is the all-ones vector.
\end{definition}

The following theorem shows low recovery error i.e., near persistence of the LP solution on $(2,1,\psi)$ expansion stable instances in terms of the gap in objective value.  
\begin{restatable}{theorem}{curvature}\label{thm:curvature}
Let $(G,c,w)$ be a $(2,1,\psi)$-expansion stable instance with MAP solution $\bar{x}$. 
Let $\theta = (c,w)$. Then for any $x \in L^*(G)$, the recovery error (see Def.~\ref{def:hammingerror}) satisfies
\begin{equation}\label{eq:deviationLP}
    \frac{1}{2}\norm{x - \bx}_1:=\frac{1}{2}\norm{x_V - \bx_V}_1 \le \frac{1}{\psi}\abs{\dot{\theta}{x}-\dot{\theta}{\bx}}.
\end{equation}
\end{restatable}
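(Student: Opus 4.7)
The plan is to apply Theorem~\ref{thm:lptes} to a single cleverly chosen perturbation of the node costs. Specifically, I would define $c'(u,i) := c(u,i) + \psi \cdot \mathbf{1}[i = \bx(u)]$, which raises the cost of each vertex's ``correct'' label by exactly $\psi$ and leaves all other costs alone, so $c \le c' \le c + \psi\mathbf{1}$. By the definition of $(2,1,\psi)$-expansion stability, $(G,c',w)$ is $(2,1)$-expansion stable, so Theorem~\ref{thm:lptes} forces the local LP on $(G,c',w)$ to be tight. Provided $\bx$ is the MAP solution of $(G,c',w)$ as well, this gives $\langle \theta', x\rangle \ge \langle \theta', \bx\rangle$ for every $x \in L(G) \supseteq L^*(G)$, where $\theta' := (c',w)$.

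Once this inequality is in hand, the theorem falls out from a short rearrangement. Writing $\delta(u,i) := c'(u,i) - c(u,i) = \psi \cdot \mathbf{1}[i=\bx(u)]$ and using $\sum_i x_u(i) = 1$ (which holds on $L(G)$), the ``correction'' $\sum_{u,i}\delta(u,i)(x_u(i) - \bx_u(i))$ collapses to exactly $-\psi \sum_u (1 - x_u(\bx(u)))$, which by the identity in Definition~\ref{def:hammingerror} equals $-\psi \cdot \tfrac{1}{2}\|x_V - \bx_V\|_1$. Substituting into $\langle \theta',x\rangle - \langle \theta',\bx\rangle \ge 0$ yields $\langle \theta,x\rangle - \langle \theta,\bx\rangle \ge \psi \cdot \tfrac{1}{2}\|x - \bx\|_1$. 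The absolute value on the right of~\eqref{eq:deviationLP} is vacuous, because applying Theorem~\ref{thm:lptes} to the original instance $(G,c,w)$ (which is $(2,1)$-expansion stable, as this is the $c'=c$ case of the stability hypothesis) already makes the LHS nonnegative.

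The main obstacle is verifying that $\bx$ really is the MAP of $(G,c',w)$; this is not immediate, since $c'$ \emph{raises} the cost of $\bx$'s own labels and could in principle make some competitor cheaper. I would establish it by a continuity/uniqueness argument along the affine path $c_t := c + t(c'-c)$ for $t \in [0,1]$. Each $(G,c_t,w)$ is $(2,1)$-expansion stable by the stability hypothesis, so Theorem~\ref{thm:lptes} guarantees that each has a unique MAP $\bx_t$. If $\bx_t$ ever switched from $\bx$ to some other labeling at a first time $t^*$, upper semi-continuity of $\arg\min$ would force both $\bx$ and its successor to be optimal at $t^*$, contradicting the uniqueness of $\bx_{t^*}$. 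Hence $\bx_t = \bx$ throughout $[0,1]$, and in particular at $t=1$, which is the only input the rest of the proof needs.
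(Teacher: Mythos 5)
Your proposal is correct, but it takes a genuinely different route from both of the paper's arguments. The paper proves Theorem~\ref{thm:curvature} twice: once via an $\epsilon$-close rounding algorithm applied to $x'=\epsilon x+(1-\epsilon)\bx$, comparing the expected perturbed objective gap $\E[A_h]$ with the expected Hamming error $\E[B_h]$ and using the pointwise bound $A_h\ge\psi\cdot B_h$ on every labeling in the support of the rounding; and once via LP duality, showing that $(2,1,\psi)$-expansion stability forces a locally decodable dual solution with node margin at least $\psi$. You instead apply Theorem~\ref{thm:lptes} once, to the single shifted instance $c'(u,i)=c(u,i)+\psi\,\indicator[i=\bx(u)]$, and read the curvature bound off a one-line rearrangement of $\dot{\theta'}{x}\ge\dot{\theta'}{\bx}$ using $\sum_i x_u(i)=1$; this is essentially a primal incarnation of the dual-margin argument, with the explicit cost shift playing the role of the margin certificate. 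Your route buys two things: it avoids the rounding and dual machinery entirely (beyond Theorem~\ref{thm:lptes}, used as a black box on the shifted instance), and, like the paper's dual proof but unlike its primal proof, it yields the inequality for every $x\in L(G)$ rather than only $x\in L^*(G)$. Your continuity argument along $c_t=c+t(c'-c)$ is not pedantry but a genuine patch: the definition of $(2,1,\psi)$-expansion stability only asserts that each $(G,c',w)$ is $(2,1)$-expansion stable with respect to its \emph{own} unique MAP solution, and the paper's proof of Lemma~\ref{lem:Ahlow} silently uses that this solution is still $\bx$; your observation that the unique minimizers of finitely many objectives, each linear in $t$, cannot switch along the connected interval $[0,1]$ closes that gap cleanly. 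Finally, your disposal of the absolute value is also the right one: $(G,c,w)$ itself is $(2,1)$-expansion stable (the $c'=c$ case), so $\dot{\theta}{x}\ge\dot{\theta}{\bx}$ holds on all of $L(G)$ and the quantity inside the absolute value is nonnegative.
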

\begin{proof}[Proof (sketch)]
For any $x \in L^*(G)$, we construct a feasible solution $\hat{x}$ which is a strict convex combination of $x$ and $\bar{x}$ that is very close to $\bar{x}$. Then, we apply a rounding algorithm to $\hx$ to get a random integer solution $h$. Let $\hat\theta$ represent the worst $(2,1)$-perturbation for $\bar{x}$. This is the instance where all the edges not cut by $\bar{x}$ have their weights multiplied by $1/2$. We define the objective difference using $\hat\theta$ as $A_h = \dot{\hat\theta}{h} - \dot{\hat\theta}{\bar{x}}$. First we show an upper bound for $\E[A_h]$ using properties of the rounding algorithm. Then we show that for any solution $h$ in the support of our rounding algorithm, $A_h \geq \psi \cdot B_h$ where $B_h$ is the Hamming error of $h$ (when compared to $\bx$). 
On the other hand, one can also use the properties of the rounding algorithm to get a lower bound on $\E[B_h]$ in terms of the recovery error (i.e., Hamming error) of the LP solution. These bounds together imply the required upper bound on the recovery error of the LP solution. 
\end{proof}
We defer the complete proof and an alternate dual-based proof to Appendix~\ref{sec:stable-curvature_details}. 

Theorem \ref{thm:curvature} shows that on a $(2,1,\psi)$-expansion stable instance, small objective gap $\dot{\theta}{x} - \dot{\theta}{\bx}$ implies small distance $||x_V-\bx_V||_1$ in solution space. Although this holds for any $x \in L^*(G)$, we will be interested in $x$ that are LP solutions to an observed, corrupted version of the stable instance.

We now show that if the observed instance has a nearby stable instance, then the LP solution for the observed instance has small Hamming error. For any two instances $\hat{\theta} = (\hat{c},\hat{w})$ and $\bar{\theta} = (\bar{c}, \bar{w})$ on the same graph $G$, the metric between them $d(\hat{\theta},\bar{\theta}) \coloneqq \sup_{x \in L^*(G)} \abs{ \dot{\hat{\theta}}{x}  - \dot{\bar{\theta}}{x}}$.

\begin{restatable}[LP solution is good if there is a nearby stable instance]{corollary}{deviation}\label{cor:deviation}
Let $\hx^{MAP}$ and $\hx$ be the MAP and local LP solutions to an observed instance $\obsins$. Also, let $\bar{x}$ be the MAP solution for a latent $(2,1,\psi)$-expansion stable instance $\stabins$. If $\hat{\theta} = (\hat{c},\hat{w})$ and $\bar{\theta} = (\bar{c}, \bar{w})$, \[ \frac{1}{2}\norm{\hx_V - \hx^{MAP}_V}_1 \le \frac{2d(\hat{\theta}, \bar{\theta})}{\psi} + \frac{1}{2}\norm{\hx_V^{MAP} - \bx_V}_1. \]
\end{restatable}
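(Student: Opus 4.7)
The plan is to combine the curvature result (Theorem~\ref{thm:curvature}) applied to the latent stable instance $\bar\theta$ with an objective-gap argument that uses the LP optimality of $\hx$ on the observed instance and the definition of $d(\hat\theta,\bar\theta)$. I would first invoke the triangle inequality to reduce the target bound. Namely,
\[
\tfrac12\norm{\hx_V - \hx^{MAP}_V}_1 \;\le\; \tfrac12\norm{\hx_V - \bx_V}_1 + \tfrac12\norm{\bx_V - \hx^{MAP}_V}_1,
\]
so it suffices to establish $\tfrac12\norm{\hx_V - \bx_V}_1 \le 2 d(\hat\theta,\bar\theta)/\psi$.

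Next, since $\hx$ is a vertex solution of the local LP (and integer solutions belong to $L^*(G)$), by Claim~\ref{claim:Lstar} we have $\hx \in L^*(G)$. Because the latent instance $\stabins$ is $(2,1,\psi)$-expansion stable with MAP solution $\bx$, Theorem~\ref{thm:curvature} applied with $\theta=\bar\theta$ yields
\[
\tfrac12\norm{\hx_V - \bx_V}_1 \;\le\; \tfrac1{\psi}\bigabs{\dot{\bar\theta}{\hx} - \dot{\bar\theta}{\bx}}.
\]
The remaining task is therefore to bound the objective gap $\dot{\bar\theta}{\hx} - \dot{\bar\theta}{\bx}$ on the \emph{latent} instance by $2d(\hat\theta,\bar\theta)$, using only information about optimality on the \emph{observed} instance.

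To do this, I would insert and subtract $\dot{\hat\theta}{\hx}$ and $\dot{\hat\theta}{\bx}$ and split the gap:
\[
\dot{\bar\theta}{\hx} - \dot{\bar\theta}{\bx} = \bigl(\dot{\bar\theta - \hat\theta}{\hx}\bigr) + \bigl(\dot{\hat\theta}{\hx} - \dot{\hat\theta}{\bx}\bigr) + \bigl(\dot{\hat\theta - \bar\theta}{\bx}\bigr).
\]
The middle term is nonpositive because $\hx$ minimizes $\dot{\hat\theta}{\cdot}$ over $L(G)$ and $\bx$ is feasible. The gap is also nonnegative (since $\bx$ is LP-optimal on $\bar\theta$ by Theorem~\ref{thm:lptes}, and $\hx \in L^*(G) \subseteq L(G)$ is feasible), so we may drop the absolute value. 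Each of the two remaining terms is bounded in magnitude by $d(\hat\theta,\bar\theta)$, since $\hx,\bx \in L^*(G)$ and $d(\hat\theta,\bar\theta) = \sup_{x \in L^*(G)} |\dot{\hat\theta}{x} - \dot{\bar\theta}{x}|$ by definition. Summing, the latent objective gap is at most $2d(\hat\theta,\bar\theta)$, which closes the loop.

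There is no substantial obstacle here; the only subtlety is making sure both $\hx$ and $\bx$ lie in $L^*(G)$ so that the metric $d(\hat\theta,\bar\theta)$ applies to the perturbation terms, and that the middle term has the correct sign via LP optimality of $\hx$ on $\hat\theta$. Both facts follow from Claim~\ref{claim:Lstar} and the definition of $\hx$. Substituting the bound $|\dot{\bar\theta}{\hx} - \dot{\bar\theta}{\bx}| \le 2 d(\hat\theta,\bar\theta)$ into the curvature inequality and then applying the triangle inequality yields the claim.
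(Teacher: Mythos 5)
Your proposal is correct and follows essentially the same route as the paper's proof: a triangle inequality to reduce to bounding $\tfrac12\norm{\hx_V-\bx_V}_1$, Theorem~\ref{thm:curvature} applied on the latent instance, and a bound of the latent objective gap by $2d(\hat\theta,\bar\theta)$ via LP optimality of $\hx$ on $\hat\theta$ and the definition of $d$ (your three-term decomposition is just the paper's chain of inequalities written out). The only cosmetic remark is that $\hx \in L^*(G)$ follows from the first part of Claim~\ref{claim:Lstar} (LP minimizers lie in $L^*(G)$), not from integrality of $\hx$, which you also invoke, so the argument stands.
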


We defer the proof of this corollary to Appendix~\ref{sec:stable-curvature_details}.

\section{Generative model for noisy stable instances}\label{sec:random-model}
In the previous section, we showed that if an instance $(G, \hc, \hw)$ is ``close'' to a $(2,1,\psi)$-expansion stable instance $(G,\bc, \bw)$ (i.e., the LP solution $\hat{x}$ to $(G, \hc, \hw)$ has good objective in  $(G,\bc, \bw)$), then $\norm{\hx - \bx}$ is small, where $\bx$ is the MAP solution to the stable instance. In this section, we give a natural generative model for $(G, \hc, \hw)$, based on randomly corrupting $(G,\bc,\bw)$, in which $\hat{x}$ has good objective in $(G,\bc,\bw)$ with high probability. Together with the curvature result from the previous section (Theorem \ref{thm:curvature}), this implies that the LP relaxation, run on the noisy instances $(G,\hc,\hw)$, approximately recovers $\bx$ with high probability.

We now describe our generative model for the problem instances, which starts with an arbitrary stable instance and perturbs it with random additive perturbations to the edge costs and node costs (of potentially varying magnitudes). 
The random perturbations reflect possible uncertainty in the edge costs and node costs of the Markov random field.
We will assume the random noise comes from any distribution that is sub-Gaussian\footnote{All of the results that follow can also be generalized to sub-exponential random variables; however for convenience, we restrict our attention to sub-Gaussians.}. However, there is a small technicality: the edge costs need to be positive (node costs can be negative). For this reason we will consider truncated sub-Gaussian random variables for the noise for the edge weights. We define sub-Gaussian and truncated sub-Gaussian random variables in Appendix~\ref{sec:random-model_details}.

\paragraph{Generative Model:}
We start with an instance $(G,\bar{c},\bar{w})$ that is $(2, 1, \psi)$-expansion stable, and perturb the edge costs and node costs independently. Given any instance $(G,\bar{c},\bar{w})$, an instance $(G,\hat{c},\hat{w})$ from the model is generated as follows:

\begin{enumerate}[noitemsep,nolistsep]
    \item For all node-label pairs $(u,i),\ \hat{c}(u,i) = \bar{c}(u,i) + \tc(u,i)$, where $\tc(u,i)$ is sub-Gaussian with mean $0$ and parameter $\sigma_{u,i}$.
    \item For all edges $(u,v),\ \hat{w}(u,v) = \bar{w}(u,v) + \tw(u,v)$, where $\tw(u,v)$ is an independent r.v. that is $(-w(u,v),\gamma_{u,v})$-truncated sub-Gaussian with mean $0$.
    \item $(G, \hat{c}, \hat{w})$ is the observed instance.
\end{enumerate}

By the definition of our model, the edge weights $\hat{w}(u,v) \ge 0$ for all $(u,v) \in E$. The parameters of the model are the unperturbed instance $(G,\bar{c},\bar{w})$, and the noise parameters $\set{\gamma_{u,v},\sigma_{u,i}}_{u,v \in V, i \in [k]}$. 
On the one hand, the above model captures a natural average-case model for the problem. For a fixed ground-truth solution $x^*: V \to [k]$, consider the stable instance $(H,c,w)$ where $w^*_{uv}=2$ for all $u,v$ in the same cluster (i.e., $x^*(u) = x^*(v)$) and $w^*_{uv}=1$ otherwise; and with $c^*(u,i)=1$ if $x^*(u)=i$, and $c^*(u,i)=1+\psi$ otherwise. The above noisy stable model with stable instance $(H,c,w)$ generates instances that are reminiscent of (stochastic) block models, with additional node costs. On the other hand, the above model is much more general, since we can start with {\em any} stable instance $(G,c,w)$.   

With high probability over the random corruptions of our stable instance, the local LP on the corrupted instance approximately recovers the MAP solution $\bar{x}$ of the stable instance. The key step in the proof of this theorem is showing that, with high probability, the observed instance is close to the latent stable instance in the metric we defined earlier.

\begin{restatable}[$d(\hat{\theta},\bar{\theta})$ is small w.h.p. ]{lemma}{dswhp}\label{lem:dswhp}
There exists a universal constant $c < 1$ such that for any instance in the above model, with probability at least $1-o(1)$, 
\begin{equation*}
    \smashoperator{\sup_{x \in L^*(G)}} \abs{\dot{\hat{\theta}}{x}  - \dot{\bar{\theta}}{x}}\le c \sqrt{nk} \sqrt{\sum_{u,i}\sigma_{u,i}^2 + \frac{k^2}{4}\sum_{uv}\gamma_{u,v}^2}
\end{equation*}
\end{restatable}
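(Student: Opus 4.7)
The plan is to reduce the supremum to bounds on norms of the noise vectors via Cauchy--Schwarz, and then apply concentration for sums of squared (truncated) sub-Gaussians. First, I would write
\[
  \dot{\hat\theta}{x}-\dot{\bar\theta}{x} \;=\; N(x) + E(x),
\]
where $N(x)=\sum_{u,i}\tilde c(u,i)\,x_u(i)$ is the node-noise contribution and $E(x)=\sum_{(u,v)\in E,\,i\ne j}\tilde w(u,v)\,x_{uv}(i,j)$ is the edge-noise contribution (with $\tilde c = \hat c - \bar c$ and $\tilde w = \hat w - \bar w$). I would bound $\sup_{x\in L^*(G)}|N(x)|$ and $\sup_{x\in L^*(G)}|E(x)|$ separately and combine by the triangle inequality.

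For the node term, since $x_u(i)\in[0,1]$,
\[
  \sup_{x\in L^*(G)} |N(x)| \;\le\; \sum_{u,i}|\tilde c(u,i)| \;\le\; \sqrt{nk}\,\|\tilde c\|_2
\]
by Cauchy--Schwarz over the $nk$ node-label coordinates. Since $\|\tilde c\|_2^2$ is a sum of $nk$ squares of independent mean-zero sub-Gaussians with parameters $\sigma_{u,i}$, a Hanson--Wright / Bernstein tail bound gives $\|\tilde c\|_2^2 \le C \sum_{u,i}\sigma_{u,i}^2$ with probability $1-o(1)$. For the edge term, on $L^*(G)$ we have $\sum_{i \ne j}x_{uv}(i,j) = z_{uv} = \tfrac{1}{2}\|x_u-x_v\|_1$. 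Using $\|v\|_1 \le \sqrt{k}\,\|v\|_2$ for $v\in\mathbb R^k$ and applying Cauchy--Schwarz first over edges (for each fixed label $i$) and then over labels, together with $\|x_u\|_2 \le 1$ on the simplex, produces a bound of the form $\sup_x|E(x)| \le \tfrac{k}{2}\sqrt{nk}\,\|\tilde w\|_2$. Concentration, together with the fact that truncation does not inflate the sub-Gaussian norm, yields $\|\tilde w\|_2^2 \le C\sum_{(u,v)}\gamma_{u,v}^2$ with probability $1-o(1)$. Combining the two estimates via $\sqrt{a}+\sqrt{b}\le \sqrt{2(a+b)}$ and absorbing the absolute constants into a single $c<1$ gives the stated inequality.

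The main obstacle is the edge-term bound: a naive Cauchy--Schwarz over the $m$ edges produces a $\sqrt{m}$ factor that is not of the form stated in the lemma. The key maneuver is to re-express $z_{uv}$ as an $\ell^1$-distance between two simplex vectors, so that the edge sum can be controlled through the $nk$ node-label coordinates rather than through $m$, and the simplex constraints $\|x_u\|_2\le 1$ do the work of eliminating the edge count. Getting the precise $k^2/4$ factor inside the square root requires care with the $\ell^1$-to-$\ell^2$ conversion on the length-$k$ label vector $(x_u(i)-x_v(i))_{i\in[k]}$ and with the order in which Cauchy--Schwarz is applied (edges first, then labels). A secondary subtlety is that the edge noise is truncated sub-Gaussian rather than sub-Gaussian, but this affects only the concentration step and is absorbed by the fact that truncation preserves the sub-Gaussian parameter up to an absolute constant.
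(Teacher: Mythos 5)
There is a genuine gap, and it is exactly at the step you flag as the ``key maneuver.'' Your strategy is to prove a \emph{deterministic} bound $\sup_{x\in L^*(G)}|E(x)|\le \tfrac{k}{2}\sqrt{nk}\,\|\tilde w\|_2$ and then inject randomness only through the concentration of $\|\tilde w\|_2$. No such deterministic bound holds. If you carry out your own chain honestly, after $\|x_u-x_v\|_1\le\sqrt{k}\,\|x_u-x_v\|_2$ and Cauchy--Schwarz over edges you are left with $\tfrac{\sqrt k}{2}\,\|\tilde w\|_2\,\bigl(\sum_{(u,v)\in E}\|x_u-x_v\|_2^2\bigr)^{1/2}$, and $\sum_{(u,v)\in E}\|x_u-x_v\|_2^2\le 2\sum_u \deg(u)\|x_u\|_2^2\le 2 d_{\max} n$; the simplex bound $\|x_u\|_2\le1$ cannot eliminate the degree factor because each node is counted once per incident edge. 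So the edge count (equivalently $\sqrt{d_{\max}}$, i.e.\ essentially $\sqrt m$) survives. Moreover the target inequality is false as a deterministic statement: on the complete graph with $k=2$ and all $\tilde w(u,v)=\gamma>0$, a balanced labeling $x$ gives $E(x)\approx \gamma n^2/4=\Theta(\|\tilde w\|_1)$, while $\tfrac{k}{2}\sqrt{nk}\,\|\tilde w\|_2=\Theta(\gamma n^{3/2})$, so for large $n$ the claimed bound is violated. The lemma is intrinsically a statement about sign cancellations of the noise holding \emph{uniformly} over $x$, and randomness used only through $\|\tilde c\|_2,\|\tilde w\|_2$ cannot capture that; this is precisely the $\sqrt d$ saving the paper attributes to its argument. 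The paper's proof instead shows, via an expectation-preserving per-label threshold rounding $\calR$, that $\dot{\hat\theta}{x}-\dot{\bar\theta}{x}=\E[\hat f(\calR(x))-\bar f(\calR(x))]$ for every $x\in L^*(G)$, so the supremum over the polytope is dominated by the supremum over binary vectors in $\{0,1\}^{nk}$; for each such vector the deviation is sub-Gaussian with variance proxy $\sum_{u,i}\sigma_{u,i}^2+\tfrac{k^2}{4}\sum_{uv}\gamma_{uv}^2$ (the edge coefficient is at most $k/2$), and a union bound over the $2^{nk}$ points gives the $\sqrt{nk}$ factor.

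A secondary problem is the constant: the lemma asserts a universal $c<1$, and in the paper's argument the admissible range is already tight ($c$ must exceed $\sqrt{\ln 2}\approx 0.83$ from the union bound over $2^{nk}$ points). Your plan absorbs several absolute constants --- the second-moment versus sub-Gaussian-parameter constant in the Bernstein step for $\|\tilde c\|_2^2$ and $\|\tilde w\|_2^2$, and the $\sqrt 2$ from $\sqrt a+\sqrt b\le\sqrt{2(a+b)}$ --- and even the node term alone already saturates $\sqrt{nk}\sqrt{\sum_{u,i}\sigma_{u,i}^2}$ up to a factor close to $1$ in the Gaussian case, so there is no room to absorb anything into $c<1$. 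Even setting the edge-term issue aside, the proposal would at best prove the bound with an unspecified constant $C>1$, which is a weaker statement than the lemma.
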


\begin{proof}[Proof (sketch)]
For any \emph{fixed} $x \in L^*(G)$, we can show that $\abs{\dot{\hat{\theta}}{x}  - \dot{\bar{\theta}}{x}}$ is small w.h.p. using a standard large deviations bound for sums of sub-Gaussian random variables. The main technical challenge is in showing that the supremum over all feasible solutions is small w.h.p. The standard approach is to perform a union bound over an $\epsilon$-net of feasible LP solutions in $L^*$. However, this gives a loose bound. Instead, we upper bound the supremum by using a rounding algorithm for LP solutions in $L^*(G)$, and union bound only over the discrete solutions output by the rounding algorithm. This gives significant improvements over the standard approach; for example, in a $d$-regular graph with equal variance parameter $\gamma_{uv}$, this saves a factor of $\sqrt{d}$ apart from logarithmic factors in $n$.
\end{proof}

We defer the details to Appendix~\ref{sec:random-model_details}. 
The above proof technique that uses a rounding algorithm to provide a deviation bound for a continuous relaxation is similar to the analysis of SDP relaxations for average-case problems~\citep[see e.g.,][]{MMVfas,guedon2016community}. 
The above lemma, when combined with Theorem~\ref{thm:curvature} gives the following guarantee. 

\begin{restatable}[LP solution is nearly persistent]{theorem}{apmap}\label{thm:apmap}
Let $\hx$ be the local LP solution to the observed instance $\obsins$ and $\bar{x}$ be the MAP solution to the latent $(2,1,\psi)$-expansion stable instance $\stabins$. With high probability over the random noise, \[ \frac{1}{2}\norm{\hx_V - \bar{x}_V}_1 \le \frac{2}{\psi} \cdot c\sqrt{nk} \cdot \sqrt{\sum_{u,i}\sigma_{u,i}^2 + k^2\sum_{uv}\gamma_{u,v}^2}\]

\end{restatable}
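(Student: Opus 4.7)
The plan is to combine the curvature result (Theorem~\ref{thm:curvature}) with the deviation bound (Lemma~\ref{lem:dswhp}) via a standard ``transfer'' argument in objective value. More specifically, I would use the fact that $\hx$ is LP-optimal on $\hat{\theta}$ to show that $\hx$ has near-optimal objective on the \emph{latent} instance $\bar{\theta}$, and then invoke the curvature of $\bar{\theta}$ around $\bar{x}$ to translate this into a Hamming bound.

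First, observe that $\bar{x}$ (as an integer labeling) lies in $L(G)$ and hence is feasible for the LP on $\obsins$, so LP-optimality of $\hx$ gives $\iprod{\hat{\theta}, \hx} \le \iprod{\hat{\theta}, \bar{x}}$. Next, I would write the chain
\begin{align*}
\iprod{\bar{\theta}, \hx} - \iprod{\bar{\theta}, \bar{x}} &= \bigl(\iprod{\bar{\theta}, \hx} - \iprod{\hat{\theta}, \hx}\bigr) \\
&\quad + \bigl(\iprod{\hat{\theta}, \hx} - \iprod{\hat{\theta}, \bar{x}}\bigr) \\
&\quad + \bigl(\iprod{\hat{\theta}, \bar{x}} - \iprod{\bar{\theta}, \bar{x}}\bigr),
\end{align*}
where the middle term is $\le 0$ by LP-optimality of $\hx$ on the observed instance, and the first and third terms are each bounded by $d(\hat{\theta}, \bar{\theta})$ in absolute value (using the definition of the metric and the fact that both $\hx$ and $\bar{x}$ lie in $L^*(G)$ by Claim~\ref{claim:Lstar}). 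So $\iprod{\bar{\theta}, \hx} - \iprod{\bar{\theta}, \bar{x}} \le 2\, d(\hat{\theta}, \bar{\theta})$. Since $\bar{x}$ is the MAP (and hence LP) optimum on the latent instance, the left side is also nonnegative, so $\bigl|\iprod{\bar{\theta}, \hx} - \iprod{\bar{\theta}, \bar{x}}\bigr| \le 2\, d(\hat{\theta}, \bar{\theta})$.

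Now I would apply Theorem~\ref{thm:curvature} to $\hx \in L^*(G)$ viewed as a point on the latent $(2,1,\psi)$-expansion stable instance $\stabins$, which yields
\[
\tfrac12 \norm{\hx_V - \bar{x}_V}_1 \;\le\; \tfrac{1}{\psi} \bigl|\iprod{\bar{\theta}, \hx} - \iprod{\bar{\theta}, \bar{x}}\bigr| \;\le\; \tfrac{2\,d(\hat{\theta}, \bar{\theta})}{\psi}.
\]
Finally, Lemma~\ref{lem:dswhp} gives that with probability $1-o(1)$ the quantity $d(\hat{\theta}, \bar{\theta})$ is at most $c\sqrt{nk}\sqrt{\sum_{u,i}\sigma_{u,i}^2 + \tfrac{k^2}{4}\sum_{uv}\gamma_{u,v}^2}$, and plugging this in (and absorbing the harmless factor of $1/4$ into the looser bound with $k^2$) yields the claimed inequality. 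The only conceptually delicate step in this plan is justifying the first chain — in particular, one must check that both $\hx$ and $\bar{x}$ are in $L^*(G)$ so that the supremum defining $d(\hat\theta,\bar\theta)$ genuinely applies to them; this is handled by Claim~\ref{claim:Lstar}. All the probabilistic difficulty has already been absorbed into Lemma~\ref{lem:dswhp}, so no additional concentration arguments are needed here.
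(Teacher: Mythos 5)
Your proposal is correct and follows essentially the same route as the paper's proof: the same three-term transfer chain using LP-optimality of $\hx$ on $\hat{\theta}$ and the definition of $d(\hat{\theta},\bar{\theta})$ (with $\hx,\bar{x}\in L^*(G)$), followed by Theorem~\ref{thm:curvature} and Lemma~\ref{lem:dswhp}. Your remark about absorbing the factor of $\tfrac{1}{4}$ from the lemma into the looser $k^2$ in the theorem statement is also the right way to reconcile the constants.
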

\begin{proof}
We know that for any feasible solution $x \in L(G), \dot{\bar{\theta}}{x} \ge \dot{\bar{\theta}}{\bx}$. Therefore, $\dot{\bar{\theta}}{\hx} \ge \dot{\bar{\theta}}{\bx}$. Remember that we defined $d(\hat{\theta}, \bar{\theta})$ as $\sup_{x \in L^*(G)} \abs{ \dot{\hat{\theta}}{x}  - \dot{\bar{\theta}}{x}}$. Since $\hx$ and $\bx$ are both points in $L^*(G)$,
\begin{align*}
\dot{\bar{\theta}}{\hx} \leq \dot{\hat{\theta}}{\hx} + d(\hat{\theta},\bar{\theta}) \leq \dot{\hat{\theta}}{\bx} &+ d(\hat{\theta},\bar{\theta}) \\
&\leq \dot{\bar{\theta}}{\bx} + 2d(\hat{\theta},\bar{\theta})
\end{align*}
The first and third inequalities follow from the definition of $d(\hat{\theta},\bar{\theta})$. The second inequality follows from the fact that $\hat{x}$ is the minimizer of $\dot{\bar{\theta}}{x}$ over all $x \in L(G)$.
Therefore,$0 \le \dot{\bar{\theta}}{\hx} - \dot{\bar{\theta}}{\bx} \leq 2d(\hat{\theta},\bar{\theta})$. Using this in Theorem~\ref{thm:curvature}, we get $\frac{1}{2}\norm{\hx - \bar{x}}_1 \le \frac{2d(\hat{\theta},\bar{\theta})}{\psi}$. Lemma~\ref{lem:dswhp} then gives an upper bound on $d(\hat{\theta},\bar{\theta})$ that holds w.h.p.
\end{proof}

For a $d$-regular graph in the uniform setting, we get the following useful corollary:
\begin{restatable}[MAP solution recovery for regular graphs ]{corollary}{mapreg}\label{cor:mapreg}
Suppose we have a $d$-regular graph $G$ with $\gamma_{u,v}^2 = \gamma^2$ for all edges $(u,v)$, and $\sigma_{u,i}^2 = \sigma^2$ for all vertex-label pairs $(u,i)$. Also, suppose only a fraction $\rho$ of the vertices and $\eta$ of the edges are subject to the noise. With high probability over the random noise, \[ \frac{\norm{\hx_V - \bar{x_V}}_1}{2n} \le \frac{2ck\sqrt{\rho \sigma^2 + \frac{\eta dk}{8}\gamma^2}}{\psi}\]
\end{restatable}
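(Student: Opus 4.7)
The corollary is essentially a specialization of Theorem~\ref{thm:apmap} to the $d$-regular, uniform-noise setting, so my plan is to simply substitute the explicit parameter values into the bound of Theorem~\ref{thm:apmap} and simplify. No new probabilistic ingredient is needed: the high-probability event is inherited from Lemma~\ref{lem:dswhp} invoked inside the proof of Theorem~\ref{thm:apmap}.

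The first step is to count how much each sum in the general bound contributes under the stated assumptions. Since only a fraction $\rho$ of the vertices are noisy, and for each such vertex $u$ we have $\sigma_{u,i}^2=\sigma^2$ for every $i\in[k]$ (while the non-noisy vertices contribute $0$), we get
\begin{equation*}
    \sum_{u,i}\sigma_{u,i}^2 \;=\; \rho\, n\, k\, \sigma^2.
\end{equation*}
For the edges, $G$ is $d$-regular so $|E|=nd/2$; a fraction $\eta$ of these edges are noisy with $\gamma_{u,v}^2=\gamma^2$, yielding
\begin{equation*}
    \tfrac{k^2}{4}\sum_{(u,v)\in E}\gamma_{u,v}^2 \;=\; \tfrac{k^2}{4}\cdot \eta\cdot \tfrac{nd}{2}\cdot \gamma^2 \;=\; \frac{\eta\, n\, d\, k^2}{8}\,\gamma^2.
\end{equation*}

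The second step is to plug these into the bound obtained from Lemma~\ref{lem:dswhp} together with Theorem~\ref{thm:curvature} (as done inside the proof of Theorem~\ref{thm:apmap}): with high probability over the noise,
\begin{equation*}
    \tfrac{1}{2}\norm{\hx_V-\bx_V}_1 \;\le\; \frac{2}{\psi}\cdot c\sqrt{nk}\cdot\sqrt{\,\rho\, n\, k\, \sigma^2 \;+\; \tfrac{\eta\, n\, d\, k^2}{8}\,\gamma^2\,}.
\end{equation*}
Factoring $\sqrt{nk}$ out of the square root combines with the leading $\sqrt{nk}$ to give a factor of $nk$, leaving $\sqrt{\rho\sigma^2 + \tfrac{\eta dk}{8}\gamma^2}$ inside. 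Dividing both sides by $n$ then produces exactly the corollary's bound
\begin{equation*}
    \frac{\norm{\hx_V-\bx_V}_1}{2n} \;\le\; \frac{2ck}{\psi}\sqrt{\,\rho\,\sigma^2 + \tfrac{\eta dk}{8}\,\gamma^2\,}.
\end{equation*}

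There is no real obstacle here; the only thing to keep straight is the bookkeeping of which parameters (node vs.\ edge, fraction of noisy elements, factor of $k^2/4$ from Lemma~\ref{lem:dswhp}) go where, and the use of $|E|=nd/2$ for the $d$-regular graph. The high-probability guarantee and the sub-Gaussian tail work are entirely absorbed in the application of Theorem~\ref{thm:apmap}.
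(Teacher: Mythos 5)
Your proposal is correct and follows essentially the same route as the paper: substitute $\sum_{u,i}\sigma_{u,i}^2 = \rho n k\sigma^2$ and $\frac{k^2}{4}\sum_{uv}\gamma_{u,v}^2 = \frac{\eta n d k^2}{8}\gamma^2$ (using $|E| = nd/2$) into the bound from Theorem~\ref{thm:apmap}, factor out $nk$, and divide by $n$. You also correctly used the $k^2/4$ factor from Lemma~\ref{lem:dswhp}, which is what the paper's own calculation relies on to produce the $\eta dk/8$ term.
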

Note that when $\hat{x}$ is an integer solution, the left-hand-side is the fraction of vertices misclassified by $\hat{x}$.

\section{Finding nearby stable instances}\label{sec:algorithm}
In this section, we describe an algorithm for efficiently finding $(2,1,\psi)$-expansion stable instances that are ``close'' to an observed instance $\obsins$. This algorithm allows us to check whether we can plausibly model real-world instances as ``corrupted'' versions of a $(2,1,\psi)$-expansion stable instance.

In addition to an observed instance $\obsins$, the algorithm takes as input a ``target'' labeling $x^t$. For example, $x^t$ could be a MAP solution of the observed instance. Surprisingly, once given a target solution, this algorithm is efficient.

We want to search over costs $c$ and weights $w$. 
The broad goal to solve the following optimization problem:
\begin{align}
\label{eqn:orig-searchprob}
\minimize_{c\ge 0,w\ge 0} \qquad & f(c,w)\\
\text{subject to} \qquad & (G,c,w) \text{ is } (2,1,\psi)\text{-expansion stable}\nonumber\\
\qquad & \text{with MAP solution } x^t\nonumber,
\end{align}
where $f(c,w)$ is any convex function of $c$ and $w$. 
In particular, we will use $f_1(c,w) = ||(c,w)-(\hc,\hw)||_1$
and $f_2(c,w) = \frac{1}{2}||(c,w) - (\hc,\hw)||_2^2$ for minimizing the L1 and L2 distances between to the observed instance.
The output of this optimization problem will give the closest objective vector $(\bc,\bw)$ for which the instance $\stabins$ is $(2,1,\psi)$-expansion stable.
If the optimal objective value of this optimization is 0, the observed instance $\obsins$ is $(2,1,\psi)$-expansion stable.

There is always a feasible $(c,w)$ for \eqref{eqn:orig-searchprob} (see Appendix~\ref{sec:algorithm_details} for a proof), but it may change many weights and costs. Next we derive an efficiently-solvable reformulation of \eqref{eqn:orig-searchprob}.
In the next section, we find that the changes to $\hc$ and $\hw$ required to find a $(2,1,\psi)$-expansion stable instance are relatively sparse in practice.
\begin{theorem}
\label{thm:eff-solvable}
The optimization problem \eqref{eqn:orig-searchprob} can be expressed as a convex minimization problem over a polytope described by $poly(n,m,k)$ constraints. When $f(c,w) = ||(c,w)-(\hc,\hw)||_1$, \eqref{eqn:orig-searchprob} can be expressed as a linear program.
\end{theorem}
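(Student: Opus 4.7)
The plan is to reduce the stability constraint in~(\ref{eqn:orig-searchprob}) to a polynomial-sized collection of linear inequalities in $(c,w)$ together with auxiliary flow variables introduced through LP duality. I would first observe that, within the $(2,1,\psi)$ perturbation family, the worst-case adversarial choice for the target $x^t$ is determined entirely by $x^t$ itself and is independent of the particular expansion being compared against: the adversary raises each $c'(u, x^t(u))$ by exactly $\psi$ and leaves the other $c'(u,i) = c(u,i)$; it keeps $w'(u,v) = w(u,v)$ on edges cut by $x^t$ and sets $w'(u,v) = w(u,v)/2$ on uncut edges. This is exactly the worst-case perturbation $\hat\theta$ used in the sketch of Theorem~\ref{thm:curvature}, and each coordinate of the resulting $\hat\theta(c,w)$ is an explicit linear function of $(c,w)$. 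Then $(G,c,w)$ is $(2,1,\psi)$-expansion stable with MAP $x^t$ iff for every label $\alpha \in [k]$ and every $\alpha$-expansion $x' \ne x^t$, $\langle \hat\theta(c,w), x' \rangle > \langle \hat\theta(c,w), x^t \rangle$; equivalently, $x^t$ is the strictly unique minimum-energy $\alpha$-expansion of itself under $\hat\theta(c,w)$. (Theorem~\ref{thm:lptes} then ensures that this condition already implies $x^t$ is the MAP of $(G,c,w)$, so no separate MAP constraint is needed.)

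Second, for each fixed $\alpha \in [k]$ the minimum-energy $\alpha$-expansion of $x^t$ under the Potts objective $\hat\theta(c,w)$ is computable as an $s$-$t$ min cut in the standard Boykov--Veksler--Zabih auxiliary graph $G_\alpha$, whose node and edge capacities are explicit linear functions of $(c,w)$. The ``no-move'' cut corresponds to $x^t$ itself and has value $\langle \hat\theta(c,w), x^t \rangle$, so the stability condition is equivalent to requiring this cut to be the unique min cut of $G_\alpha$. Taking the LP dual, $\mathrm{mincut}(G_\alpha) = \mathrm{maxflow}(G_\alpha)$, so the inequality $\mathrm{mincut}(G_\alpha) \ge \langle \hat\theta(c,w), x^t \rangle$ can be encoded as: there exist non-negative flow variables $\{f^\alpha_e\}_{e \in E(G_\alpha)}$ satisfying flow conservation at every internal node, capacity bounds linear in $(c,w)$, and total $s$-$t$ value at least $\langle \hat\theta(c,w), x^t \rangle$. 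Each such block contributes $O(n+m)$ variables and $O(n+m)$ linear inequalities; summing over $\alpha$ gives $\poly(n,m,k)$ variables and constraints overall.

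Combining these blocks with the non-negativity constraints $c,w \ge 0$, the feasible region in joint $(c, w, \{f^\alpha\}_\alpha)$-space is an intersection of finitely many half-spaces, i.e., a polytope described by $\poly(n,m,k)$ linear constraints. Minimizing any convex $f(c,w)$ over the projection to $(c,w)$ coordinates is therefore a convex program over a polytope, which is the first claim. For the $\ell_1$ case $f(c,w) = \|(c,w) - (\hat c, \hat w)\|_1$, I would introduce the standard coordinate-wise slacks satisfying $s_i \ge (c,w)_i - (\hat c, \hat w)_i$ and $s_i \ge (\hat c, \hat w)_i - (c,w)_i$ and minimize $\sum_i s_i$; this linearizes the objective and yields an LP.

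The main technical hurdle is the strict inequality in the definition of expansion stability: the max-flow encoding only produces $\ge$, so the natural program optimizes over the closure of the stable region. I would handle this either by adding a small $\epsilon$-slack $\mathrm{mincut}(G_\alpha) \ge \langle \hat\theta(c,w), x^t \rangle + \epsilon$ or by arguing that an optimum of the closed program can be converted to a genuinely $(2,1,\psi)$-expansion stable instance via an arbitrarily small perturbation of $(c,w)$ without significantly increasing $f$, since the non-strict boundary is measure-zero in weight space. A secondary point to verify is that the BVZ edge capacities can be kept non-negative (needed for the max-flow LP to coincide with min cut): this is ensured by the constraints $c,w \ge 0$ in~(\ref{eqn:orig-searchprob}) together with the usual per-vertex reparametrization of node costs by a labeling-independent constant, which preserves both sides of the stability inequality.
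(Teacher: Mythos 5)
Your proposal is correct and follows essentially the same route as the paper: reduce the stability condition to the single adversarial $(2,1,\psi)$-perturbation (which is linear in $(c,w)$), express the best $\alpha$-expansion for each label via the Boykov--Veksler--Zabih min-cut LP, and replace each inner minimization by an LP-duality (max-flow) witness to obtain $\poly(n,m,k)$ linear constraints, with the $\ell_1$ objective linearized by slack variables. Your added care about the strict-versus-nonstrict inequality and the non-negativity of the auxiliary capacities are minor refinements of the same argument, not a different approach.
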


 \begin{table*}[ht]
     \centering
     \caption{Results from the output of \eqref{eqn:alg} on three stereo vision instances. More details in Appendix \ref{sec:experiments_details}.}
     \begin{tabular}{lccccc}
          Instance & Costs changed & Weights changed & (normalized) Recovery error bound & $||\hat{x}_V - \hat{x}^{MAP}_V||_1/2n$ \\
          \toprule
          ${\tt tsukuba}$ & 4.9\% & 2.3\% & 0.0518 & 0.0027\\
          ${\tt venus}$ & 22.5\% & 1.3\%  & 0.0214 & 0.0016\\
          ${\tt cones}$ & 1.2\% & 2.1\% & 0.0437 & 0.0022\\
          \bottomrule
     \end{tabular}
     \label{tbl:boundtable}
 \end{table*}
 
The instance $(G,c,w)$ is $(2,1,\psi)$-expansion stable if $x^t$ is better than every expansion $y$ of $x^t$ in every $(2,1,\psi)$-perturbation of $(c,w)$. 
Let $\calE$ be the set of all expansions of the target solution $x^t$.
Then for all $\theta'$ within a $(2,1,\psi)$-perturbation of $\theta=(c,w)$, we should have that $\dot{\theta'}{x^t} \le \min_{y\in \calE}\dot{\theta'}{x^t}$.
It is enough to check the \emph{adversarial} $(2,1,\psi)$-perturbation $\theta_{adv}$. 
This perturbation makes the target solution $x^t$ as bad as possible. 
If $x^t$ is better than all the expansions $y\in \calE$ in this perturbation, it is better than all $y\in \calE$ in every $(2,1,\psi)$-perturbation (see Appendix~\ref{sec:algorithm_details} for a proof).
We set $\theta_{adv} = (c_{adv}, w_{adv})$ as:
\begin{equation*}
    c_{adv}(u,i) = \begin{cases}
    c(u,i) + \psi & i = x^t(u),\\
    c(u,i) & \text{otherwise}.
    \end{cases}
\end{equation*}
\begin{equation*}
    w_{adv}(u,v) = \begin{cases}
    w(u,v) & x^t(u) \ne x^t(u),\\
    \frac{1}{2}w(u,v) & \text{otherwise}.
    \end{cases}
\end{equation*}
The target solution $x^t$ is fixed, so $\dot{\theta_{adv}}{x^t}$ is a linear function of the optimization variables $c$ and $w$.
 For $\alpha \in [k]$, let $\calE^{\alpha}$ be the set of $\alpha$-expansions of $x^t$. Because $\calE = \cup_{\alpha \in [k]} \calE^{\alpha}$, we have $\dot{\theta'}{x^t} \le \min_{y\in \calE}\dot{\theta'}{x^t}$ if and only if $\dot{\theta'}{x^t} \le \min_{y\in \calE^{\alpha}}\dot{\theta'}{x^t}$ for all $\alpha \in [k]$.
We can simplify the original optimization problem to:
\begin{align*}
\minimize_{c\ge 0,w\ge 0} \qquad & f(c,w)\\
\text{subject to} \qquad & \dot{\theta_{adv}}{x^t} \le \min_{y\in \calE^{\alpha}}\dot{\theta_{adv}}{y} \qquad \forall\ \alpha \in [k],
\end{align*}
$\theta_{adv}$ is a linear function of $c,w$ as defined above. 
We now use the structure of the sets $\calE^{\alpha}$ to simplify the constraints. 
The optimal value of $\min_{y\in \calE^{\alpha}}\dot{\theta_{adv}}{y}$ is the objective value of the best (w.r.t. $\theta_{adv}$) $\alpha$-expansion of $x^t$.
The best $\alpha$-expansion of a fixed solution $x^t$ can be found by solving a minimum cut problem in an auxiliary graph $G^{x^t}_{aux}(\alpha)$ whose weights depend on linearly on the objective $\theta_{adv}$, and therefore depend linearly on our optimization variables $(c,w)$ \citep[Section 4]{BoyVekZab01}. 
Therefore, the optimization problem $\min_{y\in \calE^{\alpha}}\dot{\theta_{adv}}{y}$ can be expressed as a minimum cut problem. 
Because this minimum cut problem can be written as a linear program, we can rewrite each constraint as
\begin{equation}
\label{eqn:primalconstraint}
\dot{\theta_{adv}}{x^t} \le \min_{z: A(\alpha)z=b(\alpha), z\ge 0}\dot{\theta_{adv}}{z},
\end{equation}
where $\{A(\alpha)z = b(\alpha),\ z\ge 0\}$ is the feasible region of the standard metric LP corresponding to the minimum cut problem in $G^{x^t}_{aux}(\alpha)$. The number of vertices in $G^{x^t}_{aux}(\alpha)$ and the number of constraints in $A(\alpha) z = b(\alpha)$ is $\text{poly}(m,n,k)$ for all $\alpha$.
We now derive an equivalent linear formulation of \eqref{eqn:primalconstraint} using a careful application of strong duality. The dual to the LP on the RHS is:
\begin{align*}
\maximize_{\nu}~ &~ \dot{b(\alpha)}{\nu}, 
\text{ s.t. } 
A(\alpha)^T\nu \le \theta_{adv}.
\end{align*}
Because strong duality holds for this linear program, we have that \eqref{eqn:primalconstraint} holds if and only if there exists $\nu$ with $A(\alpha)^T\nu \le \theta_{adv}$ such that
$\dot{\theta_{adv}}{x^t} \le \dot{b(\alpha)}{\nu}$.

This is a linear constraint in $(c,w,\nu)$. By using this dual witness trick for each label $\alpha \in [k]$, we obtain:
\begin{alignat}{2}
\label{eqn:alg}
\minimize_{c\ge 0,w\ge 0,\{\nu_{\alpha}\}} \qquad & f(c,w)\\
\text{subject to} \qquad & \dot{\theta_{adv}}{x^t} \le \dot{b(\alpha)}{\nu_{\alpha}} \qquad &&\forall\ \alpha\nonumber\\
\qquad & A(\alpha)^T\nu_{\alpha} \le \theta_{adv} &&\forall\ \alpha\nonumber.
\end{alignat}
The constraints of \eqref{eqn:alg} are linear in the optimization variables $(c,w)$ and $\nu_{\alpha}$.
The dimension of $\theta_{adv}$ is $nk + mk^2$, so there are $k(nk + mk^2 + 1)$ constraints and $nk + m + \sum_{\alpha}|b(\alpha)| = poly(m,n,k)$ variables.
Because minimization of the L1 distance $f_1(c,w)$ can be encoded using a linear function and linear constraints, \eqref{eqn:alg} is a linear program in this case. 
It is clear from the derivation of \eqref{eqn:alg} that it is equivalent to \eqref{eqn:orig-searchprob}. 
This proves Theorem \ref{thm:eff-solvable}.
This formulation \eqref{eqn:alg} can easily be input into ``off-the-shelf'' convex programming packages such as Gurobi \citep{gurobi}.

\section{Numerical results}\label{sec:experiments}
Table \ref{tbl:boundtable} shows the results of running \eqref{eqn:alg} on real-world instances of MAP inference to find nearby $(2,1,\psi)$-expansion stable instances. We study \emph{stereo vision} models using images from the Middlebury stereo dataset \citep{scharstein2002taxonomy} and Potts models from \citet{tappen2003comparison}. Please see Appendix \ref{sec:experiments_details} for more details about the models and experiments.

We find, surprisingly, that only relatively sparse changes are required to make the observed instances $(2,1,\psi)$-expansion stable with  $\psi = 1$.
On these instances, we evaluate the recovery guarantees by our bound from Theorem \ref{thm:curvature} and compare it to the actual value of the recovery error $||\hx - \hx^{MAP}||_1/2n$.
In all of our experiments, we choose the \emph{target} solution $x^t$ for \eqref{eqn:alg} to be equal to the MAP solution $\hat{x}^{MAP}$ of the observed instance.
Therefore, we find a $(2,1,\psi)$-expansion stable instance that has the same MAP solution as our observed instance.
The recovery error bound given by Theorem \ref{thm:curvature} is then also a bound for the recovery error between $\hat{x}$ and $\hat{x}^{MAP}$, because $\hx^{MAP} = x^t$. On these instances, the bounds from our curvature result (Theorem \ref{thm:curvature}) are reasonably close to the actual recovery value. However, this bound uses the property that $\hat{x}$ has good objective in the stable instance and so it is still a ``data-dependent'' bound in the sense that it uses an empirically observed property of the LP solution $\hat{x}$. In Appendix \ref{sec:experiments_details}, we show how to refine Corollary \ref{cor:deviation} to give non-vacuous recovery bounds that do not depend on $\hat{x}$.
\section{Conclusion}

We studied the phenomenon of near persistence of the local LP relaxation on instances of MAP inference in ferromagnetic Potts model. We gave theoretical results, algorithms (for finding nearby stable instances) and empirical results to demonstrate that even after a $(2,1,\psi)$-perturbation stable instance is corrupted with noise, the solution to the LP relaxation is nearly persistent i.e., it approximately recovers the MAP solution.  Our theoretical results imply that the local LP approximately recovers the MAP solution when the observed instance is close to a stable instance. Our empirical results suggest that real-world instances are very close to stable instances. These results together suggest a new explanation for the near-persistency of the solution of the local LP relaxation for MAP inference in practice.

\clearpage
\ackaccepted{We would like to thank all the reviewers for their feedback. This work was supported by NSF AitF awards CCF- 1637585 and CCF-1723344. Aravind Reddy was also supported in part by NSF CCF-1955351 and HDR TRIPODS CCF-1934931. Aravindan Vijayaraghavan was also supported by NSF CCF-1652491. }

\bibliographystyle{arxiv}
\bibliography{Paper/ref.bib}

\begin{thebibliography}{35}
\providecommand{\natexlab}[1]{#1}
\providecommand{\url}[1]{\texttt{#1}}
\expandafter\ifx\csname urlstyle\endcsname\relax
  \providecommand{\doi}[1]{doi: #1}\else
  \providecommand{\doi}{doi: \begingroup \urlstyle{rm}\Url}\fi

\bibitem[Abbe(2018)]{Abbesurvey}
Abbe, E.
\newblock Community detection and stochastic block models.
\newblock \emph{Foundations and Trends® in Communications and Information
  Theory}, 14\penalty0 (1-2):\penalty0 1--162, 2018.
\newblock ISSN 1567-2190.
\newblock \doi{10.1561/0100000067}.

\bibitem[Angelidakis et~al.(2017)Angelidakis, Makarychev, and
  Makarychev]{AngMakMak17}
Angelidakis, H., Makarychev, K., and Makarychev, Y.
\newblock Algorithms for stable and perturbation-resilient problems.
\newblock In \emph{Proceedings of the 49th Annual ACM SIGACT Symposium on
  Theory of Computing}, pp.\  438--451, 2017.

\bibitem[Archer et~al.(2004)Archer, Fakcharoenphol, Harrelson, Krauthgamer,
  Talwar, and Tardos]{archer2004approximate}
Archer, A., Fakcharoenphol, J., Harrelson, C., Krauthgamer, R., Talwar, K., and
  Tardos, {\'E}.
\newblock Approximate classification via earthmover metrics.
\newblock In \emph{Proceedings of the fifteenth annual ACM-SIAM symposium on
  Discrete algorithms}, pp.\  1079--1087. Society for Industrial and Applied
  Mathematics, 2004.

\bibitem[Bilu \& Linial(2010)Bilu and Linial]{BLstable}
Bilu, Y. and Linial, N.
\newblock Are stable instances easy?
\newblock In \emph{Innovations in Computer Science}, pp.\  332--341, 2010.

\bibitem[Birchfield \& Tomasi(1998)Birchfield and Tomasi]{birchfield1998pixel}
Birchfield, S. and Tomasi, C.
\newblock A pixel dissimilarity measure that is insensitive to image sampling.
\newblock \emph{IEEE Transactions on Pattern Analysis and Machine
  Intelligence}, 20\penalty0 (4):\penalty0 401--406, 1998.

\bibitem[Boykov et~al.(2001)Boykov, Veksler, and Zabih]{BoyVekZab01}
Boykov, Y., Veksler, O., and Zabih, R.
\newblock Fast approximate energy minimization via graph cuts.
\newblock \emph{IEEE Transactions on pattern analysis and machine
  intelligence}, 23\penalty0 (11):\penalty0 1222--1239, 2001.

\bibitem[Chekuri et~al.(2001)Chekuri, Khanna, Naor, and
  Zosin]{chekuri2001approximation}
Chekuri, C., Khanna, S., Naor, J.~S., and Zosin, L.
\newblock Approximation algorithms for the metric labeling problem via a new
  linear programming formulation.
\newblock In \emph{Proceedings of the Twelfth Annual ACM-SIAM Symposium on
  Discrete Algorithms}, SODA '01, pp.\  109–118, USA, 2001.
\newblock ISBN 0898714907.

\bibitem[Dahlhaus et~al.(1992)Dahlhaus, Johnson, Papadimitriou, Seymour, and
  Yannakakis]{dahlhaus1992complexity}
Dahlhaus, E., Johnson, D.~S., Papadimitriou, C.~H., Seymour, P.~D., and
  Yannakakis, M.
\newblock The complexity of multiway cuts.
\newblock In \emph{Proceedings of the twenty-fourth annual ACM symposium on
  Theory of computing}, pp.\  241--251, 1992.

\bibitem[Globerson \& Jaakkola(2008)Globerson and
  Jaakkola]{globerson2008fixing}
Globerson, A. and Jaakkola, T.~S.
\newblock Fixing max-product: Convergent message passing algorithms for map
  lp-relaxations.
\newblock In \emph{Advances in neural information processing systems}, pp.\
  553--560, 2008.

\bibitem[Globerson et~al.(2015)Globerson, Roughgarden, Sontag, and
  Yildirim]{GRSY15}
Globerson, A., Roughgarden, T., Sontag, D., and Yildirim, C.
\newblock How hard is inference for structured prediction?
\newblock volume~37 of \emph{Proceedings of Machine Learning Research}, pp.\
  2181--2190, Lille, France, 07--09 Jul 2015. PMLR.
\newblock URL \url{http://proceedings.mlr.press/v37/globerson15.html}.

\bibitem[Gu{\'e}don \& Vershynin(2016)Gu{\'e}don and
  Vershynin]{guedon2016community}
Gu{\'e}don, O. and Vershynin, R.
\newblock Community detection in sparse networks via grothendieck’s
  inequality.
\newblock \emph{Probability Theory and Related Fields}, 165\penalty0
  (3-4):\penalty0 1025--1049, 2016.

\bibitem[Gurobi~Optimization(2020)]{gurobi}
Gurobi~Optimization, L.
\newblock Gurobi optimizer reference manual, 2020.
\newblock URL \url{http://www.gurobi.com}.

\bibitem[Kleinberg \& Tardos(2002)Kleinberg and Tardos]{KleinbergTardos02}
Kleinberg, J. and Tardos, E.
\newblock Approximation algorithms for classification problems with pairwise
  relationships: Metric labeling and markov random fields.
\newblock \emph{Journal of the ACM (JACM)}, 49\penalty0 (5):\penalty0 616--639,
  2002.

\bibitem[Kolmogorov \& Wainwright(2005)Kolmogorov and
  Wainwright]{kolmogorov2005optimality}
Kolmogorov, V. and Wainwright, M.~J.
\newblock On the optimality of tree-reweighted max-product message-passing.
\newblock In \emph{Proceedings of the Twenty-First Conference on Uncertainty in
  Artificial Intelligence}, pp.\  316--323, 2005.

\bibitem[Kovtun(2003)]{kovtun2003partial}
Kovtun, I.
\newblock Partial optimal labeling search for a np-hard subclass of (max,+)
  problems.
\newblock In \emph{Joint Pattern Recognition Symposium}, pp.\  402--409.
  Springer, 2003.

\bibitem[Lang et~al.(2018)Lang, Sontag, and Vijayaraghavan]{LanSonVij18}
Lang, H., Sontag, D., and Vijayaraghavan, A.
\newblock Optimality of approximate inference algorithms on stable instances.
\newblock In \emph{International Conference on Artificial Intelligence and
  Statistics}, pp.\  1157--1166, 2018.

\bibitem[Lang et~al.(2019)Lang, Sontag, and Vijayaraghavan]{LanSonVij19}
Lang, H., Sontag, D., and Vijayaraghavan, A.
\newblock Block stability for map inference.
\newblock In \emph{The 22nd International Conference on Artificial Intelligence
  and Statistics}, pp.\  216--225, 2019.

\bibitem[Makarychev et~al.(2013)Makarychev, Makarychev, and
  Vijayaraghavan]{MMVfas}
Makarychev, K., Makarychev, Y., and Vijayaraghavan, A.
\newblock Sorting noisy data with partial information.
\newblock In \emph{Proceedings of the 4th conference on Innovations in
  Theoretical Computer Science}, pp.\  515--528. ACM, 2013.

\bibitem[Makarychev et~al.(2014)Makarychev, Makarychev, and
  Vijayaraghavan]{makarychev2014bilu}
Makarychev, K., Makarychev, Y., and Vijayaraghavan, A.
\newblock Bilu--linial stable instances of max cut and minimum multiway cut.
\newblock In \emph{Proceedings of the twenty-fifth annual ACM-SIAM symposium on
  Discrete algorithms}, pp.\  890--906. SIAM, 2014.

\bibitem[McSherry(2001)]{Mcsherry}
McSherry, F.
\newblock Spectral partitioning of random graphs.
\newblock In \emph{Proceedings of the 42nd IEEE symposium on Foundations of
  Computer Science}, FOCS '01, pp.\  529--, Washington, DC, USA, 2001. IEEE
  Computer Society.
\newblock ISBN 0-7695-1390-5.
\newblock URL \url{http://dl.acm.org/citation.cfm?id=874063.875554}.

\bibitem[Meshi et~al.(2019)Meshi, London, Weller, and Sontag]{meshi2019train}
Meshi, O., London, B., Weller, A., and Sontag, D.
\newblock Train and test tightness of lp relaxations in structured prediction.
\newblock \emph{The Journal of Machine Learning Research}, 20\penalty0
  (1):\penalty0 480--513, 2019.

\bibitem[Roughgarden(2021)]{roughgarden_2021}
Roughgarden, T. (ed.).
\newblock \emph{Beyond the Worst-Case Analysis of Algorithms}.
\newblock Cambridge University Press, 2021.
\newblock \doi{10.1017/9781108637435}.

\bibitem[Rowland et~al.(2017)Rowland, Pacchiano, and
  Weller]{rowland2017conditions}
Rowland, M., Pacchiano, A., and Weller, A.
\newblock Conditions beyond treewidth for tightness of higher-order lp
  relaxations.
\newblock In \emph{Artificial Intelligence and Statistics}, pp.\  10--18, 2017.

\bibitem[Scharstein \& Szeliski(2002)Scharstein and
  Szeliski]{scharstein2002taxonomy}
Scharstein, D. and Szeliski, R.
\newblock A taxonomy and evaluation of dense two-frame stereo correspondence
  algorithms.
\newblock \emph{International journal of computer vision}, 47\penalty0
  (1):\penalty0 7--42, 2002.

\bibitem[Shekhovtsov(2013)]{shekhovtsov2013exact}
Shekhovtsov, A.
\newblock Exact and partial energy minimization in computer vision.
\newblock 2013.

\bibitem[Shekhovtsov et~al.(2017)Shekhovtsov, Swoboda, and
  Savchynskyy]{shekhovtsov2017maximum}
Shekhovtsov, A., Swoboda, P., and Savchynskyy, B.
\newblock Maximum persistency via iterative relaxed inference in graphical
  models.
\newblock \emph{IEEE Transactions on Pattern Analysis and Machine
  Intelligence}, 40\penalty0 (7):\penalty0 1668--1682, 2017.

\bibitem[Sontag(2010)]{Sontagthesis}
Sontag, D.
\newblock \emph{Approximate Inference in Graphical Models using LP
  Relaxations}.
\newblock PhD thesis, Massachusetts Institute of Technology, Department of
  Electrical Engineering and Computer Science, 2010.

\bibitem[Sontag et~al.(2011)Sontag, Globerson, and
  Jaakkola]{sontag2011introduction}
Sontag, D., Globerson, A., and Jaakkola, T.
\newblock Introduction to dual decomposition for inference.
\newblock In \emph{Optimization for Machine Learning}. MIT Press, 2011.

\bibitem[Swoboda et~al.(2016)Swoboda, Shekhovtsov, Kappes, Schn{\"o}rr, and
  Savchynskyy]{swoboda2016partial}
Swoboda, P., Shekhovtsov, A., Kappes, J.~H., Schn{\"o}rr, C., and Savchynskyy,
  B.
\newblock Partial optimality by pruning for map-inference with general
  graphical models.
\newblock \emph{IEEE Transactions on Pattern Analysis and Machine
  Intelligence}, 38\penalty0 (7), 2016.

\bibitem[{Tappen} \& {Freeman}(2003){Tappen} and
  {Freeman}]{tappen2003comparison}
{Tappen} and {Freeman}.
\newblock Comparison of graph cuts with belief propagation for stereo, using
  identical mrf parameters.
\newblock In \emph{Proceedings Ninth IEEE International Conference on Computer
  Vision}, pp.\  900--906 vol.2, 2003.

\bibitem[Thapper et~al.(2012)]{thapper2012power}
Thapper, J. et~al.
\newblock The power of linear programming for valued csps.
\newblock In \emph{2012 IEEE 53rd Annual Symposium on Foundations of Computer
  Science}, pp.\  669--678. IEEE, 2012.

\bibitem[Vershynin(2018)]{vershynin_hdp}
Vershynin, R.
\newblock \emph{High-dimensional probability: An introduction with applications
  in data science}, volume~47.
\newblock Cambridge university press, 2018.

\bibitem[Wainwright \& Jordan(2008)Wainwright and
  Jordan]{wainwright2008graphical}
Wainwright, M.~J. and Jordan, M.~I.
\newblock \emph{Graphical models, exponential families, and variational
  inference}.
\newblock Now Publishers Inc, 2008.

\bibitem[Wainwright et~al.(2005)Wainwright, Jaakkola, and
  Willsky]{wainwright2005map}
Wainwright, M.~J., Jaakkola, T.~S., and Willsky, A.~S.
\newblock Map estimation via agreement on trees: message-passing and linear
  programming.
\newblock \emph{IEEE transactions on information theory}, 51\penalty0
  (11):\penalty0 3697--3717, 2005.

\bibitem[Weller et~al.(2016)Weller, Rowland, and Sontag]{weller2016tightness}
Weller, A., Rowland, M., and Sontag, D.
\newblock Tightness of lp relaxations for almost balanced models.
\newblock In \emph{Artificial Intelligence and Statistics}, pp.\  47--55, 2016.

\end{thebibliography}

\clearpage
\onecolumn
\aistatssupptitle{Beyond Perturbation Stability: Supplementary Material}

\appendix

\section{Preliminaries Details}\label{sec:prelim_details}

\begin{claim}\label{claim:UMLnonnegative}
For Uniform Metric Labeling, we can assume $c(u,i) \ge 0$ and $w(u,v) > 0$ without loss of generality.
\end{claim}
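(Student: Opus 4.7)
The plan is to exhibit an objective-preserving transformation that, given an instance with $c(u,i) \in \R$ and $w(u,v) > 0$, produces an equivalent instance with $c(u,i) \ge 0$ and $w(u,v) > 0$. Since the weight condition $w(u,v) > 0$ already matches the paper's setup, only the node costs need to be shifted. The key observation is that for each fixed vertex $u$, exactly one label is chosen in any integral labeling (and more generally, $\sum_{i \in [k]} x_u(i) = 1$ for every $x \in L(G)$), so adding a constant that depends only on $u$ to every $c(u,i)$ shifts the objective by a constant independent of the labeling.

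Concretely, I would set $m_u := \min_{i \in [k]} c(u,i)$ for each $u \in V$ and define shifted costs $c'(u,i) := c(u,i) - m_u \ge 0$. Keeping the same edge weights $w$ produces a new instance $(G,c',w)$ with non-negative node costs. Then, for any labeling $x : V \to [k]$,
\[
    \sum_{u} c'(u,x(u)) + \sum_{(u,v) \in E} w(u,v)\,\indicator[x(u) \ne x(v)] = \Paren{\sum_{u} c(u,x(u)) + \sum_{(u,v) \in E} w(u,v)\,\indicator[x(u) \ne x(v)]} - \sum_u m_u,
\]
and $\sum_u m_u$ is a constant independent of $x$. Therefore the set of MAP solutions to $(G,c,w)$ coincides with the set of MAP solutions to $(G,c',w)$.

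To also cover the local LP relaxation (which is what the rest of the paper cares about), I would note that the identical argument applies point-wise on $L(G)$: for any $x \in L(G)$,
\[
\sum_{u,i} c'(u,i)\,x_u(i) = \sum_{u,i} c(u,i)\,x_u(i) - \sum_u m_u \cdot \underbrace{\sum_i x_u(i)}_{=\,1} = \sum_{u,i} c(u,i)\,x_u(i) - \sum_u m_u,
\]
so the LP objectives on $(G,c,w)$ and $(G,c',w)$ differ by a constant $\sum_u m_u$ on all of $L(G)$. Hence the LP minimizers (and consequently all notions of persistence, recovery error, and expansion stability) are preserved, completing the reduction. There is no real obstacle here; the only thing worth being careful about is ensuring the shift is per-vertex (not global) so that the simplex constraint $\sum_i x_u(i) = 1$ absorbs it exactly.
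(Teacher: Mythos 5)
Your treatment of the node costs is correct and is essentially the paper's own argument: shifting $c(u,\cdot)$ by a constant that depends only on $u$ changes the objective of every labeling (and of every point of $L(G)$, via $\sum_i x_u(i)=1$) by the same constant, so the MAP and LP minimizers, the recovery error, and the expansion-stability definitions are all unaffected. The only cosmetic difference is that you shift per vertex by $m_u=\min_i c(u,i)$, whereas the paper adds the single global constant $|c_{\min}|$ to every cost; both work, and your per-vertex version is, if anything, slightly cleaner.

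However, you have only proved half of the claim. The statement also asserts that $w(u,v)>0$ may be assumed without loss of generality, and this part is precisely what licenses the paper's standing assumption of strictly positive edge weights, so you cannot dismiss it by saying the weight condition ``already matches the paper's setup'' --- that is circular. A uniform metric labeling instance in the standard formulation may contain zero-weight edges, and the claim must reduce such instances to ones with $w>0$. The paper handles this by simply deleting every zero-weight edge, i.e.\ replacing $E$ by $E\setminus E_0$ where $E_0$ is the set of edges of weight zero: no labeling's objective changes, the MAP solution is unchanged, and the stability definitions are preserved since perturbing a zero-weight edge never alters any objective. Adding this one-line reduction would complete your proof.
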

\begin{proof}
For problem instances where some node costs are strictly negative, let $c_{\min}$ be the minimum value among all the node costs. Consider a new problem instance where we keep the edge costs the same, but set $c'(u,i) = c(u,i) + \abs{c_{\min}}$ for all $u \in V$ and $i \in [k]$. This new problem instance has all non-negative node costs, and the optimization problem is equivalent, because we added the same constant for all solutions. This reformulation also does not affect the $(2,1)$-expansion stability or $(2,1,\psi)$-expansion stability of the instance.

Likewise, for problem instances where some edge weights are $0$, let $E_0$ be the set of all edges with $0$ edge weight. Consider a new problem instance with $E' = E \setminus E_0$, with $w(u,v)$ unchanged for $(u,v) \in E\setminus E_0$, and identical node costs. The MAP optimization problem remains the same, and the new instance $((V,E'),c,w)$ is equivalent: it has the same MAP solution, and satisfies the stability definitions if and only if the original instance does as well.
\end{proof}

\Lstarclaim*
\begin{proof}[Proof of Claim \ref{claim:Lstar}]
Recall the local LP:
\begin{alignat}{2}
  \mindot_{x}\sum_{u\in V}&\sum_i c(u,i)x_u(i) + \sum_{(u,v) \in E}&&w(u,v)\sum_{i\ne j}x_{uv}(i,j)\\
  \text{subject to:}& \sum_{i}x_u(i) = 1 &&\forall\ u\in V\label{con:norm}\\
                    & \sum_{i}x_{uv}(i,j) = x_v(j)&& \forall\ (u,v)\in E,\ j\in [k]\label{con:marg1}\\
                   & \sum_{j}x_{uv}(i,j) = x_u(i)&& \forall\ (u,v)\in E,\ i\in [k]\label{con:marg2}\\
                   & x_{uv}(i,j) \in [0,1]&&\forall\ (u,v),\ (i,j)\label{con:edge-normalize}\\
                   & x_{u}(i) \in [0,1]&& \forall\ u,\ i.
\end{alignat}
The feasible region defined by the above constraints is $L(G)$. $L^*(G) \subseteq L(G)$ is the set of points that satisfy the additional constraint that $x_{uv}(i,i) = \min(x_u(i), x_v(i))$ for all $(u,v) \in E$ and $i \in [k]$. For any feasible node variable assignments $\{x_u\}$, $L^*(G)$ is not empty: a simple flow argument\footnote{For an edge $(u,v)$, consider the bipartite graph $\tilde{G} = ((U,V), E)$, where $|U| = |V| = k$. We let $x_u(i)$ represent the supply at node $i$ in $U$, and let $x_v(i)$ represent the demand at node $j$ in $V$. Because $x_u$ and $x_v$ are both feasible, the total supply equals the total demand. $E$ contains all edges between $U$ and $V$, so we can send flow from $i\in U$ to $j \in V$ for any $(i,j)$ pair. Let $x_{uv}(i,j)$ represent this flow, and set $x_{uv}(i,i) = \min(x_u(i), x_v(i))$. For every $i$, this either satisfies the demand at node $V_i$ or exhausts the supply at node $U_i$. In each case, we can remove that satisfied/exhausted node from the graph. After this choice of $x_u(i,i)$, the total remaining supply equals the total remaining demand ($\sum_i x_u(i) - \min(x_u(i),x_v(i)) = \sum_i x_v(i) - \min(x_u(i),x_v(i))$), all supplies and demands are nonnegative, and the remaining graph $\tilde{G}'$ is a complete bipartite graph (over fewer nodes). This implies that the flow constraints \eqref{con:marg1}, \eqref{con:marg2}, \eqref{con:edge-normalize} are still feasible.}
implies that the constraints \eqref{con:marg1}, \eqref{con:marg2}, and \eqref{con:edge-normalize} are always satisfiable even when we set $x_{uv}(i,i) = \min(x_u(i), x_v(i))$. 
For all integer feasible solutions in $L(G)$, notice that $x_{uv}(i,j) = 1$ if $x_u(i) = 1$ and $x_v(j) = 1$ or $0$ otherwise. Therefore, all integer solutions satisfy this additional constraint.
Consider a $\theta$ where all edge weights are strictly positive. 
If $x$ minimizes $\dot{\theta}{x}$, $x$ must pay the minimum edge cost consistent with its node variables $x_u(i)$.
So if we fix the $x_u(i)$ portion of $x$, we know that the edge variables $x_{uv}$ of $x$ are a solution to:
\[ \min_{x \in L(G)} \sum_{(u,v) \in E}w(u,v)\sum_{i\ne j}x_{uv}(i,j). \]
Notice that since we have fixed the node variables $x_u(i)$, there is no interaction between the $x_{uv}$ variables across different edges. 
So we can minimize this objective by minimizing each individual term $w(u,v)\sum_{i\ne j}x_{uv}(i,j)$. 
Since $w_{uv} > 0$ for all edges, we need to minimize $\sum_{i\ne j}x_{uv}(i,j)$. 
Notice that for every edge $(u,v) \in E$, we get that $\sum_{i} \sum_{j} x_{uv}(i,j) = 1$ by substituting $x_u(i)$ in constraint~\ref{con:norm} with $\sum_{j} x_{uv}(i,j)$ from constraint~\ref{con:marg2}. 
Therefore $\sum_{i\ne j} x_{uv}(i,j) = 1 - \sum_{i}x_{uv}(i,i)$. 
Thus, minimizing $\sum_{i\ne j}x_{uv}(i,j)$ is  the same as maximizing $\sum_{i}x_{uv}(i,i)$. 
And the maximizing choice for $x_{uv}(i,i) = \min (x_u(i), x_v(i))$ due to constraints~\ref{con:marg1} and \ref{con:marg2}.
\end{proof}

\section{Expansion Stability details}\label{sec:expansion-stability_details}

\begin{figure}[tb]
  \centering
\begin{subfigure}{.5\linewidth}
  \centering
  \scalebox{1.0}{  
  \tikzstyle{vertex}=[circle, draw=black, very thick, minimum size=5mm]
  \tikzstyle{edge} = [draw=black, line width=1]
  \tikzstyle{weight} = [font=\normalsize]
  \begin{tikzpicture}[scale=2,auto,swap]
    \foreach \pos /\name in {{(0,0)}/u,{(1,0)}/w,{(0.5,0.75)}/v}
    \node[vertex](\name) at \pos{$\name$};
    \foreach \source /\dest /\weight in {u/w/1+\epsilon}
    \path[edge] (\source) -- node[weight] {$\weight$} (\dest);
    \foreach \source /\dest /\weight/\pos in {u/v/1+\epsilon/{above left}, v/w/1+\epsilon/{above right}}
    \path[edge] (\source) -- node[weight, \pos] {$\weight$} (\dest);
  \end{tikzpicture}
  }
\end{subfigure}%
\begin{subfigure}{.5\linewidth}
  \centering
  \scalebox{1.0}{
\begin{tabular}{|l|ccc|}
\hline
\multicolumn{1}{|c|}{\textbf{Node}} & \multicolumn{3}{|c|}{\textbf{Costs}} \\
\hline
u & .5 & $\infty$        & $\infty$      \\
\hline
v & 1 & 0 & $\infty$\\
\hline
w & 1 & $\infty$ & 0\\
\hline
\end{tabular}
}
\end{subfigure}
  \caption{$(2,1)$-expansion stable instance that is not $(2,1)$-stable. In the original instance (shown left), the optimal solution labels each vertex with label 1, for an objective of $2.5$. The adversarial $(2,1)$-perturbation for this instance replaces all the edge weights of $1+\epsilon$ with $(1+\epsilon)/2$. In this perturbed instance, the optimal solution labels $(u,v,w) \rightarrow (1,2,3)$. This has a node cost of 0.5 and an edge cost of $(3+3\epsilon)/2$, for a total of $2+3\epsilon/2 < 2.5$. Since the original solution is not optimal in the perturbed instance, this instance is not $(2,1)$-perturbation stable. However, note that the only expansions of the original solution (which had all label 1) that have non-infinite objective are $(u,v,w) \rightarrow (1,2,1)$ and $(u,v,w) \rightarrow (1,1,3)$. These each have objective $2.5 + \epsilon$, which is strictly greater than the perturbed objective of the original solution. Therefore, this instance is $(2,1)$-expansion stable.}
\label{fig:counter1-apdx}
\end{figure}
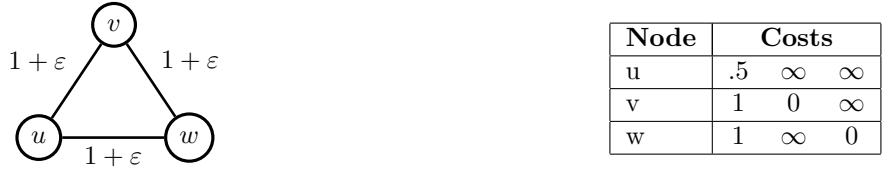

\begin{claim}\label{exp_worst}
An instance $(G,w,c)$ is $(2,1)$-expansion stable iff the MAP solution $\bx$ is strictly better than all its expansions in the adversarial perturbation $\theta_{adv}$. That is, for all $x \in \calE_{\bx}, \dot{\theta_{adv}}{x} > \dot{\theta_{adv}}{\bx}$ where $\theta_{adv}$ has the same node costs $c$ but has weights $
    w_{adv}(u,v) = \begin{cases}
    \frac{1}{2}w(u,v) & \bx(u) = 
    \bx(v)\\
    w(u,v) & \bx(u) \ne \bx(v).
\end{cases}
$
\end{claim}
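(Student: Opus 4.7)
The forward direction is immediate: since $\theta_{adv}$ is itself a $(2,1)$-perturbation of $(c,w)$ (every edge weight $w_{adv}(u,v)$ satisfies $\tfrac{1}{2}w(u,v) \le w_{adv}(u,v) \le w(u,v)$), the definition of $(2,1)$-expansion stability directly yields $\dot{\theta_{adv}}{x} > \dot{\theta_{adv}}{\bx}$ for every $x \in \calE_{\bx}$. So the plan is to focus on the converse.

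For the converse, the plan is to show that for \emph{every} $(2,1)$-perturbation $\theta' = (c,w')$ and every expansion $x \in \calE_{\bx}$, we have
\[
\dot{\theta'}{x} - \dot{\theta'}{\bx} \ge \dot{\theta_{adv}}{x} - \dot{\theta_{adv}}{\bx}.
\]
Since the node costs $c$ are identical in $\theta'$ and $\theta_{adv}$, they cancel from both sides, and it suffices to compare the edge contributions. The plan is therefore to do an edge-by-edge case analysis, using the structural fact (from Definition~\ref{def:expansion}) that if $x$ is an $\alpha$-expansion of $\bx$, then each vertex $u$ satisfies $x(u) \in \{\alpha, \bx(u)\}$, and $x(u) = \alpha$ whenever $\bx(u) = \alpha$.

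The case analysis splits according to whether $\bx$ cuts the edge $(u,v)$. If $\bx(u) = \bx(v)$ (edge uncut by $\bx$), the contribution of $(u,v)$ to $\dot{\theta'}{x} - \dot{\theta'}{\bx}$ is either $0$ (if $x$ also leaves it uncut) or $+w'(u,v)$ (if $x$ cuts it); in the latter case the contribution is minimized over $(2,1)$-perturbations by taking $w'(u,v) = w(u,v)/2$, which is exactly $w_{adv}(u,v)$. If instead $\bx(u) \ne \bx(v)$, then because $x$ can only change labels to $\alpha$, the edge remains cut unless both endpoints get relabeled to $\alpha$; the contribution is either $0$ or $-w'(u,v)$, and in the latter case it is minimized by taking $w'(u,v) = w(u,v)$, which is $w_{adv}(u,v)$. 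Either way the per-edge contribution under $\theta'$ is at least the per-edge contribution under $\theta_{adv}$, so summing over edges yields the displayed inequality. Combining this with the hypothesis $\dot{\theta_{adv}}{x} > \dot{\theta_{adv}}{\bx}$ gives $\dot{\theta'}{x} > \dot{\theta'}{\bx}$, establishing $(2,1)$-expansion stability.

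The only genuine subtlety is making the case analysis tight: one must verify that the two cases ``uncut by $\bx$'' and ``cut by $\bx$'' are handled by the \emph{opposite} extremes of the $(2,1)$-perturbation interval $[w/2, w]$, and that this is consistent with the definition of $\theta_{adv}$. There are no other obstacles; node costs play no role since the $(2,1)$-perturbation only modifies edge weights, and the sign of each per-edge contribution is forced by the expansion structure.
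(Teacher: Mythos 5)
Your proposal is correct and follows essentially the same route as the paper: both directions reduce to the key inequality $\dot{\theta'}{x} - \dot{\theta'}{\bx} \ge \dot{\theta_{adv}}{x} - \dot{\theta_{adv}}{\bx}$ for every $(2,1)$-perturbation $\theta'$, which the paper derives by decomposing over the cut-edge sets $E_x \setminus E_{\bx}$ and $E_{\bx} \setminus E_x$ — exactly your per-edge case analysis written in aggregate form. (Your appeal to the expansion structure in the second case is harmless but unnecessary; the inequality holds for arbitrary labelings $x$, as in the paper.)
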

\begin{proof}
Consider $\theta' = (c,w')$, any valid $(2,1)$-perturbation of $\theta = (c,w)$ i.e. for every edge $(u,v) \in E, \frac{w(u,v)}{2} \leq w'(u,v) \leq w(u,v)$. For any valid labeling $x$, let $E_x$ represent the edges cut by $x$. Then, for any $x$ which is an expansion of $\bx$ i.e. $x \in \calE_{\bx}$,
\begin{align*}
    \dot{\theta'}{x} - \dot{\theta'}{\bx} &= \sum_{u \in V} c(u,x(u)) - c(u,\bx(u)) + \sum_{(u,v) \in E_x} w'(u,v) - \sum_{(u,v) \in E_{\bx}} w'(u,v)
    \\&= \sum_{u \in V} c(u,x(u)) - c(u,\bx(u)) + \sum_{(u,v) \in E_x \setminus E_{\bx}} w'(u,v) - \sum_{(u,v) \in E_{\bx}\setminus E_x} w'(u,v)
    \\&= \dot{\theta_{adv}}{x} - \dot{\theta_{adv}}{\bx} + \sum_{(u,v) \in E_x \setminus E_{\bx}} w'(u,v) -w_{adv}(u,v)  + \sum_{(u,v) \in E_{\bx}\setminus E_x} w_{adv} - w'(u,v)
    \\&= \dot{\theta_{adv}}{x} - \dot{\theta_{adv}}{\bx} + \sum_{(u,v) \in E_x \setminus E_{\bx}} w'(u,v) -\frac{w(u,v)}{2}  + \sum_{(u,v) \in E_{\bx}\setminus E_x} w(u,v) - w'(u,v)
\end{align*}
Since $w'$ is a valid $(2,1)$-perturbation, $w'(u,v) \ge w(u,v)/2$ and $w'(u,v) \le w(u,v)$. Therefore, for any valid $(2,1)$-perturbation $\theta'$, we have 
\[
\dot{\theta'}{x} - \dot{\theta'}{\bx} \geq \dot{\theta_{adv}}{x} - \dot{\theta_{adv}}{\bx}.
\]

If the instance is $(2,1)$-expansion stable, then certainly $\dot{\theta_{adv}}{x} > \dot{\theta_{adv}}{\bx}$ for all $x\in \calE^{\bx}$, since $\theta_{adv}$ is a valid $(2,1)$-perturbation of $\theta$. If the instance is not $(2,1)$-expansion stable, there exists a $\theta'$ and an $x\in \calE^{\bx}$ for which $\dot{\theta'}{x} - \dot{\theta'}{\bx} \le 0$. But the above inequality then implies that $\dot{\theta_{adv}}{x} - \dot{\theta_{adv}}{\bx} \le 0$ as well. This gives both directions.
\end{proof}

This claim shows that to check whether an instance is $(2,1)$-expansion stable, it is sufficient to check that the MAP solution is strictly better than all its expansions in the adversarial perturbation $\theta_{adv}$. We don't need to verify that this condition is satisfied in \textit{every} valid $(2,1)$-perturbation. Because the optimal expansion of $\bx$ in the instance with objective $\theta_{adv}$ can be computed efficiently, this claim also implies that $(2,1)$-expansion stability can be efficiently checked once the MAP solution $\bx$ is known.

\begin{claim}\label{exp_weak}
$(2,1)$-expansion stability is strictly weaker than $(2,1)$-perturbation stability. 
\end{claim}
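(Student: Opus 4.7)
The plan is to establish the claim in two pieces: first, the implication that $(2,1)$-perturbation stability implies $(2,1)$-expansion stability, and second, that the implication is strict by exhibiting a witness (the instance in Figure \ref{fig:counter1-apdx}).

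For the forward implication, I would argue directly from the definitions. If $(G,c,w)$ is $(2,1)$-perturbation stable with MAP solution $\bar{x}$, then $\bar{x}$ is the \emph{unique} minimizer of $\dot{\theta'}{\cdot}$ for every $\theta'$ in the set $\Theta$ of $(2,1)$-perturbations of $\theta=(c,w)$. Equivalently, $\dot{\theta'}{x} > \dot{\theta'}{\bar{x}}$ for every labeling $x \ne \bar{x}$ and every $\theta' \in \Theta$. Since the set of expansions $\calE_{\bar{x}}$ (Definition \ref{def:expansion}) is a subset of $\{x : x \ne \bar{x}\}$, the inequality holds in particular for every $x \in \calE_{\bar{x}}$, which is exactly the condition in Definition \ref{def:expansion-stability}. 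This step is essentially definitional and should be a short paragraph.

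For strictness, I would use the instance depicted in Figure \ref{fig:counter1-apdx} as the separating witness. The argument has two sub-steps. First, to show the instance is not $(2,1)$-perturbation stable, I would exhibit an explicit $(2,1)$-perturbation in which $\bar{x}$ (the all-ones labeling) is beaten: multiplying every edge weight by $1/2$ produces a valid $(2,1)$-perturbation, and the labeling $(u,v,w) \mapsto (1,2,3)$ achieves objective $0.5 + 3(1+\epsilon)/2 = 2 + 3\epsilon/2 < 2.5$, which is strictly less than the objective $2.5$ of the all-ones labeling under the same perturbation. Second, to show the instance is $(2,1)$-expansion stable, I would invoke Claim \ref{exp_worst}, which reduces the task to checking the adversarial perturbation $\theta_{adv}$ against the expansions of $\bar{x}$. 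Since the node costs force labels $2$ and $3$ to live only at $v$ and $w$ respectively (all other node costs are $\infty$), the only finite-objective expansions of $\bar{x} = (1,1,1)$ are $(1,2,1)$ and $(1,1,3)$; each has adversarial objective $2.5 + \epsilon$, strictly greater than $\dot{\theta_{adv}}{\bar{x}} = 2.5$. Thus Claim \ref{exp_worst} certifies $(2,1)$-expansion stability.

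The main conceptual point (rather than an obstacle) is the need to rely on Claim \ref{exp_worst} for the second sub-step: without it, we would have to quantify over an entire family of perturbations, but Claim \ref{exp_worst} tells us that it suffices to check one. Beyond that, the verification is a finite enumeration that is already laid out in the figure caption of Figure \ref{fig:counter1-apdx}, so the proof itself will be very short — perhaps only one paragraph pointing to the forward implication and one paragraph assembling the two computations for the witness.
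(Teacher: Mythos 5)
Your proposal is correct and matches the paper's own argument, which simply points to the instance of Figure~\ref{fig:counter1} (with $0<\epsilon<1/3$) and the same two computations, with the adversarial-perturbation reduction of Claim~\ref{exp_worst} certifying expansion stability and the forward implication being definitional. Just make the restriction $\epsilon<1/3$ explicit, since your inequality $2+3\epsilon/2<2.5$ relies on it.
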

\begin{proof}
Figure \ref{fig:counter1} gives an instance of uniform metric labeling that is $(2,1)$-expansion stable but not $(2,1)$-perturbation stable. Here, $0 < \epsilon < 1/3$.
\end{proof}

\lptes*
\begin{proof}
First, we note that for any $x \in L^*(G)$, the objective value of the local LP can be written in a form that depends only on the node variables $x_V$. The objective term corresponding to the edges
\begin{align*}
    \sum_{(u,v) \in E}w(u,v)\sum_{i\ne j}x_{uv}(i,j) &= \sum_{(u,v) \in E}w(u,v)\rbr{\sum_{i,j}x_{uv}(i,j) - \sum_{i}x_{uv}(i,i)}
    \\&= \sum_{(u,v) \in E}w(u,v)\rbr{1 - \sum_{i}x_{uv}(i,i)} = \sum_{(u,v) \in E}w(u,v)\rbr{1 - \sum_{i}\min(x_u(i),x_v(i))}
    \\&= \sum_{(u,v) \in E}w(u,v)\rbr{1 - \sum_{i}\rbr{\frac{x_u(i) + x_v(i)}{2} - \frac{\abs{x_u(i) - x_v(i)}}{2}}}
    \\&= \sum_{(u,v) \in E}w(u,v)\rbr{\frac{1}{2}\sum_{i}\abs{x_u(i) - x_v(i)}}
\end{align*}

Here we used the definition of $L^*(G)$ and the facts that $\sum_i x_u(i) = 1$ for all $(u,i)$ and $\sum_{j}x_{uv}(i,j) = x_u(i)$ for all $(u,v)\in E,\ i\in [k]$.

Thus, for any $x \in L^*(G)$, the objective of the local LP can be written as \[\sum_{u\in V}\sum_i c(u,i)x_u(i) + \sum_{(u,v) \in E}w(u,v)d(u,v)\]
where $d(u,v) \coloneqq \frac{1}{2}\sum_{i}\abs{x_u(i) - x_v(i)}$. This is the objective function of another LP relaxation for uniform metric labeling called the ``metric LP'', which is equivalent to the local LP \citep{archer2004approximate}. Note that both $\bar{x}$ and $\hx$ are in $L^*(G)$ by Claim \ref{claim:Lstar}. Therefore, the objective function can be written in the above form for both of them.

In the next section, we introduce a rounding algorithm and prove some guarantees for the random solutions $h$ output by it. We then use these guarantees to show an upper bound on the expected cost of these random solutions in a perturbed instance of the problem. Finally, we use this upper bound to prove that $\hx = \bx$.

\subsection{$\epsilon$-close rounding:}
Given any feasible solution $x \in L(G)$ and a valid labeling $\bar{x}$, we construct a related feasible solution $x'$ which is $\epsilon$-close to $\bar{x}$ in the $\ell_{\infty}$-norm i.e. $\norm{x' - \bar{x}}_{\infty} \leq \epsilon$:
\begin{equation}\label{eqn:eps-close}
    x' = \epsilon x + (1-\epsilon)\bar{x},
\end{equation}
where $\epsilon < 1/k$ and we have identified the labeling $\bar{x}:V\to\mathcal{L}$ with its corresponding vertex of the marginal polytope (a vector in $\{0,1\}^{nk+mk^2}$). We consider the following rounding algorithm applied to $x'$, which is a modified version of the $\epsilon$-close rounding algorithm used in \citet{LanSonVij18}:
\begin{algorithm}[H]
   \caption{$\epsilon$-close rounding}
   \label{alg:rounding_algorithm}
\begin{algorithmic}[1]
   \STATE Choose $i \in \{1,\dots,k\}$ uniformly at random.
   \STATE Choose $r \in (0,1/k)$ uniformly at random.
   \STATE Initialize labeling $h: V\to [k]$.
   \FOR{each $u \in V$}
        \IF {$x'_u(i) > r$}
            \STATE Set $h(u) = i$.
        \ELSE
            \STATE Set $h(u) = \bx(u)$
        \ENDIF 
   \ENDFOR
   \STATE \textbf{Return} $h$
\end{algorithmic}
\end{algorithm}

\begin{lemma}[Rounding guarantees]
\label{lemma:rounding-guar}
Given any $x'$ constructed using \eqref{eqn:eps-close}, the labeling $h$ output by Algorithm \ref{alg:rounding_algorithm} satisfies the following guarantees:
\begin{align*}
    \prob{h(u) = i} &= x'_u(i) &\forall\ u\in V, i\in [k]\\
    \prob{h(u) \ne h(v)} &\le 2d(u,v) &\forall\ (u,v) \in E : \bx(u) = \bx(v)\\
    \prob{h(u) = h(v)} &= (1-d(u,v)) &\forall\ (u,v) \in E : \bx(u) \ne \bx(v),
\end{align*}
where $d(u,v) = \frac{1}{2}\sum_i|x'_u(i) - x'_v(i)|$ is the edge separation of the constructed feasible point $x'$.
\end{lemma}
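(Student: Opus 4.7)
The plan is to condition on the random label $i$ and then on the threshold $r\in(0,1/k)$, exploiting the crucial dichotomy forced by the assumption $\epsilon<1/k$. Writing $x'_u(i)=\epsilon x_u(i)+(1-\epsilon)\bar{x}_u(i)$: if $\bar{x}(u)=i$ then $x'_u(i)\ge 1-\epsilon>1/k>r$, so $u$ is always rounded to $i$ in that case; otherwise $x'_u(i)=\epsilon x_u(i)\le\epsilon<1/k$, so conditional on label $i$ being drawn, $\Pr_r[r<x'_u(i)]=k\,x'_u(i)$. This single observation drives all three parts of the lemma.

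For the marginal guarantee, I would compute $\Pr[h(u)=i]$ by summing over the random label choice. When label $i$ is drawn (probability $1/k$), $u$ is assigned $i$ with probability $1$ if $\bar{x}(u)=i$ and with probability $k\,x'_u(i)$ otherwise. When a different label $j\ne i$ is drawn (each with probability $1/k$), $u$ receives $i$ only if $\bar{x}(u)=i$ and $u$ does not trigger on $j$, contributing $(1-k\,x'_u(j))/k$ to the sum. Summing over $j$ and using $\sum_j x_u(j)=1$ collapses the total to $x'_u(i)$ in either subcase.

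For the different-label case $\bar{x}(u)=a\ne b=\bar{x}(v)$, I would enumerate the disjoint ways $h(u)=h(v)$ can occur: (i) $i=a$ and $v$ triggers on $a$; (ii) $i=b$ and $u$ triggers on $b$; (iii) $i\notin\{a,b\}$ and both $u,v$ trigger, which requires $r<\min(x'_u(i),x'_v(i))$. Evaluating each using the dichotomy yields $\Pr[h(u)=h(v)]=\epsilon\bigl[x_v(a)+x_u(b)+\sum_{i\ne a,b}\min(x_u(i),x_v(i))\bigr]$. The key algebraic bridge is the identity $d(u,v)=1-\sum_i\min(x'_u(i),x'_v(i))$, which follows from $|\alpha-\beta|=\alpha+\beta-2\min(\alpha,\beta)$ and $\sum_i x'_u(i)=\sum_i x'_v(i)=1$. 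Plugging the explicit values of $x'$ into the right-hand side matches the two expressions and gives $\Pr[h(u)=h(v)]=1-d(u,v)$. For the same-label case $\bar{x}(u)=\bar{x}(v)=a$, the choice $i=a$ forces both vertices to trigger and hence agree; for $i\ne a$ the probability that exactly one of $u,v$ triggers equals $\Pr_r[\min(x'_u(i),x'_v(i))\le r<\max(x'_u(i),x'_v(i))]=k\,|x'_u(i)-x'_v(i)|$, and averaging over $i$ gives $\Pr[h(u)\ne h(v)]=\sum_{i\ne a}|x'_u(i)-x'_v(i)|\le\sum_i|x'_u(i)-x'_v(i)|=2d(u,v)$.

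The main obstacle is not any deep step but careful bookkeeping across cases; the only nontrivial algebraic move is the identity $d(u,v)=1-\sum_i\min(x'_u(i),x'_v(i))$, which converts the closed-form expression produced by the rounding into the $\ell_1$ edge separation of $x'$ and lets the different-label and same-label bounds fall out cleanly.
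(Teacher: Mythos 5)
Your proposal is correct and follows essentially the same route as the paper's proof: condition on the drawn label and the uniform threshold $r\in(0,1/k)$, use the dichotomy forced by $\epsilon<1/k$ (vertices with $\bar{x}(u)=i$ always trigger, others trigger with probability $k\,x'_u(i)$), and convert $\sum_i\min(x'_u(i),x'_v(i))$ into $1-d(u,v)$ via $|\alpha-\beta|=\alpha+\beta-2\min(\alpha,\beta)$. Your extra case-splitting (computing the marginal by summing over all drawn labels, and treating $i=a$, $i=b$, $i\notin\{a,b\}$ separately for cut edges) is just more explicit bookkeeping than the paper's unified computation, not a different argument.
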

\begin{proof}[Proof of Lemma \ref{lemma:rounding-guar} (rounding guarantees)]
First, fix $u\in V$ and a label $i\ne \bx(u)$. We output $h(u) = i$ precisely when $i$ is chosen and $0 < r < x'_u(i)$, which occurs with probability $\frac{1}{k}\frac{x'_u(i)}{1/k} = x'_u(i)$ (we used here that $x'_u(i) \le \epsilon < 1/k$ for all $i\ne \bx(u)$). Now we output $h(u)=\bx(u)$ with probability $1-\sum_{j\ne \bx(u)}\prob{h(u) = j} = 1-\sum_{j\ne \bx(u)}x'_u(j) = x'_u(\bx(u))$, since $\sum_{i}x'_u(i) = 1$. This proves the first guarantee.

For the second, consider an edge $(u,v)$ not cut by $\bx$, so $\bx(u) = \bx(v)$. Then $(u,v)$ is cut by $h$ when some $i \ne \bx(u)$ is chosen and $r$ falls between $x'_u(i)$ and $x'_v(i)$. This occurs with probability \[\frac{1}{k}\sum_{i\ne \bx(u)}\frac{\max(x'_u(i), x'_v(i))-\min(x'_u(i), x'_v(i))}{1/k} = \sum_{i\ne \bx(u)} |x'_u(i) - x'_v(i)| \le 2d(u,v).\]

Finally, consider an edge $(u,v)$ cut by $\bx$, so that $\bx(u) \ne \bx(v)$. Here $h(u) = h(v)$ if some $i,r$ are chosen with $r < \min(x'_u(i), x'_v(i))$. We have $r < \min(x'_u(i), x'_v(i))$ with probability $\frac{\min(x'_u(i), x'_v(i))}{1/k}$. Note that this is still valid if $i=\bx(u)$ or $i=\bx(v)$, since only one of those equalities can hold.
So we get \[ \frac{1}{k}
\sum_{i} \frac{\min(x'_u(i),x'_v(i))}{1/k}  = \frac{1}{2}\left(\sum_i x'_u(i) + x'_v(i) - |x'_u(i) - x'_v(i)|\right) = 1 - d(u,v),
\]
where we used again that $\sum_i x'_u(i) = 1$.
\end{proof}
Given these rounding guarantees, we can relate the expected cost difference between $h$ and $\bx$ in a perturbation of the original instance to the cost difference between $x$ and $\bx$ in the original instance. We are only interested in the case when $x \in L^*(G)$ and so the objective function $f(x) = \sum_{u\in V}\sum_i c(u,i)x_u(i) + \sum_{(u,v) \in E}w(u,v)d(u,v)$.

\subsection{Using the rounding guarantees}
\begin{lemma}\label{lem:Ahupp}
Given an integer solution $\bx$, a feasible LP solution $x \in L^*(G)$, and a random output $h$ of Algorithm \ref{alg:rounding_algorithm} on an input $x' = \epsilon x + (1-\epsilon)\bar{x}$, define
\[
    w'(u,v) = \begin{cases}
    \frac{1}{2}w(u,v) & \bx(u) = 
    \bx(v)\\
    w(u,v) & \bx(u) \ne \bx(v)
\end{cases}
\]
and let $f'(y) = \sum_{u\in V}\sum_i c(u,i)y_u(i) + \sum_{(u,v) \in E}w'(u,v)d(y,u,v)$ be the objective in the instance with the original costs, but using weights $w'$. Here $d(y,u,v) = \frac{1}{2}\sum_i|y_u(i) - y_v(i)|$. Let $A_h \coloneqq f'(h) - f'(\bx)$ be the difference in this perturbed objective between $h$ and $\bx$. Then,
\[
 \E[A_h] = \E{[f'(h) - f'(\bx)]} \le \epsilon \cdot \rbr{f(x) - f(\bx)}.
\]
\end{lemma}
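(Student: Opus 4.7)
The plan is to take expectations directly using Lemma~\ref{lemma:rounding-guar}, recognize that the expected objective of $h$ under the perturbed weights $w'$ equals the original-weight objective $f$ evaluated at the interpolated point $x'$, and then apply convexity.

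First, by definition, $f'(h) = \sum_u c(u,h(u)) + \sum_{(u,v)\in E} w'(u,v)\indicator[h(u)\ne h(v)]$. Taking expectations and using the first guarantee of Lemma~\ref{lemma:rounding-guar} (that $\Pr[h(u)=i] = x'_u(i)$), the node cost term becomes exactly $\sum_u\sum_i c(u,i)x'_u(i)$. For the edge term I split the edges into two groups. On an edge not cut by $\bx$ (where $\bx(u)=\bx(v)$), the second guarantee gives $\Pr[h(u)\ne h(v)]\le 2d(x',u,v)$, while $w'(u,v) = \tfrac{1}{2}w(u,v)$, so the contribution is at most $w(u,v)d(x',u,v)$. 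On an edge cut by $\bx$, the third guarantee gives $\Pr[h(u)\ne h(v)] = d(x',u,v)$, while $w'(u,v) = w(u,v)$, so the contribution is exactly $w(u,v)d(x',u,v)$. The key observation is that the factor $1/2$ in $w'$ on uncut edges is exactly absorbed by the factor $2$ in the corresponding rounding bound.

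Summing, I obtain $\E[f'(h)] \le \sum_u\sum_i c(u,i)x'_u(i) + \sum_{(u,v)\in E} w(u,v)d(x',u,v) = f(x')$, where the right-hand side is the \emph{original} (unperturbed) LP objective evaluated at $x'$. Next, I verify that $f'(\bx) = f(\bx)$: the only edges contributing to either objective at the integer point $\bx$ are those cut by $\bx$, and on these edges $w'$ agrees with $w$ by construction.

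Finally, I invoke convexity of $f$ on $L^*(G)$. Since $d(\cdot,u,v) = \tfrac{1}{2}\sum_i|y_u(i)-y_v(i)|$ is convex in $y$, the node term is linear, and all edge weights are positive, $f$ is convex. Applying convexity to $x' = \epsilon x + (1-\epsilon)\bx$ gives $f(x') \le \epsilon f(x) + (1-\epsilon)f(\bx)$. Combining this with $f'(\bx)=f(\bx)$ and the bound $\E[f'(h)]\le f(x')$ yields
\[
\E[A_h] = \E[f'(h)] - f'(\bx) \le \epsilon f(x) + (1-\epsilon)f(\bx) - f(\bx) = \epsilon(f(x) - f(\bx)),
\]
as desired. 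There is no real obstacle here; the only point that requires care is matching the factor-of-two savings from the rounding guarantees to the $1/2$ reduction in $w'$ on edges uncut by $\bx$, which is precisely what makes the perturbed weights $w'$ the right object to compare against.
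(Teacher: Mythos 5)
Your proposal is correct and follows essentially the same route as the paper: take expectations of the perturbed objective using the three rounding guarantees, absorb the factor of $2$ on uncut edges into the $\tfrac{1}{2}$ in $w'$, identify the result with $f(x')$, and finish by convexity of $f$ at $x' = \epsilon x + (1-\epsilon)\bx$. The only cosmetic difference is that you bound $\E[f'(h)] \le f(x')$ and verify $f'(\bx) = f(\bx)$ separately, while the paper tracks the difference $\E[f'(h) - f'(\bx)]$ termwise; the content is identical.
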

\begin{proof}
\begin{align*}
\E{[f'(h) - f'(\bx)]} &= \sum_{u\in V}\sum_{i}c(u,i)\prob{h(u) = i} - \sum_{u\in V}c(u,\bx(u)) + \sum_{uv : \bx(u) = \bx(v)} w'(u,v)\prob{h(u) \ne h(v)} \\
\qquad\qquad &- \sum_{uv : \bx(u) \neq \bx(v)} w'(u,v)\prob{h(u) = h(v)}\\
&= \sum_u\sum_ic(u,i)x'_u(i) - \sum_uc(u,\bx(u)) + \sum_{uv: \bx(u)=\bx(v)}2w'(u,v)d(x',u,v) \\
&- \sum_{uv : \bx(u) \ne \bx(v)}w'(u,v)(1-d(x',u,v))\\
&= \sum_u\sum_ic(u,i)x'_u(i) - \sum_uc(u,\bx(u)) + \sum_{uv\in E}w(u,v)d(x',u,v)- \sum_{uv : \bx(u) \ne \bx(v)}w(u,v)\\
&= f(x') - f(\bx).
\end{align*}
where the second-to-last equality used the definition of $w'$ (note that $w'$ is identical to the worst-case perturbation $w_{adv}$ for $\bx$). Because $f$ is convex (in particular, $d(x,u,v)$ is convex in $x$), we have $f(x') \le \epsilon f(x) + (1-\epsilon)f(\bx)$. Therefore,
\[
\E{[f'(h) - f'(\bx)]} \le \epsilon f(x) + (1-\epsilon)f(\bx) - f(\bx) = \epsilon(f(x) - f(\bx)),
\]
which is what we wanted.
\end{proof}

\subsection{Final proof of Theorem~\ref{thm:lptes}:}\label{sec:finlpt}
Suppose the local LP solution $\hx$ is not the same as the MAP solution $\bx$ i.e. $\hx \neq \bx$. Consider $x' = \epsilon\hat{x} + (1-\epsilon)\bar{x}$ where $0< \epsilon < 1/k$ (see equation~\eqref{eqn:eps-close}).
Let $h$ be the random integer solution output by using Algorithm~\ref{alg:rounding_algorithm} on $x'$. By Lemma~\ref{lem:Ahupp}, we have 
\[ \E{[f'(h) - f'(\bx)]} \leq \epsilon \cdot \rbr{f(\hx) - f(\bx)} \]

We note that any solution $h$ that we get from rounding $x'$ is either $\bx$ or an expansion move of $\bx$. This is because we pick only a single label $i$ in step 1 of Algorithm~\ref{alg:rounding_algorithm} and label all vertices $u$ either $i$ or $\bx(u)$. 
Therefore, for the $i$ picked in step 1, $h$ is an $i$-expansion of $\bx$ if $h \neq \bx$.
\begin{align*}
    \E{[f'(h) - f'(\bx)]} &= \E{[f'(h) - f'(\bx)|h\neq \bx]}\ \Pr[h \neq \bx] + \E{[f'(h) - f'(\bx)|h = \bx]}\ \Pr[h = \bx]
    \\&= \E{[f'(h) - f'(\bx)|h\neq \bx]}\ \Pr[h \neq \bx]
\end{align*}
Since $(G,c,w)$ is a $(2,1)$-expansion stable instance, we know that $f'(h) > f'(\bx)$ when $h \neq \bx$ since all $h$ in the support of the rounding (other than $\bx$) are expansion moves of $\bx$ and we get $f'$ by a valid $(2,1)$-perturbation of $(G,c,w)$.
Therefore, $\E{[f'(h) - f'(\bx)|h\neq \bx]} > 0$. We also have that $\Pr[h \neq \bx] > 0$ since we assumed that $\hx \neq \bx$. Therefore, $\E{[f'(h) - f'(\bx)]} > 0$. But we know that $f(\hx) - f(\bx) \leq 0$ since $\hx$ is the minimizer of $f(x)$ among all feasible $x \in L(G)$. So Lemma \ref{lem:Ahupp} implies $\E{[f'(h) - f'(\bx)]} \le 0$. Thus we have a contradiction and so the local LP solution $\hx$ has to be the same as the MAP solution $\bx$.

\end{proof}

\section{Stability and Curvature around MAP solution: details}\label{sec:stable-curvature_details}
\curvature*

Here, we provide two proofs for this theorem, one deals directly with the local LP relaxation and the other uses the dual of the relaxation. The dual proof is more general than the primal proof as it works for all $x \in L(G)$, not just for those in $L^*(G)$.

\subsection{Primal-based proof}
\begin{proof}
For any $x \in L^*(G)$, consider a feasible solution $x'$ which is $\epsilon$-close to $\bx$ constructed using Equation~\ref{eqn:eps-close} i.e. $x' = \epsilon x + (1-\epsilon)\bar{x}$. Let $h$ be the random solution output by Algorithm~\ref{alg:rounding_algorithm} on $x'$.

\begin{lemma}[Bound for $\E{[B_h]}$]\label{lem:Bh}
For any $h$ in the support of the rounding of $x' = \epsilon x + (1-\epsilon)\bar{x}$, let us define $B_h$ to be the number of vertices which it labels differently from $\bx$. In other words, it is the number of vertices which are misclassified by $h$ i.e. $B_h \coloneqq \sum_{u \in V} \indicator[h(u) \ne \bx(u)]$. Then, 
\[ \E{[B_h]} = \epsilon \sum_{u \in V} \frac{1}{2} \norm{x_u - \bx_u}_1 \]
\end{lemma}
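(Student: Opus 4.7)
The plan is to use linearity of expectation to reduce $\E[B_h]$ to a per-vertex calculation, then read off $\Pr[h(u) \neq \bx(u)]$ from the first guarantee of Lemma~\ref{lemma:rounding-guar}, and finally convert the resulting quantity into the $\ell_1$ distance $\tfrac{1}{2}\norm{x_u - \bx_u}_1$ using the fact that $\bx_u$ is a standard basis vector. Since $B_h = \sum_{u \in V} \indicator[h(u) \neq \bx(u)]$, we have $\E[B_h] = \sum_{u} \Pr[h(u) \neq \bx(u)]$, so it suffices to handle each vertex separately.

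For a fixed $u$, the first rounding guarantee gives $\Pr[h(u) = i] = x'_u(i)$ for every label $i$. Applied to $i = \bx(u)$, this yields $\Pr[h(u) = \bx(u)] = x'_u(\bx(u))$. Substituting the definition $x' = \epsilon x + (1-\epsilon)\bx$ and using $\bx_u(\bx(u)) = 1$, I would conclude that $x'_u(\bx(u)) = \epsilon \cdot x_u(\bx(u)) + (1-\epsilon)$, and therefore $\Pr[h(u) \neq \bx(u)] = \epsilon \bigl(1 - x_u(\bx(u))\bigr)$.

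The remaining step is to identify $1 - x_u(\bx(u))$ with $\tfrac{1}{2}\norm{x_u - \bx_u}_1$. Since $\bx_u$ is the indicator of the single label $\bx(u)$, the $\ell_1$ deviation splits as $|x_u(\bx(u)) - 1| + \sum_{i \neq \bx(u)} x_u(i)$. The normalization constraint $\sum_i x_u(i) = 1$ from $L^\ast(G) \subseteq L(G)$ makes each of these two terms equal to $1 - x_u(\bx(u))$, so their sum is $2(1 - x_u(\bx(u)))$. Summing the per-vertex equality $\Pr[h(u) \neq \bx(u)] = \epsilon \cdot \tfrac{1}{2}\norm{x_u - \bx_u}_1$ over $u \in V$ then gives exactly the claim.

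I do not anticipate any real obstacle: the lemma is essentially an unpacking of the first bullet of Lemma~\ref{lemma:rounding-guar} together with the elementary observation about $\ell_1$ distance to a standard basis vector. The only bookkeeping point is verifying that $x' = \epsilon x + (1-\epsilon)\bx$ with $\epsilon < 1/k$ is a legitimate input to Algorithm~\ref{alg:rounding_algorithm}, i.e.\ that $x'_u(i) \le \epsilon$ for every $i \neq \bx(u)$; this is immediate from $\bx_u(i) = 0$ for such $i$, so no additional work is required.
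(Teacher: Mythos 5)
Your proposal is correct and follows essentially the same route as the paper's proof: linearity of expectation, the first guarantee of Lemma~\ref{lemma:rounding-guar} giving $\Pr[h(u)=\bx(u)] = x'_u(\bx(u)) = \epsilon x_u(\bx(u)) + (1-\epsilon)$, and the normalization constraint $\sum_i x_u(i)=1$ to rewrite $1 - x_u(\bx(u))$ as $\tfrac{1}{2}\norm{x_u - \bx_u}_1$. No gaps.
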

\begin{proof}
\begin{align*}
    \E[B_h] &= \sum_{u \in V} \E[\indicator[h(u) \neq \bx(u)]] = \sum_{u \in V} \prob{h(u) \neq \bx(u)} = \sum_{u \in V} 1 - \prob{h(u) = \bx(u)} 
    \\&= \sum_{u \in V} 1 - x'_u(\bx(u)) = \sum_{u \in V} 1 - \rbr{\epsilon x_u(\bx(u)) + (1-\epsilon)} = \sum_{u \in V} \epsilon \rbr{1 -  x_u(\bx(u))}
    \\&= \epsilon \sum_{u \in V} \frac{1}{2} \rbr{1 - x_u(\bx(u)) + \sum_{i \neq \bx(u)} x_u(i) } = \epsilon \sum_{u \in V} \frac{1}{2} \norm{x_u - \bx_u}_1
\end{align*}
Here, we used the fact that for all $u \in V, \bx_u(\bx(u)) = 1 \text{ and } \bx_u(i) = 0\ \forall\ i \neq \bx(u)$ and since $x$ is a feasible solution to the LP, it satisfies $\sum_{i \ne \bx(u)} x_u(i) = 1 - x_u(\bx(u))$ for all $u \in V$.
\end{proof}

\begin{lemma}[Lower bound for $A_h$ using $(2,1,\psi)$-expansion stability]\label{lem:Ahlow}
If $(G,w,c)$ is a $(2,1,\psi)$-expansion stable instance, then for any $h$ in the support of the rounding of $x' = \epsilon x + (1-\epsilon)\bar{x}$,
 \[A_h \ge \psi \cdot B_h\]
where $A_h = f'(h) - f'(\bx)$ and $f'$ is the objective in the instance $(G,c,w')$ where $w'$ is the worst $(2,1)$ perturbation for $\bx$ i.e. \[
    w'(u,v) = \begin{cases}
    \frac{1}{2}w(u,v) & \bx(u) = 
    \bx(v)\\
    w(u,v) & \bx(u) \ne \bx(v)
\end{cases}
\]
\end{lemma}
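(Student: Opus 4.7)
The plan is to leverage the extra additive stability margin $\psi$ by constructing a specific cost perturbation $c''$ that makes $\bx$ worse by exactly $\psi$ at each vertex, apply the $(2,1)$-expansion stability of the perturbed instance, and then read off the quantitative gap.

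First I would handle the trivial case: if $h=\bx$ then $A_h=0$ and $B_h=0$, so the inequality holds. For the non-trivial case, I would recall (as observed in Section~\ref{sec:finlpt}) that any $h$ in the support of Algorithm~\ref{alg:rounding_algorithm} applied to $x'=\epsilon x+(1-\epsilon)\bx$ is either $\bx$ itself or an $\alpha$-expansion of $\bx$, where $\alpha$ is the label drawn in Step~1. So it suffices to prove the bound whenever $h\in\calE_{\bx}$.

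Next I would define the perturbed costs
\[
    c''(u,i) \;=\; c(u,i) + \psi\cdot\indicator[i=\bx(u)],
\]
which satisfies $c \le c'' \le c + \psi\cdot\mathbf{1}$ componentwise, so by the $(2,1,\psi)$-expansion stability hypothesis the instance $(G,c'',w)$ is $(2,1)$-expansion stable and still has MAP solution $\bx$. Crucially, the adversarial weight perturbation $w'$ (halving weights on edges uncut by $\bx$) depends only on $\bx$, so it is the same adversarial perturbation for both $(G,c,w)$ and $(G,c'',w)$. By Claim~\ref{exp_worst} applied to $(G,c'',w)$, for every expansion $h\ne\bx$,
\[
    \dot{(c'',w')}{h} \;>\; \dot{(c'',w')}{\bx}.
\]

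Finally I would just compute the difference. Writing $f''$ for the objective with costs $c''$ and weights $w'$, the edge contributions cancel on both sides since $w'$ is shared, and for the node contributions we use $c''(u,\bx(u))=c(u,\bx(u))+\psi$ together with $c''(u,h(u))=c(u,h(u))+\psi\cdot\indicator[h(u)=\bx(u)]$ to obtain
\[
    f''(h)-f''(\bx) \;=\; \bigl[f'(h)-f'(\bx)\bigr] \;-\; \psi\sum_{u\in V}\indicator[h(u)\ne\bx(u)] \;=\; A_h - \psi B_h.
\]
Combining with the strict inequality from the previous step gives $A_h>\psi B_h$ when $h\ne\bx$, which together with the trivial case yields $A_h\ge\psi B_h$ as desired. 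The only subtlety worth checking carefully is that $c''$ stays in the allowed band $[c,c+\psi\mathbf{1}]$ and that $\bx$ remains the MAP of $(G,c'',w)$ (which follows automatically from $(2,1,\psi)$-expansion stability); everything else is just bookkeeping.
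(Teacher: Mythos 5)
Your proposal is correct and follows essentially the same route as the paper: perturb the costs by $\psi$ exactly on the labels $\bx(u)$, invoke $(2,1,\psi)$-expansion stability to get strict superiority of $\bx$ over its expansions under the shared adversarial weights $w'$, use that every rounded $h\ne\bx$ is an expansion move of $\bx$, and read off $A_h - \psi B_h = f''(h)-f''(\bx) > 0$, with the trivial case $h=\bx$ handled separately. The only cosmetic difference is that you route the strict inequality through Claim~\ref{exp_worst} explicitly, which the paper states more informally.
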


\begin{proof}
Note that $A_h$ here is the same as the one defined for Lemma~\ref{lem:Ahupp}.
Since the instance $(G,c,w)$ is $(2,1,\psi)$-expansion stable, we know that $(G,c',w)$ should be $(2,1)$-expansion stable for all $c'$ such that $c \leq c' \leq c + \psi \cdot \mathbf{1}$. Consider the worst $c'$ for $\bx$ i.e.  $c'(u,i) = \begin{cases}
    c(u,i) + \psi & i = \bx(u)\\
    c(u,i) & i \ne \bx(u)
\end{cases}$. 
Let $f''$ be the objective in the instance $(G,c',w')$. As discussed in section~\ref{sec:finlpt}, we know that any $h \neq \bx$ in the support of the rounding is an expansion move of $\bx$. Therefore, for any $h \neq \bx$ in the support of the rounding of $x'$,

\begin{align*}
    & f''(h) - f''(\bx) > 0 \implies \sum_{u \in V} c'(u,h(u)) - c'(u,\bx(u))  + \sum_{(u,v):h(u)\neq h(v)} w'(u,v) - \sum_{(u,v):\bx(u)\neq \bx(v)} w'(u,v) > 0
    \\&\implies \sum_{u \in V} c(u,h(u)) - \rbr{c(u,\bx(u)) + \psi \cdot \indicator[h(u) \ne \bx(u)] } + \sum_{(u,v):h(u)\neq h(v)} w'(u,v) - \sum_{(u,v):\bx(u)\neq \bx(v)} w'(u,v) > 0
    \\&\implies f'(h) - f'(g) > \psi \cdot \sum_{u \in V} \indicator[h(u) \ne \bx(u)] \implies A_h > \psi \cdot B_h.
\end{align*}
This is true for all $h \neq \bx$ in the support of the rounding of $x'$. When $h = \bx$, we have $A_h = B_h = 0$. Therefore for all $h$ in the support of the rounding of $x'$, we have that $A_h \geq \psi \cdot B_h$.  \end{proof}

\subsection{Final proof of Theorem~\ref{thm:curvature}:}
We use the Lemmas \ref{lem:Ahupp}(upper bound for $\E[A_h]$), \ref{lem:Ahlow}(lower bound for $A_h$), and \ref{lem:Bh}(bound for $\E[B_h]$) to prove Theorem~\ref{thm:curvature}. For all $h$ in the support of rounding of $x$, $A_h \geq \psi \cdot B_h$. Also, 
\[ \E[A_h] \leq \epsilon \rbr{f(x) - f(\bx)},\ \E[B_h] = \epsilon \sum_{u \in V} \frac{1}{2} \norm{x_u - \bx_u}_1  \]

Suppose that $\norm{x - \bx}_1 > \tau \cdot \rbr{f(x) - f(\bx)}$. Then,

\[ \frac{\E[A_h]}{\E[B_h]} \leq \frac{f(x) - f(\bx)}{\sum_{u \in V} \frac{1}{2} \norm{x_u - \bx_u}_1} < \frac{2}{\tau} \]

But since $A_h \geq \psi \cdot B_h$ for every $h$ in the rounding of $x$, we get that $\dfrac{\E[A_h]}{\E[B_h]} \geq \psi$.
\\Setting $\tau = \frac{2}{\psi}$, we get a contradiction and thus we get, \[ \frac{1}{2}\norm{x - \bx}_1 \leq \frac{1}{\psi}\cdot \rbr{f(x) - f(\bx)} = \frac{1}{\psi} \cdot \rbr{\dot{\theta}{x}-\dot{\theta}{\bx}} \]
\end{proof}

\deviation*
\begin{proof}[Proof of Corollary \ref{cor:deviation}]
First, we note that for the nearby stable instance, the MAP and the local LP solutions are the same due to Theorem~\ref{thm:lptes}. Therefore, for any feasible solution $x \in L(G)$, $ \dot{\bar{\theta}}{x} \ge \dot{\bar{\theta}}{\bx}$. In particular, this implies that $\dot{\bar{\theta}}{\hx} \ge \dot{\bar{\theta}}{\bx}$ and $\dot{\bar{\theta}}{\hx^{MAP}} \ge \dot{\bar{\theta}}{\bx}$ since $\hx, \hx^{MAP}$ are also feasible solutions. Remember that we defined $d(\hat{\theta},\bar{\theta}) \coloneqq \sup_{x \in L^*(G)} \abs{ \dot{\hat{\theta}}{x}  - \dot{\bar{\theta}}{x}}$. Therefore,
\[ \dot{\bar{\theta}}{\hx} \leq \dot{\hat{\theta}}{\hx} + d(\hat{\theta},\bar{\theta}) \leq \dot{\hat{\theta}}{\bx} + d(\hat{\theta},\bar{\theta}) \leq \dot{\bar{\theta}}{\bx} + 2d(\hat{\theta},\bar{\theta}). \]
The first and third inequalities hold due to the definition of $d(\hat{\theta},\bar{\theta})$. The second inequality follows from the fact that $\hx$ is the minimizer for $\dot{\hat{\theta}}{x}$ among $x \in L(G)$. Thus, $0 \leq \dot{\bar{\theta}}{\hx} - \dot{\bar{\theta}}{\bx} \leq 2d(\hat{\theta},\bar{\theta})$. From Theorem~\ref{thm:curvature}, we get
$\frac{1}{2}\norm{\hx_V - \bx_V}_1 \le \frac{2d(\hat{\theta},\bar{\theta})}{\psi}$. Thus,
\begin{align*}
    \frac{1}{2}\norm{\hx_V - \hx_V^{MAP}}_1 &\leq \frac{1}{2}\norm{\hx_V - \bx_V}_1 + \frac{1}{2}\norm{\hx_V^{MAP} - \bx_V}_1 \\&\leq \frac{2d(\hat\theta,\bar{\theta})}{\psi} + \frac{1}{2}\norm{\hx_V^{MAP} - \bx_V}_1. 
\end{align*}
\end{proof}

\subsection{Dual-based proof}
Here we provide an alternate proof of the curvature result using the dual of the local LP relaxation.
First, we show that the curvature bound is related to the \emph{dual margin} of the instance. 
Then we show that $(2,1,\psi)$-expansion stability implies that the dual margin is at least $\psi$.
Throughout this section, we assume the local LP solution $\hat{x}$ is unique and integral (as guaranteed, for example, by $(2,1)$-expansion stability), so $\hat{x} = \bar{x}$.

Relaxing the
local LP's marginalization constraints in both directions for each edge, we obtain
the following Lagrangian for the local LP:
\[
L(\delta,x) = \sum_{u}\sum_i\left(\theta_u(i) + \sum_{v \in N(u)}\delta_{uv}(i)\right)x_u(i) + \sum_{uv}\sum_{ij}\left(\theta_{uv}(i,j) - \delta_{uv}(i) - \delta_{vu}(j)\right)x_{uv}(i,j)
\]
where each $x_u$ is constrained to be in the $(k-1)$-dimensional simplex, and each $x_{uv}$ the $k^2-1$-dimensional simplex (i.e., the normalization constraints remain). There are no constraints on the dual variables $\delta$. Observe that for any $\delta$ and any primal-feasible $x$, $L(\delta,x) = \langle \theta, x\rangle$. This gives rise to the \emph{reparametrization} view: for a fixed $\delta$, define $\theta^{\delta}_u(i) = \theta_u(i) + \sum_{v\in N(u)}\delta_{uv}(i)$, and $\theta^{\delta}_{uv}(i,j) = \theta_{uv}(i,j) - \delta_{uv}(i) - \delta_{vu}(j)$. Then $L(\delta, x) = \langle \theta^{\delta}, x\rangle$. This will allow us to define equivalent primal problems with simpler structure than the original. $L(\delta,x)$ also gives the dual function:
\[
D(\delta) = \min_x L(\delta,x) = \sum_{u}\min_{i}\left(\theta_u(i) + \sum_{v \in N(u)}\delta_{uv}(i)\right) + \sum_{uv}\min_{i,j}\left(\theta_{uv}(i,j) - \delta_{uv}(i) - \delta_{vu}(j)\right).
\]
A dual point $\delta$ is a dual \emph{solution} if $\delta \in \argmax_{\delta'} D(\delta')$. Theorem \ref{thm:lptes} implies that the local LP has a unique, integral solution when the instance is $(2,1,\psi)$-expansion stable. 
\citet[Theorem 1.3]{sontag2011introduction} show that this implies the existence of a dual solution $\delta^*$ that is \emph{locally decodable} at all nodes $u$: for each $u$, $\argmin_i \theta^{\delta^*}_u(i)$ is unique, and moreover, the edge and node dual subproblems agree:
\begin{equation}
\label{eqn:dual-subproblems-agree}
\left(\argmin_i \theta^{\delta^*}_u(i), \argmin_j \theta^{\delta^*}_v(j)\right) \in \argmin_{i,j}\theta_{uv}^{\delta^*}(i,j).
\end{equation}
In this case, the primal solution defined by ``decoding'' $\delta^*$, $x(u) = \argmin_i \theta^{\delta^*}_u(i)$, is the MAP solution \citep{sontag2011introduction}.

For locally decodable $\delta^*$, we define the \emph{node margin} $\psi_u(\delta^*)$ at a node $u$ as: \[
\psi_u(\delta^*) = \min_{i\ne \argmin_j \theta^{\delta^*}(j)} \theta^{\delta^*}(i) - \min_j \theta^{\delta^*}(j).
\]
This is the difference between the optimal reparametrized node cost at $u$ and the next-smallest cost.
Local decodability of $\delta$ is the property that $\psi_u(\delta) > 0$ for every $u$. 

Together with \eqref{eqn:dual-subproblems-agree}, the following lemma implies that we need only consider locally decodable dual solutions where the optimal primal solution pays zero edge cost.
\begin{lemma}[Dual edge ``removal'']\label{lemma:edge-removal}
  Given a locally decodable dual solution $\delta$, we can transform
  it to a locally decodable dual solution $\delta'$ that satisfies
  $\min_{i,j}\theta_{uv}^{\delta'}(i,j) = 0$ and has the same (additive) margin at every node.
\end{lemma}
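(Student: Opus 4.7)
My plan is to produce $\delta'$ from $\delta$ by a uniform-in-labels shift on each pair $(\delta_{uv}, \delta_{vu})$ designed to zero out the minimum of each edge block while leaving every node block unchanged up to a constant. The key observation driving this is that the Lagrangian's reparametrization $\theta^\delta$ has two useful equivariances: adding the same constant $c$ to $\delta_{uv}(i)$ for \emph{all} labels $i$ shifts $\theta^\delta_u(\cdot)$ by $+c$ uniformly (hence preserving the node margin $\psi_u$ and the local-decoding argmin at $u$) and shifts $\theta^\delta_{uv}(\cdot,\cdot)$ by $-c$ uniformly; and by symmetry adding $c$ to $\delta_{vu}(j)$ uniformly in $j$ subtracts another $c$ from the edge block while only adding a constant to $\theta^\delta_v$. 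Thus any uniform-in-labels reparametrization along edges leaves node margins invariant and lets us add or subtract constants from edge blocks freely.

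The explicit construction I would use is this: for every edge $(u,v)$, let $m_{uv}=\min_{i,j}\theta^\delta_{uv}(i,j)$, and define
\[
\delta'_{uv}(i)=\delta_{uv}(i)+\tfrac{1}{2}m_{uv} \quad\text{for all } i,\qquad \delta'_{vu}(j)=\delta_{vu}(j)+\tfrac{1}{2}m_{uv} \quad\text{for all } j.
\]
By the equivariances above, $\theta^{\delta'}_{uv}(i,j)=\theta^\delta_{uv}(i,j)-m_{uv}$, so $\min_{i,j}\theta^{\delta'}_{uv}(i,j)=0$, exactly the first conclusion. At a node $u$, the reparametrized cost becomes $\theta^{\delta'}_u(i)=\theta^\delta_u(i)+\sum_{v\in N(u)}\tfrac{1}{2}m_{uv}$, where the extra term is a constant independent of the label $i$. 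Hence $\argmin_i\theta^{\delta'}_u(i)=\argmin_i\theta^\delta_u(i)$ (so $\delta'$ is still locally decodable and decodes to the same primal solution), and the additive node margin $\psi_u(\delta')=\psi_u(\delta)$.

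The remaining check is that $\delta'$ is still a \emph{dual solution}, i.e.\ still maximizes $D$. A direct computation gives
\[
D(\delta')-D(\delta)=\sum_u\sum_{v\in N(u)}\tfrac{1}{2}m_{uv}-\sum_{(u,v)\in E}m_{uv}=0,
\]
since each edge $(u,v)$ appears twice in the double sum (once as a neighbor of $u$, once as a neighbor of $v$), each contributing $\tfrac{1}{2}m_{uv}$. Thus $D(\delta')=D(\delta)$, so $\delta'$ is also optimal. Combining these three facts yields the lemma. I don't expect a serious obstacle here; the only thing to be careful about is that $m_{uv}$ need not be nonnegative after an arbitrary reparametrization, but the construction above works for either sign since we are merely subtracting a constant from the edge block.
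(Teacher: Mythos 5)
Your proof is correct and follows essentially the same route as the paper: a uniform-in-labels shift of the edge dual variables that zeroes out $\min_{i,j}\theta^{\delta}_{uv}(i,j)$, preserves node argmins and additive margins (node blocks move by label-independent constants), and leaves the dual objective unchanged because the edge gain cancels the node loss. The only cosmetic difference is that you split each edge's shift half-and-half between $\delta_{uv}$ and $\delta_{vu}$ and apply all edges at once with a global cancellation check, whereas the paper shifts one side per edge and iterates edge by edge; the argument is the same.
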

\begin{proof}
  Fix an edge $(u,v)$, and consider any pair $i^*,j^*$ in $\argmin_{i,j}\theta_{uv}^{\delta}(i,j)$. Put $\theta^\delta_{uv}(i^*,j^*) = \theta_{uv}(i^*,j^*) + \epsilon$ for $\epsilon \in \mathbb{R}$. Now define $\delta'_{uv}(i) = \delta_{uv}(i) - \epsilon$ for all $i$ (or, equivalently, $\delta'_{vu}(j) = \delta_{vu}(j) - \epsilon$ for all $j$). Because we changed $\theta^{\delta}_u(i)$ by a constant for each $i$, local decodability is preserved and the additive \emph{margin of local decodability} is not changed. We incurred a change of $+\epsilon$ in the dual objective of $\delta$ from the edge term $\min_{i,j} \theta_{uv}^{\delta'}(i,j)$, and a $-\epsilon$ in the objective from the decrease in the node term $\min_i \theta^{\delta'}_u(i)$, so $\delta'$ is still optimal. We can repeat this process for every edge $(u,v)$.
\end{proof}
Lemma \ref{lemma:edge-removal} implies that when $(x^*,\delta^*)$ is a
pair of primal/dual optima and $\delta^*$ is locally decodable, we can assume that $L(x^*,\delta^*) = \sum_u\theta^{\delta^*}_u(x^*_u)$, where we overload notation to define $x^*_u$ to be the label for which $x^*_u(i) = 1$. That is, the primal optimum pays no edge cost in the problem reparametrized by the dual opt $\delta^*$. Finally, Lemma \ref{lemma:edge-removal} implies that we can always assume that $\theta_{uv}^{\delta^*}(i,j) \ge 0$ for all $(u,v),\ (i,j)$. Therefore, if there is any locally decodable dual solution, and the primal LP solution is integral and unique, we may assume there exists a locally decodable dual solution $\delta$ such that $\bar{x}(u) = \argmin_i\theta^{\delta}(i)$, $(\bx(u), \bx(v)) \in \argmin_{i,j}\theta^{\delta}_{uv}(i,j)$, $\theta^{\delta}_{uv}(\bx(u),\bx(v)) = 0$, and $\theta^{\delta}_{uv}(i,j) \ge 0$.

\begin{lemma}[Dual margin implies curvature around $\bx$]
For an instance with objective $\theta$ and MAP solution $\bx$, assume there exists a locally decodable dual solution $\delta$ such that $\bar{x}(u) = \argmin_i\theta^{\delta}(i)$, $(\bx(u), \bx(v)) \in \argmin_{i,j}\theta^{\delta}_{uv}(i,j)$, $\theta^{\delta}_{uv}(\bx(u),\bx(v)) = 0$, and $\theta^{\delta}_{uv}(i,j) \ge 0$. Additionally, let $\psi(\delta) = \min_u\psi_u(\delta)$ be the smallest node margin. Note that $\psi(\delta) > 0$ because $\delta$ is locally decodable.
Then for any $x\in L(G)$,
\[
\frac{1}{2}||x_V-\bx_V||_1 \le \frac{\dot{\theta}{x-\bx}}{\psi(\delta)}
\]
\end{lemma}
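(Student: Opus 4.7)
The plan is to exploit the reparametrization view of the dual and the listed properties of $\delta$ in sequence: first translate the gap $\langle\theta, x-\bx\rangle$ into a sum of reparametrized node contributions, then lower-bound each node contribution via the node margin $\psi(\delta)$, and finally recognize the resulting sum as $\tfrac{1}{2}\|x_V - \bx_V\|_1$.

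First I would use the standard reparametrization identity: for any $x \in L(G)$, the marginalization constraints \eqref{con:marg1} and \eqref{con:marg2} imply $\langle \theta, x\rangle = \langle \theta^\delta, x\rangle$ (each dual variable $\delta_{uv}(i)$ appears with coefficient $x_u(i) - \sum_j x_{uv}(i,j) = 0$). Applying this to both $x$ and $\bx$ gives $\langle \theta, x - \bx\rangle = \langle \theta^{\delta}, x\rangle - \langle \theta^{\delta}, \bx\rangle$. For the MAP solution $\bx$, integrality plus the assumptions $(\bx(u),\bx(v)) \in \argmin_{i,j}\theta^\delta_{uv}(i,j)$ and $\theta^\delta_{uv}(\bx(u),\bx(v)) = 0$ imply the edge contributions vanish, so $\langle \theta^{\delta}, \bx\rangle = \sum_u \theta^\delta_u(\bx(u))$.

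Next I would handle the arbitrary $x \in L(G)$. Since $\theta^\delta_{uv}(i,j) \ge 0$ and $x_{uv}(i,j) \ge 0$, the edge part of $\langle \theta^\delta, x\rangle$ is nonnegative and can be dropped as a lower bound, giving $\langle \theta^\delta, x\rangle \ge \sum_u \sum_i \theta^\delta_u(i) x_u(i)$. Combining with the previous step,
\begin{equation*}
\langle \theta, x-\bx\rangle \;\ge\; \sum_u \sum_i \theta^\delta_u(i)\, x_u(i) \;-\; \sum_u \theta^\delta_u(\bx(u)).
\end{equation*}
Using $\sum_i x_u(i) = 1$ to rewrite the per-node term as $\sum_{i \ne \bx(u)} (\theta^\delta_u(i) - \theta^\delta_u(\bx(u)))\, x_u(i)$, the definition of the node margin gives $\theta^\delta_u(i) - \theta^\delta_u(\bx(u)) \ge \psi_u(\delta) \ge \psi(\delta)$ for every $i \ne \bx(u)$. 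Therefore each node contributes at least $\psi(\delta) \sum_{i \ne \bx(u)} x_u(i) = \psi(\delta)(1 - x_u(\bx(u)))$.

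The final step is arithmetic: since $\bx_u(\bx(u)) = 1$ and $\bx_u(i) = 0$ otherwise, $\tfrac{1}{2}\|x_u - \bx_u\|_1 = 1 - x_u(\bx(u))$, so summing over $u$ yields $\sum_u(1 - x_u(\bx(u))) = \tfrac{1}{2}\|x_V - \bx_V\|_1$. Putting the bounds together gives $\langle \theta, x-\bx\rangle \ge \psi(\delta)\cdot \tfrac{1}{2}\|x_V - \bx_V\|_1$, which rearranges to the claimed inequality. There is no real obstacle here — the only subtlety is tracking which property of $\delta$ is used where: the reparametrization identity needs only LP-feasibility of $x$; dropping the edge term needs $\theta^\delta_{uv} \ge 0$ (guaranteed by Lemma \ref{lemma:edge-removal}); collapsing $\langle \theta^\delta, \bx\rangle$ to the node sum needs integrality of $\bx$ together with $\theta^\delta_{uv}(\bx(u),\bx(v))=0$; and the per-node lower bound needs exactly the margin $\psi(\delta) > 0$ from local decodability.
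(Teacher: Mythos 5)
Your proof is correct and takes essentially the same route as the paper's own argument: both reparametrize the objective (your $\langle\theta,x\rangle=\langle\theta^{\delta},x\rangle$ is exactly the paper's $L(\delta,x)=\langle\theta,x\rangle$ for primal-feasible $x$), collapse $\langle\theta^{\delta},\bx\rangle$ to the node sum using $\theta^{\delta}_{uv}(\bx(u),\bx(v))=0$, drop the nonnegative edge terms of $x$, and invoke the node margin together with $\sum_i x_u(i)=1$ to identify the remainder with $\tfrac{1}{2}\|x_V-\bx_V\|_1$. The only difference is presentational: you organize the bound per node, while the paper manipulates the global sums through the Lagrangian.
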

\begin{proof}
  Let $\Delta = \dot{\theta}{x - \bx}$. 
  Since $x$ and $\bx$ are both primal-feasible, we have $L(x,\delta) = \dot{\theta}{x}$ and $L(\bx, \delta) = \dot{\theta}{\bx}$. Therefore,
  \begin{equation}\label{eqn:lagrangian-ineq}
  L(x,\delta) = L(\bx,\delta) + \Delta.
  \end{equation}
  Because $\theta^{\delta}(\bx(u),\bx(v)) = 0$ for all $(u,v)$, we have
  \[
  L(\bx,\delta) = \sum_u \theta_u^{\delta}(\bx(u)).
  \]
  Additionally, because $\theta^{\delta}_{uv}(i,j) \ge 0$,
  \[
  L(x,\delta) = \sum_u\sum_i\theta_u^{\delta}(i)x_u(i) + \sum_{uv}\sum_{ij}\theta^{\delta}_{uv}(i,j)x_{uv}(i,j) \ge \sum_u\sum_i\theta_u^{\delta}(i)x_u(i).
  \]
  Combining the above two inequalities with \eqref{eqn:lagrangian-ineq} gives:
  \begin{equation}\label{eqn:node-ineq}
  \sum_u\sum_i\theta_u^{\delta}(i)x_u(i)  \le \sum_u \theta_u^{\delta}(\bx(u)) + \Delta
  \end{equation}
  Because $\delta$ is locally decodable to $\bx$, and the smallest node margin is equal to $\psi(\delta)$, we have that for every $u$,
  $\theta^{\delta}_u(\bx(u)) + \psi(\delta) \le \theta^{\delta}_u(i)$ for all $i\ne \bx(u)$.
  The margin condition implies:
  \[
  \sum_u\theta^{\delta}_u(\bx(u))x_u(\bx(u)) + \sum_u\sum_{i\ne \bx(u)}(\theta^{\delta}_u(\bx(u)) + \psi(\delta))x_u(i) < \sum_u\sum_i\theta_u^{\delta}(i)x_u(i),
  \]
  and simplifying using $\sum_ix_u(i) = 1$ gives:
  \[
  \sum_u\theta^{\delta}_u(\bx(u)) + \psi(\delta)\sum_u\sum_{i\ne \bx(u)}x_u(i) < \sum_u\sum_i\theta_u^{\delta}(i)x_u(i).
  \]
  Plugging in to \eqref{eqn:node-ineq} gives:
  \[
  \sum_u\sum_{i\ne \bx(u)}x_u(i) < \frac{\Delta}{\psi(\delta)}.
  \]
  The left-hand-side is precisely $||x_V-\bx_V||_1 / 2$.
\end{proof}
Now we show that $(2,1,\psi)$-expansion stability implies that there exists a locally decodable dual solution $\delta$ with dual margin $\psi(\delta) \ge \psi$.

\begin{lemma}[$(2,1,\psi)$-expansion stability gives a lower bound on dual margin]
Let $(G,c,w)$ be a $(2,1,\psi)$-expansion stable instance with $\psi > 0$. Then there exists a locally decodable dual solution $\delta$ with dual margin $\psi(\delta) \ge \psi$.
\end{lemma}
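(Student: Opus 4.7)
\emph{Proof proposal.} The plan is to produce $\delta$ by solving the dual of a specific \emph{worst-case} additively perturbed instance and then ``shifting'' the resulting dual back to the original problem. Concretely, define $c'(u,i) = c(u,i) + \psi \cdot \indicator[i = \bx(u)]$ and $c'(u,i) = c(u,i)$ for $i \ne \bx(u)$. This $c'$ satisfies $c \le c' \le c + \psi \mathbf{1}$, so by $(2,1,\psi)$-expansion stability the perturbed instance $(G, c', w)$ is $(2,1)$-expansion stable. A short continuity argument along the segment $c_t = c + t \psi (c' - c)$, $t \in [0,1]$, shows that the MAP of $(G, c', w)$ is still $\bx$: each intermediate instance has a unique MAP by its $(2,1)$-expansion stability, and a change of optimizer along the path would require a tie-breaking intermediate $c_t$, contradicting uniqueness.

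Next I would invoke Theorem~\ref{thm:lptes} to conclude the local LP is tight on $(G, c', w)$ with the unique integral solution $\bx$. By the result of \citet{sontag2011introduction} cited before the lemma, this implies there exists a locally decodable dual solution $\delta'$ for the perturbed instance, and by Lemma~\ref{lemma:edge-removal} we may further assume $(\theta')^{\delta'}_{uv}(\bx(u),\bx(v)) = 0$ and $(\theta')^{\delta'}_{uv}(i,j) \ge 0$ for all $(u,v),(i,j)$, while $\bx(u) = \argmin_i (\theta')^{\delta'}_u(i)$.

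The key step is to set $\delta = \delta'$ and observe that since $\theta$ and $\theta'$ agree on edge terms and differ by exactly $+\psi$ at node-label pair $(u,\bx(u))$, the reparametrizations satisfy $\theta^{\delta}_u(\bx(u)) = (\theta')^{\delta'}_u(\bx(u)) - \psi$, $\theta^{\delta}_u(i) = (\theta')^{\delta'}_u(i)$ for $i \ne \bx(u)$, and $\theta^{\delta}_{uv}(i,j) = (\theta')^{\delta'}_{uv}(i,j)$. From this I would read off both required properties. For the margin:
\[
\theta^{\delta}_u(i) - \theta^{\delta}_u(\bx(u)) = \bigl((\theta')^{\delta'}_u(i) - (\theta')^{\delta'}_u(\bx(u))\bigr) + \psi \ge \psi
\]
for every $i \ne \bx(u)$, using local decodability of $\delta'$; the unique minimizer at each $u$ is $\bx(u)$, and the edge condition $(\bx(u),\bx(v)) \in \argmin_{i,j} \theta^{\delta}_{uv}(i,j)$ transfers directly from $\delta'$.

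Finally, to confirm that $\delta$ is actually a dual \emph{optimum} (not just feasible) for $(G,c,w)$, I would compute the dual value: $D_{\mathrm{orig}}(\delta) = \sum_u \theta^{\delta}_u(\bx(u)) + 0 = \sum_u (\theta')^{\delta'}_u(\bx(u)) - n\psi = D_{\mathrm{pert}}(\delta') - n\psi$. Strong duality on the perturbed instance gives $D_{\mathrm{pert}}(\delta') = \dot{\theta'}{\bx} = \dot{\theta}{\bx} + n\psi$, so $D_{\mathrm{orig}}(\delta) = \dot{\theta}{\bx}$, matching the primal LP value on the original by Theorem~\ref{thm:lptes}. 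The main obstacle I anticipate is justifying that $\bx$ remains the MAP for the whole perturbation range (the continuity argument); everything else is bookkeeping once the right perturbation has been chosen.
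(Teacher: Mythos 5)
Your proposal is correct and follows essentially the same route as the paper: bump the cost of the MAP label at every vertex by $\psi$, use $(2,1)$-expansion stability of the perturbed instance plus Theorem~\ref{thm:lptes} to obtain a locally decodable dual solution, and then reuse that same dual for the original instance, where each node margin increases by exactly $\psi$ and the dual objective drops by $n\psi$ to match $\dot{\theta}{\bx}$. Your continuity argument showing the MAP stays $\bx$ along the perturbation path (modulo the harmless typo $c_t = c + t(c'-c)$) and the explicit strong-duality bookkeeping merely spell out steps the paper's proof leaves implicit.
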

\begin{proof}
Define new costs $c_{\psi}$ as
\[
c_\psi(u,i) = \begin{cases}
c(u,i) + \psi & \bx(u) = i\\
c(u,i) & \text{otherwise.}
\end{cases}
\]
By definition, the instance $(G,c_{\psi},w)$ is $(2,1)$-expansion stable (see Definition \ref{def:expansion-stability}). Theorem \ref{thm:lptes} implies the pairwise LP solution is unique and integral on $(G,c_{\psi},w)$. This implies there exists a dual solution $\delta^0$ that is locally decodable to $\bx$. The only guarantee on the dual margin of $\delta^0$ is that $\psi(\delta^0) > 0$. But note that $\delta^0$ is also an optimal dual solution for $(G,c,w)$, since its objective in that instance is the same as the objective of $\bx$. But in that instance, the dual margin at every node is at least $\psi$, because $c_{\psi}(u,\bx(u)) - c(u,\bx(u)) = \psi$. So $\psi(\delta) \ge \psi$.
\end{proof}

These two lemmas directly imply Theorem \ref{thm:curvature}. This dual proof is slightly more general than the primal proof, since the curvature result applies to any $x\in L(G)$.

\section{Details for Generative model}\label{sec:random-model_details}

\begin{definition}[sub-Gaussians and $(b,\sigma)$-truncated sub-Gaussians] \label{def:subG}
Suppose $b \in \R, \sigma \in \R_+$.
A random variable $X$ with mean $\mu$ is sub-Gaussian with parameter $\sigma$ if and only if  $\E[e^{\lambda(X-\mu)}]\le \exp(\lambda^2 \sigma^2/2)$ for all $\lambda \in \R$.
The random variable $X$ is $(b,\sigma)$-truncated sub-Gaussian if and only if $X$ is supported in $(b,\infty)$ and $X$ is sub-Gaussian with parameter $\sigma$. 
\end{definition}

We remark that the above definition captures many well-studied families of bounded random variables e.g., Rademacher distributions, uniform distributions on an interval etc. We remark that a bounded random variable supported on $[-M,M]$ is also sub-Gaussian with parameter $M$. However in our setting, it needs to be truncated only on negative side, and the bound $M$ will be much larger than the variance parameter $\sigma$; the bound is solely to ensure non-negativity of edge costs. A canonical example to keep in mind is a truncated Gaussian distribution.
We use the following standard large deviations bound for sums of sub-Gaussian random variables (for details, refer to Thm 2.6.2 from \citet{vershynin_hdp}). Given independent r.v.s $X_1, X_2, \dots, X_n$, with $X_i$ drawn from a sub-Gaussian with parameter $\sigma_i$ we have for $\mu=\sum_{i=1}^n \E[X_i]$ and $\sigma^2 = \sum_{i=1}^n \sigma_i^2$,
\begin{equation}\label{eq:deviationbound}
    \Pr\Big[ \Big| \sum_{i=1}^n X_i - \mu \Big| \ge t \Big] \le 2\exp\Big(-\frac{t^2}{2\sigma^2} \Big).
\end{equation} 

\dswhp*

\begin{proof}
As discussed in section~\ref{sec:expansion-stability_details}, for any $x \in L^*(G)$, the objective of the local LP can be written as \[\sum_{u\in V}\sum_i c(u,i)x_u(i) + \sum_{(u,v) \in E}w(u,v)d(u,v)\]
where $d(u,v) \coloneqq \frac{1}{2}\sum_{i}\abs{x_u(i) - x_v(i)}$. Let $\hat f(x) \coloneqq \dot{\hat{\theta}}{x}, \bar{f}(x) \coloneqq \dot{\bar{\theta}}{x}$. Then,

\begin{align*}
\abs{\dot{\hat{\theta}}{x}  - \dot{\bar{\theta}}{x}} &= \abs{\hat{f}(x) - \bar{f}(x)} = \Big|\sum_{u \in V} \sum_{i\in L} \tc(u,i) x_u(i) + \sum_{(u,v) \in E} \tw(u,v)d(u,v)\Big|
\end{align*}

For any feasible LP solution $x$, consider the following rounding algorithm $\calR$: 

\begin{algorithm}[H]
   \caption{$\calR$ rounding}
   \label{alg:R_rounding}
\begin{algorithmic}[1]
   \FOR{each $i \in \calL$}
        \STATE Choose $r_i \in (0,1)$ uniformly at random.
        \FOR{each $u \in V$}
            \IF {$x_u(i) > r_i$}
                \STATE $\calR(x)_u(i) = 1$.
            \ELSE
                \STATE $\calR(x)_u(i) = 0$
            \ENDIF 
        \ENDFOR
    \ENDFOR
\end{algorithmic}
\end{algorithm}

Then, we have
\begin{align*}
\E[\hat f(\calR(x)) - \bar{f}(\calR(x))] &= \sum_{u \in V} \sum_{i\in L} \tc(u,i) \E[\indicator[\calR(x)_u(i) = 1]] + \sum_{(u,v) \in E} \frac{\tw(u,v)}{2}\sum_{i} \E[\indicator[\calR(x)_u(i) \neq \calR(x)_v(i)]]
\\&= \sum_{u \in V} \sum_{i\in L} \tc(u,i) \Pr[x_u(i) > r_i] + \sum_{(u,v) \in E} \frac{\tw(u,v)}{2}\sum_{i} \Pr[\min\rbr{x_u(i), x_v(i)} \leq r_i < \max\rbr{x_u(i),x_v(i)}]
\\&= \sum_{u \in V} \sum_{i\in L} \tc(u,i) x_u(i) + \sum_{(u,v) \in E} \frac{\tw(u,v)}{2}\sum_{i} \abs{x_u(i) - x_v(i)}
\\&= \sum_{u \in V} \sum_{i\in L} \tc(u,i) x_u(i) + \sum_{(u,v) \in E} \tw(u,v)d(u,v) = \hat f(x) - \bar{f}(x) 
\end{align*}

Therefore,
\[\sup_{x \in L^*(G)}\abs{\hat f(x) - \bar{f}(x)} = \sup_{x \in L^*(G)} \abs{\E[\hat f(\calR(x)) - \bar{f}(\calR(x))] } \leq \sup_{\hat x_V \in \{0,1\}^{nk}} \abs{\hat{f}(\hat x_V) - \bar{f}(\hat x_V)}\]

Note that for all $x \in L^*(G)$, $\hat{f}(x)$ and $\bar{f}(x)$ only depend on the portion of $x$ restricted to the vertices i.e. $x_V$. This is why we only need to look at $\hat{x}_v \in \{0,1\}^{nk}$ for the last inequality.

For any fixed $\hat x_V \in \{0,1\}^{nk}$, since $\tw(u,v), \tc(u,i)$ are all mean $0$ and sub-Gaussian with parameters $\gamma_{u,v}, \sigma_{u,i}$, we have for any $t > 0$,
\[ \Pr \left[\abs{\hat f(\hat x_V) - \bar{f}(\hat x)} > t\right] \leq 2\exp\left(\frac{-t^2}{2\rbr{\sum_{u,i}\sigma_{u,i}^2 + k^2/4\sum_{uv}\gamma_{u,v}^2}}\right) \]

Taking $t = c\sqrt{nk}\sqrt{\sum_{u,i}\sigma_{u,i}^2 + k^2/4\sum_{uv}\gamma_{u,v}^2}$, we get that for any fixed $\hat x_V \in \{0,1\}^{nk}$,

\[ \Pr \left[\abs{\hat f(\hat x) - \bar{f}(\hat x)} > t\right] \leq 2\exp\left(-c^2 nk\right) \]

Taking a union bound over $\{0,1\}^{nk}$, we get that

\[ \Pr \left[ \sup_{\hat x_V \in \{0,1\}^{nk}}  \abs{\hat f(\hat x) - \bar{f}(\hat x)} > t\right] \leq 2\exp\left(nk\rbr{\log 2 - c^2}\right) \]
\end{proof}

Here, $c$ needs to be greater than $\sqrt{\ln 2} \approx 0.83$ to get a high probability guarantee.

\mapreg*
\begin{proof}
From Theorem~\ref{thm:apmap}, we have that, with high probability over the random noise \[ \frac{1}{2}\norm{\hx_V - \bar{x}_V}_1 \le \frac{2}{\psi} \cdot c\sqrt{nk} \cdot \sqrt{\sum_{u,i}\sigma_{u,i}^2 + \frac{k^2}{4}\sum_{uv}\gamma_{u,v}^2} \]
In this setting, this leads to 
\[ \frac{2}{\psi} \cdot c\sqrt{nk} \cdot \sqrt{\sum_{u,i}\sigma_{u,i}^2 + \frac{k^2}{4}\sum_{uv}\gamma_{u,v}^2} = \frac{2}{\psi} \cdot c\sqrt{nk} \cdot \sqrt{\rho nk\sigma^2 + \eta \frac{k^2}{4} \frac{nd}{2} \gamma^2}
    = \frac{2cnk\sqrt{\rho \sigma^2 + \frac{\eta dk}{8} \gamma^2}}{\psi} \]
since $\abs{V} = n, \abs{L} = k, \text{ and } \abs{E} = \frac{nd}{2}$.

\end{proof}

\section{Algorithm for finding nearby stable instances details }
\label{sec:algorithm_details}
Let $\bx$ be a MAP solution, and let $\calE^{\bx}$ be the set of expansions of $\bx$.
We prove that an instance is $(2,1,\psi)$-expansion stable if and only if $\dot{\theta_{adv}}{\bx} \le \dot{\theta_{adv}}{x}$ for all $x \in \calE^{\bx}$. In other words, it is sufficient to check for stability in the \emph{adversarial} perturbation for $\bx$. 
This proves that we need not check every possible perturbation when finding a $(2,1,\psi)$-expansion stable instance.
\begin{claim}\label{claim:adv-enough}
Let $(G,c,w)$ be an instance of uniform metric labeling with MAP solution $\bx$. Define:
\begin{equation*}
    w_{adv}(u,v) = \begin{cases}
    \frac{1}{2}w_{uv} & \bx(u) = \bx(v)\\
    w(u,v) & \bx(u) \ne \bx(v)
    \end{cases}
\end{equation*}
\begin{equation*}
    c_{adv}(u,i) = \begin{cases}
        c(u,i) + \psi & \bx(u) = i\\
        c(u,i) & \text{otherwise.}
    \end{cases}
\end{equation*}
Let $\theta_{adv}$ be the objective vector in the instance $(G,c_{adv},w_{adv})$. Then
\[
\dot{\theta'}{\bx} \le \dot{\theta'}{x}
\]
for all $(2,1,\psi)$-perturbations $\theta'$ of $\theta$ and all $x\in \calE^{\bx}$ if and only if:
\[
\dot{\theta_{adv}}{\bx} \le \dot{\theta_{adv}}{x}
\]
for all $x\in \calE^{\bx}$.
\end{claim}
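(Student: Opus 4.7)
The forward direction is immediate: $\theta_{adv}$ is itself a $(2,1,\psi)$-perturbation of $\theta$, since $w_{adv}(u,v) \in [w(u,v)/2,\,w(u,v)]$ and $c_{adv}(u,i) \in [c(u,i),\,c(u,i)+\psi]$. So if the inequality holds for every $(2,1,\psi)$-perturbation, it holds for $\theta_{adv}$ in particular.

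The substantive direction is the converse. The plan is to show that $\theta_{adv}$ is, as the name suggests, the worst perturbation for $\bx$ against any expansion; concretely, that for every $(2,1,\psi)$-perturbation $\theta' = (c',w')$ and every $x \in \calE^{\bx}$,
\begin{equation}\label{eq:planadv}
\dot{\theta'}{x} - \dot{\theta'}{\bx} \;\ge\; \dot{\theta_{adv}}{x} - \dot{\theta_{adv}}{\bx}.
\end{equation}
Once \eqref{eq:planadv} is established, the converse follows immediately: if the right-hand side is nonnegative, so is the left.

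To prove \eqref{eq:planadv}, I would mimic the edge-by-edge, node-by-node bookkeeping already used in Claim~\ref{exp_worst}, now with the additional node-cost slack $\psi$. Split the difference $\dot{\theta'}{x}-\dot{\theta'}{\bx}$ into node terms and edge terms. For the edge part, I reuse exactly the argument of Claim~\ref{exp_worst}: edges in $E_x \setminus E_{\bx}$ satisfy $\bx(u)=\bx(v)$, so $w_{adv}(u,v) = w(u,v)/2 \le w'(u,v)$; edges in $E_{\bx}\setminus E_x$ satisfy $\bx(u)\ne \bx(v)$, so $w_{adv}(u,v) = w(u,v) \ge w'(u,v)$. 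Both inequalities push the edge contribution to $\dot{\theta'}{x}-\dot{\theta'}{\bx}$ upward relative to the $\theta_{adv}$ difference. For the node part, use $c \le c' \le c + \psi\mathbf{1}$: for any $u$ with $x(u) \ne \bx(u)$ we have $c'(u,x(u)) \ge c(u,x(u)) = c_{adv}(u,x(u))$ and $c'(u,\bx(u)) \le c(u,\bx(u)) + \psi = c_{adv}(u,\bx(u))$, so the $u$-contribution to the difference in $\theta'$ is at least the corresponding contribution in $\theta_{adv}$; for $u$ with $x(u) = \bx(u)$ the contribution is zero on both sides. Summing the edge and node bounds yields \eqref{eq:planadv}.

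The routine obstacle is just carefully tracking which set each edge falls into (the three partitions $E_x \cap E_{\bx}$, $E_x\setminus E_{\bx}$, $E_{\bx}\setminus E_x$) so that the inequalities on $w'$ point in the right direction; since $x$ is an expansion of $\bx$, any edge cut by $\bx$ but not by $x$ must have both endpoints relabeled to the expansion label, and any edge cut by $x$ but not by $\bx$ must have exactly one endpoint relabeled, which locks down the required cases cleanly. No harder step is expected.
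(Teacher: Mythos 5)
Your proposal is correct and follows essentially the same route as the paper: one direction because $\theta_{adv}$ is itself a valid $(2,1,\psi)$-perturbation, and the other via the dominance inequality $\dot{\theta'}{x}-\dot{\theta'}{\bx}\ge \dot{\theta_{adv}}{x}-\dot{\theta_{adv}}{\bx}$, proved by the same node/edge bookkeeping the paper inherits from Claim~\ref{exp_worst} (your version just spells out the node-cost slack $\psi$ explicitly, and states the converse directly rather than by contraposition).
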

\begin{proof}
    The proof is analogous to that of Claim \ref{exp_worst}. If the instance is $(2,1,\psi)$-expansion stable, then $\dot{\theta'}{x} - \dot{\theta'}{\bx} \ge 0$ for all $(2,1,\psi)$-perturbations $\theta'$ and all expansions $x$ of $\bx$. Because $\theta_{adv}$ is a valid $(2,1,\psi)$-perturbation, this gives one direction. For the other, note that if the instance is not $(2,1,\psi)$-expansion stable, there exists a $\theta'$ and an $x\in\calE^{\bx}$ for which $\dot{\theta'}{x} - \dot{\theta'}{\bx} < 0$. A direct computation shows that $\dot{\theta'}{x} - \dot{\theta'}{\bx} \ge \dot{\theta_{adv}}{x} - \dot{\theta_{adv}}{\bx}$ for all $(2,1,\psi)$-perturbations $\theta'$ of $\theta$. Then we have $\dot{\theta_{adv}}{x} - \dot{\theta_{adv}}{\bx} < 0$.
\end{proof}
This claim justifies \eqref{eqn:alg}, which only enforces that $\bx$ is at least as good as all of its expansions in $\theta_{adv}$. The following claim implies that there is always a feasible point of \eqref{eqn:alg} that makes modifications of bounded size to $c$ and $w$.
\begin{claim}
Consider an instance $(G,c,w)$ with a unique MAP solution $\bx$. Let $w'$ be defined as
\begin{equation*}
    w'(u,v) = \begin{cases}
    w(u,v) & \bx(u) \ne \bx(v)\\
    2w(u,v) & \bx(u) = \bx(v),
    \end{cases}
\end{equation*}
and let $c'$ be defined as
\begin{equation*}
c'(u,i) = \begin{cases}c(u,i) - \psi & \bx(u) = i\\
c(u,i) & \bx(u) \ne i.\end{cases}
\end{equation*}
Then the instance $(G,c',w')$ is $(2,1,\psi)$-expansion stable with MAP solution $\bx$.
\end{claim}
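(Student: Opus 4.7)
The plan is to observe that the modifications $(c,w) \mapsto (c',w')$ described in the claim are exactly the \emph{inverse} of the adversarial $(2,1,\psi)$-perturbation of $(c',w')$ with respect to $\bar{x}$. That is, if we apply the adversarial perturbation (in the sense of Claim~\ref{claim:adv-enough}) to the new instance $(G,c',w')$ using the target solution $\bar{x}$, we recover exactly the original instance $(G,c,w)$. Since $\bar{x}$ is the unique MAP of the original instance, this will let us conclude $(2,1,\psi)$-expansion stability via Claim~\ref{claim:adv-enough}.

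First I would verify that $\bar{x}$ is indeed the unique MAP solution of $(G,c',w')$. For any labeling $x \neq \bar{x}$, a direct computation shows
\[
\langle \theta', x\rangle - \langle \theta', \bar{x}\rangle = \bigl(\langle \theta, x\rangle - \langle \theta, \bar{x}\rangle\bigr) + \psi \cdot \big|\{u : x(u) \ne \bar{x}(u)\}\big| + \sum_{(u,v) \in E_x \setminus E_{\bar{x}}} w(u,v),
\]
where $E_x$ denotes the set of edges cut by $x$. The first term is strictly positive by uniqueness of the MAP of $(G,c,w)$, and the other two terms are nonnegative, so $\bar{x}$ is the unique minimizer of $\langle \theta',\cdot\rangle$.

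Next I would compute the adversarial $(2,1,\psi)$-perturbation $\theta'_{adv} = (c'_{adv}, w'_{adv})$ of $(c',w')$ (as defined just before Claim~\ref{claim:adv-enough}) with respect to $\bar{x}$. For the node costs: when $\bar{x}(u) = i$ we have $c'_{adv}(u,i) = c'(u,i) + \psi = (c(u,i) - \psi) + \psi = c(u,i)$, and otherwise $c'_{adv}(u,i) = c'(u,i) = c(u,i)$. For the edge weights: if $\bar{x}(u) = \bar{x}(v)$ then $w'_{adv}(u,v) = \tfrac{1}{2} w'(u,v) = \tfrac{1}{2}(2w(u,v)) = w(u,v)$, and otherwise $w'_{adv}(u,v) = w'(u,v) = w(u,v)$. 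Hence $\theta'_{adv} = \theta$.

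Finally, since every expansion $x \in \calE^{\bar{x}}$ satisfies $x \ne \bar{x}$, uniqueness of $\bar{x}$ as the MAP of $(G,c,w)$ gives $\langle \theta, \bar{x}\rangle < \langle \theta, x\rangle = \langle \theta'_{adv}, x\rangle$ for all $x \in \calE^{\bar{x}}$. Applying Claim~\ref{claim:adv-enough} to $(G,c',w')$ with target solution $\bar{x}$ then yields $(2,1,\psi)$-expansion stability. No step poses a real obstacle here; the only thing to be careful about is verifying that $\bar{x}$ remains the MAP of $(G,c',w')$ before invoking Claim~\ref{claim:adv-enough}, since that claim presumes $\bar{x}$ is a MAP of the instance whose stability we are checking.
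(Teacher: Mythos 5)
Your proposal is correct and follows essentially the same route as the paper's own (sketched) proof: observe that the adversarial $(2,1,\psi)$-perturbation of $(G,c',w')$ with respect to $\bar{x}$ recovers exactly $(G,c,w)$, so uniqueness of $\bar{x}$ there gives the expansion-stability condition via Claim~\ref{claim:adv-enough}. Your explicit computation that $\bar{x}$ remains the unique MAP solution of $(G,c',w')$ is a detail the paper only asserts, and it is a worthwhile addition.
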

\begin{proof}[Proof (sketch)]
The original MAP solution $\bx$ is also the MAP solution to $(G,c',w')$. Then the original instance $(G,c,w)$ is obtained from $(G,c',w')$ by performing the adversarial $(2,1,\psi)$-perturbation for $\bx$ (see Claim \ref{claim:adv-enough}). Because $\bx$ was the unique MAP solution to this instance, it has better objective than all of its expansions. Therefore, $(G,c',w')$ is $(2,1,\psi)$-expansion stable, by Claim \ref{claim:adv-enough}.
\end{proof}
$(G,c',w')$ is a ``nearby'' stable instance to $(G,c,w)$, but it requires changes to quite a few edges---every edge that is not cut by $\bx$---and changes the node costs of every vertex. Surprisingly, the stable instances we found in Section \ref{sec:experiments} were much closer than $(G,c',w')$---that is, only sparse changes were required to transform the observed instance $(G,c,w)$ to a $(2,1,\psi)$-expansion stable instance.

\section{Experiment details}\label{sec:experiments_details}
In this section, we give more details for the numerical examples for which we evaluate our curvature bound from Theorem \ref{thm:curvature}. We studied instances for stereo vision, where the input is two images taken from slightly offset locations, and the desired output is the disparity of each pixel location between the two images (this disparity is inversely proportional to the depth of that pixel). We used the models from \citet{tappen2003comparison} on three images from the Middlebury stereo dataset \citep{scharstein2002taxonomy}. In this model, $G$ is a grid graph with one node corresponding to each pixel in one of the images (say, the one taken from the left), the costs $c(u,i)$ are set using the Birchfield-Tomasi matching costs \citep{birchfield1998pixel}, and the edge weights $w(u,v)$ are set as:
\[
w(u,v) = \begin{cases}
P \times s & |I(u) - I(v)| < T\\
s & \text{otherwise}.
\end{cases}
\]
Here $I(u)$ is the intensity of one of the images (again, say the left image) at pixel location $u$, and we set $(P,T,s) = (2,50,4)$. This is a Potts model. The {\tt tsukuba}, {\tt cones}, and {\tt venus} images were {\tt 120 x 150}, {\tt 125 x 150}, and {\tt 383 x 434}, respectively. These models had $k=7$, $k=5$, and $k=5$, respectively.

To generate Table \ref{tbl:boundtable}, we ran the algorithm in \eqref{eqn:alg} using Gurobi \citep{gurobi} for the L1 distance. For each observed instance $(G,\hc,\hw)$, this output a nearby $(2,1,\psi)$-stable instance $(G,\bc,\bw)$. In all of our experiments, we used $\psi=1$. Additionally, we always set the target MAP solution $x^t$ in \eqref{eqn:alg} to be equal to the observed MAP solution $\hx^{MAP}$. To evaluate our recovery bound, we compared the objective of the observed LP solution $\hat{x}$ to the $\hx^{MAP}$ in $(G,\bc,\bw)$. That is, if $\bar{\theta}$ is the objective for $(G,\bc,\bw)$, we computed $\dot{\bar{\theta}}{\hx} - \dot{\bar{\theta}}{\hx^{MAP}} = \dot{\bar{\theta}}{\hx} - \dot{\bar{\theta}}{\bx}$, where the second equality is because we set the target solution $x^t$ to be equal to $\hx^{MAP}$, so $\hx^{MAP} = \bx$. Because $\psi=1$, the difference between these two objectives is precisely the value of our curvature bound. In particular, Theorem \ref{thm:curvature} guarantees that \[
\frac{1}{2n}||\hx_V - \hx^{MAP}_V||_1 \le \frac{1}{n}\left(\dot{\bar{\theta}}{\hx} - \dot{\bar{\theta}}{\hx^{MAP}}\right).
\]
The right-hand-side is shown for these instances in the ``Recovery error bound'' column of Table \ref{tbl:boundtable}, and the true value of $\frac{1}{2n}||\hx_V - \hx^{MAP}_V||_1$ (i.e., the true recovery error) is shown in the identically titled column of Table \ref{tbl:boundtable}.
On these instances, $\frac{1}{n}\left(\dot{\bar{\theta}}{\hx} - \dot{\bar{\theta}}{\hx^{MAP}}\right)$ is close to 0, so our curvature bound ``explains'' a large portion of $\hx$'s recovery of $\hx^{MAP}$. These instances are close to $(2,1,\psi)$-stable instances where $\hx$ and $\hx^{MAP}$ have close objective, and this implies by Theorem \ref{thm:curvature} that $\hx$ approximately recovers $\hx^{MAP}$. 

However, this result relies on a property of the LP solution $\hx$: that it has good objective in the stable instance discovered by the procedure \eqref{eqn:alg}. Compare this to Corollary \ref{cor:deviation}, which only depends on properties of the observed instance $\hat{\theta}$ and the stable instance $\bar{\theta}$ (in particular, some notion of ``distance'' between them). Given an observed instance $\hat{\theta}$ and stable instance $\bar{\theta}$, we can try to compute $d(\bar{\theta}, \hat{\theta})$ from Corollary \ref{cor:deviation} to give a bound that does not depend on $\hat{x}$. Unfortunately, this distance can be large, leading to a bound that can be vacuous (i.e., normalized Hamming recovery $> 1$). The following refinement of Corollary \ref{cor:deviation} gives much tighter bounds.

\begin{restatable}[LP solution is good if there is a nearby stable instance, refined]{corollary}{deviation2}\label{cor:deviation2}
Let $\hx^{MAP}$ and $\hx$ be the MAP and local LP solutions to an observed instance $\obsins$. Also, let $\bar{x}$ be the MAP solution for a latent $(2,1,\psi)$-expansion stable instance $\stabins$. If $\hat{\theta} = (\hat{c},\hat{w})$ and $\bar{\theta} = (\bar{c}, \bar{w})$, define
\[
d(\bar{\theta}, \hat{\theta}) \coloneqq \sup_{x \in L^*(G)} \dot{\bar{\theta}}{x}  - \dot{\bar{\theta}}{\bx} - (\dot{\hat{\theta}}{x}  - \dot{\hat{\theta}}{\bx}).
\]
Note that while we still use the name $d(\cdot, \cdot)$, evoking a metric, $d$ is not symmetric. Then
\[ \frac{1}{2}\norm{\hx_V - \hx^{MAP}_V}_1 \le \frac{d(\bar{\theta}, \hat{\theta})}{\psi} + \frac{1}{2}\norm{\hx_V^{MAP} - \bx_V}_1. \]
\end{restatable}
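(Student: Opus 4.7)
The plan is to mirror the structure of the proof of Corollary \ref{cor:deviation}, but to exploit the asymmetric definition of $d(\bar{\theta},\hat{\theta})$ in order to avoid the factor-of-two slack that comes from using a symmetric metric. The three ingredients will be: the triangle inequality applied to $\|\hat{x}_V - \hat{x}^{MAP}_V\|_1$; the curvature bound of Theorem \ref{thm:curvature} applied in the stable instance around $\bar{x}$; and the LP optimality of $\hat{x}$ in the observed instance.

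First I would split the quantity of interest using the triangle inequality:
\begin{equation*}
\tfrac{1}{2}\|\hat{x}_V - \hat{x}^{MAP}_V\|_1 \le \tfrac{1}{2}\|\hat{x}_V - \bar{x}_V\|_1 + \tfrac{1}{2}\|\bar{x}_V - \hat{x}^{MAP}_V\|_1.
\end{equation*}
The second term already matches the second term in the claimed bound, so the entire task reduces to showing that $\tfrac{1}{2}\|\hat{x}_V - \bar{x}_V\|_1 \le d(\bar{\theta},\hat{\theta})/\psi$. Since $\hat{x}$ minimizes $\dot{\hat{\theta}}{\cdot}$ over $L(G)$, by Claim \ref{claim:Lstar} it also lies in $L^*(G)$, so Theorem \ref{thm:curvature} applied in the stable instance $(G,\bar{c},\bar{w})$ yields $\tfrac{1}{2}\|\hat{x}_V - \bar{x}_V\|_1 \le \tfrac{1}{\psi}(\dot{\bar{\theta}}{\hat{x}} - \dot{\bar{\theta}}{\bar{x}})$ (the absolute value can be dropped because $\bar{x}$ is the MAP solution of $\bar{\theta}$, so the gap is nonnegative).

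The main step, and the only one that is different from Corollary \ref{cor:deviation}, is to bound $\dot{\bar{\theta}}{\hat{x}} - \dot{\bar{\theta}}{\bar{x}}$ by $d(\bar{\theta},\hat{\theta})$ rather than by $2d(\hat{\theta},\bar{\theta})$. The trick is to add and subtract $\dot{\hat{\theta}}{\hat{x}} - \dot{\hat{\theta}}{\bar{x}}$:
\begin{align*}
\dot{\bar{\theta}}{\hat{x}} - \dot{\bar{\theta}}{\bar{x}}
&= \bigl[(\dot{\bar{\theta}}{\hat{x}} - \dot{\bar{\theta}}{\bar{x}}) - (\dot{\hat{\theta}}{\hat{x}} - \dot{\hat{\theta}}{\bar{x}})\bigr] + \bigl[\dot{\hat{\theta}}{\hat{x}} - \dot{\hat{\theta}}{\bar{x}}\bigr].
\end{align*}
The second bracket is $\le 0$ because $\hat{x}$ is the optimizer of $\dot{\hat{\theta}}{\cdot}$ on $L(G)$, and the first bracket is at most $d(\bar{\theta},\hat{\theta})$ by the definition of $d$ applied at the feasible point $\hat{x} \in L^*(G)$. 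Combining these yields $\dot{\bar{\theta}}{\hat{x}} - \dot{\bar{\theta}}{\bar{x}} \le d(\bar{\theta},\hat{\theta})$, which together with the curvature bound and the triangle inequality gives the claim.

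The potential obstacle is a conceptual one rather than a technical one: making sure that the asymmetric ``gap function'' $d(\bar{\theta},\hat{\theta})$ is indeed nonnegative and can be used in Theorem \ref{thm:curvature}. Nonnegativity follows because evaluating the supremum at $x=\bar{x}$ gives zero, and the use in Theorem \ref{thm:curvature} requires only that the objective gap $\dot{\bar{\theta}}{\hat{x}} - \dot{\bar{\theta}}{\bar{x}}$ is nonnegative and that $\hat{x} \in L^*(G)$, both of which hold. No other machinery is needed; the refinement from Corollary \ref{cor:deviation} to Corollary \ref{cor:deviation2} is essentially the observation that the LP optimality of $\hat{x}$ in $\hat{\theta}$ lets us discard the ``wrong-sign'' contribution and replace the symmetric metric by a tighter one-sided quantity.
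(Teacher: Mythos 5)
Your proposal is correct and follows essentially the same route as the paper's proof: triangle inequality to split off $\tfrac12\|\hx^{MAP}_V-\bx_V\|_1$, the defining inequality of $d(\bar\theta,\hat\theta)$ evaluated at $x=\hx$ combined with LP optimality of $\hx$ under $\hat\theta$ to get $\dot{\bar\theta}{\hx}-\dot{\bar\theta}{\bx}\le d(\bar\theta,\hat\theta)$, and then Theorem \ref{thm:curvature} in the stable instance. The only cosmetic difference is that you write the key step as an add-and-subtract decomposition rather than rearranging the definition of $d$, which is the same computation.
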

\begin{proof}
Note:
\begin{align*}
    \frac{1}{2}\norm{\hx_V - \hx_V^{MAP}}_1 &\leq \frac{1}{2}\norm{\hx_V - \bx_V}_1 + \frac{1}{2}\norm{\hx_V^{MAP} - \bx_V}_1.
\end{align*}
By the definition of $d$, for any $x \in L^*(G)$, 
\[\dot{\bar{\theta}}{x}  - \dot{\bar{\theta}}{\bx} \le d(\bar{\theta},\hat{\theta}) + (\dot{\hat{\theta}}{x}  - \dot{\hat{\theta}}{\bx}). \]
Now if we set $x = \hx$, the LP solution to the observed instance, we have $\dot{\hat{\theta}}{\hx}  - \dot{\hat{\theta}}{\bx}) \le 0$, so 
\[
\dot{\bar{\theta}}{\hx}  - \dot{\bar{\theta}}{\bx} \le d(\bar{\theta},\hat{\theta}).
\]
Theorem \ref{thm:curvature} then implies $\frac{1}{2}\norm{\hx_V - \bx_V}_1 \le d(\bar{\theta}, \hat{\theta})/\psi$, which gives the claim.
\end{proof}
Given an observed instance $\hat{\theta}$ and a $(2,1,\psi)$-expansion stable instance $\bar{\theta}$ output by \eqref{eqn:alg} with $\bar{x}^{MAP}$ = $\hat{x}^{MAP}$, we can upper bound $d(\bar{\theta}, \hat{\theta})$ by computing 
\[
d_{up}(\bar{\theta}, \hat{\theta}) \coloneqq \sup_{x \in L(G)} \dot{\bar{\theta}}{x}  - \dot{\bar{\theta}}{\bx} - (\dot{\hat{\theta}}{x}  - \dot{\hat{\theta}}{\bx}),
\]
which is a linear program in $x$ because we relaxed $L^*(G)$ to $L(G)$. Corollary \ref{cor:deviation2} then implies that the recovery error of $\hat{x}$ is at most $d_{up}(\bar{\theta},\hat{\theta}) / \psi$, which we can compute. Table \ref{tbl:boundtable2} shows the results of this procedure on two of the same instances from Table \ref{tbl:boundtable} in the ``Unconditional bound'' column. While the values of this bound are much larger than the ``Curvature bound'' of Theorem 5.2, they are much more theoretically appealing, since they only depend on the difference between $\hat{\theta}$ and $\bar{\theta}$ rather than on a property of the LP solution $\hat{x}$ to $\hat{\theta}$. For Table \ref{tbl:boundtable2}, we did a grid search for $\psi$ over $\{1,\ldots, 10\}$; $\psi=4$ gave the optimal unconditional bound for both instances. The difference in $\psi$ explains the  slight differences between the other columns of Tables \ref{tbl:boundtable} and \ref{tbl:boundtable2}.

\begin{table*}[ht]
     \centering
     \caption{Results from the output of \eqref{eqn:alg} on two stereo vision instances. Curvature bound shows the bound obtained from Theorem \ref{thm:curvature}, which depends on the observed LP solution $\hat{x}$. Unconditional bound shows the bound from the refined version of Corollary \ref{cor:deviation}, which depends \emph{only} on the observed instance and the stable instance. This ``unconditional'' bound explains a reasonably large fraction of the LP solution's recovery for these instances: because the instance is close to a stable instance, the LP solution approximate recovers the MAP solution.}
     \begin{tabular}{lccccc}
          Instance & Costs changed & Weights changed & Curvature bound & Unconditional bound & $||\hat{x}_V - \hat{x}^{MAP}_V||_1/2n$ \\
          \toprule
          ${\tt tsukuba}$ & 4.9\% & 2.8\% & 0.0173 & 0.4878 & 0.0027\\
          ${\tt cones}$ & 2.81\% & 2.31\% & 0.0137 & 0.2819 & 0.0022\\
          \bottomrule
     \end{tabular}
     \label{tbl:boundtable2}
 \end{table*}

\clearpage

\end{document}